\DeclarePairedDelimiter{\norm}{\lVert}{\rVert}
\theoremstyle{definition}
\newtheorem{theorem}{Theorem}
\newtheorem{assumption}{Assumption}
\newtheorem{proposition}{Proposition}
\newtheorem{lemma}{Lemma}
\newtheorem{remark}{Remark}
\title{
	Hybrid Feedback Control Design for Non-Convex Obstacle Avoidance 
} 
\author{Mayur Sawant, Ilia Polushin, and Abdelhamid Tayebi 
	\thanks{This work was supported by the National Sciences and Engineering Research Council of Canada (NSERC), under the grants RGPIN-2020-06270 and RGPIN-2020-0644.}
	\thanks{M. Sawant, I. Polushin and A. Tayebi are with the Department of Electrical and Computer Engineering, Western University, London, ON N6A 3K7, Canada. (e-mail: {\tt\small msawant2, ipolushi, atayebi@uwo.ca}).  A. Tayebi is also with the Department of Electrical Engineering, Lakehead University, Thunder Bay, ON P7B 5E1, Canada. (e-mail: {\tt\small atayebi@lakeheadu.ca}).}%
}
\begin{document}

\maketitle
\thispagestyle{empty}

\begin{abstract}
    We develop an autonomous navigation algorithm for a robot operating in two-dimensional environments containing obstacles, with arbitrary non-convex shapes, which can be in close proximity with each other, as long as there exists at least one safe path connecting the initial and the target location. An instrumental transformation that modifies (virtually) the non-convex obstacles, in a non-conservative manner, is introduced to facilitate the design of the \textit{obstacle-avoidance} strategy. The proposed navigation approach relies on a hybrid feedback that guarantees global asymptotic stabilization of a target location while ensuring the forward invariance of the modified obstacle-free workspace. The proposed hybrid feedback controller guarantees Zeno-free switching between the \textit{move-to-target} mode and the \textit{obstacle-avoidance} mode based on the proximity of the robot with respect to the modified obstacle-occupied workspace. Finally, we provide an algorithmic procedure for the sensor-based implementation of the proposed hybrid controller and validate its effectiveness via some numerical simulations.
\end{abstract}

\section{Introduction} 
Autonomous navigation, the problem of designing control strategies to guide a robot to its goal while avoiding obstacles, is one of the fundamental problems in robotics. One of the widely explored techniques in that regard is the artificial potential fields (APF) \cite{khatib1986real} in which interplay between an attractive field and a repulsive field allows the robot, in most of the cases, to safely navigate towards the target. However, for certain inter-obstacle arrangements, this approach is hampered by the existence of undesired local minima. The navigation function (NF) based approach \cite{koditschek1990robot}, which is directly applicable to sphere world environments \cite{koditschek1992exact,verginis2021adaptive}, mitigates this issue by restricting the influence of the repulsive field within a local neighbourhood of the obstacle by means of a properly tuned parameter, and ensures almost\footnote{Almost global convergence here refers to the convergence from all initial conditions except a set of zero Lebesgue measure.} global convergence of the robot towards the target location. 
To extend the applicability of the NF approach to environments containing more general convex and star-shaped obstacles, one can employ the diffeomorphic mappings developed in \cite{koditschek1992exact} and \cite{li2018navigation}. However, the application of these diffeomorphic mappings requires a global knowledge of the environment.

The NF-based approach was extended in \cite{paternain2017navigation} to environments containing curved obstacles. The authors established sufficient conditions on the eccentricity of the obstacles to guarantee almost global convergence to a neighborhood surrounding the \textit{a priori} unknown target location. This approach is applicable to convex obstacles with smooth and sufficiently curved boundaries.
In \cite{loizou2011navigation}, the authors presented a methodology for the design of a harmonic potential-based NF that ensures almost global convergence to the target location in \textit{a priori} known environments that are diffeomorphic to the point world. This work was subsequently extended in \cite{loizou2021correct} to unknown environments using a sensor-based approach. However, similar to the approach in \cite{paternain2017navigation}, it is assumed that the shapes of the obstacles become known when the robot visits their respective neighborhoods.


In \cite{berkane2021navigation}, the authors proposed a feedback controller  based on Nagumo's theorem \cite[Theorem 4.7]{blanchini2008set}, for autonomous navigation in environments with general convex obstacles. The forward invariance of the obstacle-free space is ensured by projecting the \textit{ideal} velocity control vector (pointing towards the target)  onto the tangent cone at the boundary of the obstacle whenever it points towards the obstacle. In \cite{reis2020control}, a control barrier function-based approach was used for robot navigation in an environment with a single spherical obstacle. It was shown that this approach does not guarantee global convergence to the target location due to the existence of an undesired equilibrium.

In \cite{arslan2019sensor}, a reactive power diagram-based approach was introduced for robots navigating in \textit{a priori} unknown environments. This approach guarantees almost global asymptotic stabilization of the target location, provided that the obstacles are sufficiently separated and strongly convex.
This approach was further extended in \cite{vasilopoulos2018reactive} to handle partially known non-convex environments, where it is assumed that the robot possesses geometrical information about the non-convex obstacles but lacks knowledge of their precise locations within the workspace. 
However, due to the topological obstruction induced by the motion space in the presence of obstacles for any continuous time-invariant vector fields \cite{koditschek1990robot}, the above-mentioned approaches provide at best almost global convergence guarantees.

In \cite{vrohidis2018prescribed}, a time-varying vector field planner was proposed for the navigation of a single robot in a sphere world. This planner leverages prescribed performance control techniques to achieve predetermined convergence to a neighborhood of the target location from any initial position, while ensuring obstacle avoidance. 
In \cite{sanfelice2006robust}, hybrid control techniques were employed to achieve robust global regulation to a target while avoiding a single spherical obstacle. This approach was extended in \cite{poveda2021robust} to the problem of steering a group of planar robots to a neighborhood of an unknown source (emitting a signal with measurable intensity), while avoiding a single obstacle.
In \cite{braun2020explicit}, the authors proposed a hybrid control law to globally asymptotically stabilize a class of linear systems while avoiding neighbourhoods of unsafe points.
In \cite{matveev2011method} and \cite{berkane2021obstacle}, hybrid control techniques were employed to enable the robot to operate in the \textit{obstacle-avoidance} mode when in close proximity to an obstacle or in the \textit{move-to-target} mode when located away from obstacles. These strategies bear resemblance to bug algorithms \cite{lumelsky1986dynamic}, which are commonly used for point robot path planning.
In \cite[Definition 2]{berkane2021obstacle}, the proposed hybrid controller is applicable for known $n-$dimensional environments with sufficiently disjoint elliptical obstacles. On the other hand, in \cite[Assumption 10]{matveev2011method}, the obstacles are assumed to be smooth and sufficiently separated from each other. In \cite{loizou2003closed}, the authors proposed a discontinuous feedback control law for autonomous robot navigation in partially known two-dimensional environments. When a known obstacle is encountered, the control vector aligns with the negative gradient of the Navigation Function (NF). However, when close to an unknown obstacle, the robot moves along its boundary, relying on the local curvature information of the obstacle. This method is limited to point robots and, similar to \cite{matveev2011method}, assumes smooth obstacle boundaries without sharp edges. In our earlier work \cite{sawant2021hybrid}, we proposed a hybrid feedback controller design to address the problem of autonomous robot navigation in planar environments with arbitrarily shaped convex obstacles.

In the present work, which has been initiated in our preliminary conference paper \cite{cdc_sawant}, we consider the autonomous robot navigation problem in a two-dimensional space with arbitrarily-shaped non-convex obstacles which can be in close proximity with each other. Unlike \cite{arslan2019sensor}, \cite{berkane2021obstacle} and \cite{sawant2021hybrid}, wherein the robot is allowed to pass between any pair of obstacles, we require the existence of a safe path joining the initial and the target location, as stated in Assumptions \ref{assumption:connected_interior} and \ref{Assumption:reach}. The main contributions of the present paper are as follows:
\begin{enumerate}
\item[1)]\textit{Asymptotic stability:} the proposed autonomous navigation solution ensures asymptotic stability of the target location for the robot operating in planar environments with arbitrary non-convex obstacles.
\item[2)]\textit{Arbitrarily-shaped obstacles:} there are no restrictions on the shape of the non-convex obstacles such as those mentioned in \cite{paternain2017navigation}, \cite{arslan2019sensor}, \cite{berkane2021obstacle}, and their proximity with respect to each other \textit{e.g.}, see \cite[Assumption 10]{matveev2011method}, \cite[Definition 2]{berkane2021obstacle}, except for the mild feasibility Assumptions \ref{assumption:connected_interior} and \ref{Assumption:reach}.
\item[3)]\textit{Applicable in \textit{a priori} unknown environments:} the proposed obstacle avoidance approach can be implemented using only range scanners (\textit{e.g.}, LiDAR) without an \textit{a priori} global knowledge of the environment which satisfies Assumptions \ref{assumption:connected_interior} and \ref{Assumption:reach}.
\end{enumerate}


The remainder of the paper is organized as follows. In Section \ref{notation_prelim}, we provide the notations and some preliminaries that will be used throughout the paper. The problem is formulated in Section \ref{Section:problem_formulation}. The design of the obstacle reshaping operator is provided in Section \ref{section:obstacle_reshaping}, and the proposed hybrid control algorithm is presented in Section \ref{sec:the_proposed_contrroller}. The stability and safety guarantees of the proposed navigation control scheme are provided in Section \ref{sec:stabilitY_analysis}. A sensor-based implementation of the proposed obstacle avoidance algorithm, using 2D range scanners (LiDAR), is given in Section \ref{section:sensor-based_implementation}. Simulation results are presented in Section \ref{sec:simulation_results}, and some final concluding remarks are given in Section \ref{sec:conclusion}.

\section{Notations and Preliminaries}\label{notation_prelim}
\subsection{Notations}
The sets of real and natural numbers are denoted by $\mathbb{R}$ and $\mathbb{N}$, respectively. We identify vectors using bold lowercase letters. The Euclidean norm of a vector $\mathbf{p}\in\mathbb{R}^n$ is denoted by $\norm{\mathbf{p}}$, and an Euclidean ball of radius $r>0$ centered at $\mathbf{p}$ is represented by $\mathcal{B}_{r}(\mathbf{p}) = \{\mathbf{q}\in\mathbb{R}^n|\norm{\mathbf{q} - \mathbf{p}} \leq r\}.$ Given two vectors $\mathbf{p}\in\mathbb{R}^2$ and $\mathbf{q}\in\mathbb{R}^2$, we denote by $\measuredangle(\mathbf{p}, \mathbf{q})$ the angle from $\mathbf{p}$ to $\mathbf{q}$. The angle measured in counter-clockwise manner is considered positive, and vice versa. Given two locations $\mathbf{p}, \mathbf{q}\in\mathbb{R}^n$, the notation $\mathbf{path}(\mathbf{p}, \mathbf{q})$ represents a continuous path in $\mathbb{R}^n$, which joins these locations. A set $\mathcal{A}\subset\mathbb{R}^n$ is said to be pathwise connected if for any two points $\mathbf{p},\mathbf{q}\in\mathcal{A}$, there exists a continuous path, joining $\mathbf{p}$ and $\mathbf{q}$, that belongs to the same set \textit{i.e.}, there exists a $\mathbf{path}(\mathbf{p}, \mathbf{q})\subset\mathcal{A}$ \cite[Definition 27.1]{willard2012general}. For two sets $\mathcal{A}, \mathcal{B}\subset\mathbb{R}^n$, the relative complement of $\mathcal{B}$ with respect to $\mathcal{A}$ is denoted by $\mathcal{A}\setminus\mathcal{B} =\{\mathbf{a}\in\mathcal{A}|\mathbf{a}\notin \mathcal{B}\}$. Given a set $\mathcal{A}\subset\mathbb{R}^n$, the symbols $\partial\mathcal{A}, \mathcal{A}^{\circ}$, $\mathcal{A}^c$ and $\bar{\mathcal{A}}$ represent the boundary, interior, complement and the closure of the set $\mathcal{A}$, respectively, where $\partial\mathcal{A} = \bar{\mathcal{A}}\backslash\mathcal{A}^{\circ}$. The number of elements in a set $\mathcal{A}$ is given by $\mathbf{card}(\mathcal{A})$. 
Let $\mathcal{A}$ and $\mathcal{B}$ be subsets of $\mathbb{R}^n$, then the dilation of $\mathcal{A}$ by $\mathcal{B}$ is denoted by $\mathcal{A}\oplus\mathcal{B}= \{ \mathbf{a} + \mathbf{b}|\mathbf{a}\in\mathcal{A}, \mathbf{b}\in\mathcal{B}\},$ and the erosion of $\mathcal{A}$ by $\mathcal{B}$ is denoted by $\mathcal{A}\ominus\mathcal{B} = \{\mathbf{x}\in\mathbb{R}^n|\mathbf{x} + \mathbf{b}\in\mathcal{A}, \forall\mathbf{b}\in\mathcal{B}\}$ \cite{haralick1987image}. Additionally, the set $\mathcal{B}$ is referred to as a structuring element.
{Given a positive scalar $r> 0$, the $r-$dilated version of the set $\mathcal{A}\subset\mathbb{R}^n$ is denoted by $\mathcal{D}_{r}(\mathcal{A}) =\mathcal{A}\oplus\mathcal{B}_{r}(\mathbf{0})$.
The $r-$neighbourhood of the set $\mathcal{A}$ is denoted by $\mathcal{N}_{r}(\mathcal{A}) = \mathcal{D}_{r}(\mathcal{A})\setminus\mathcal{A}^{\circ}$.

\subsection{Projection on a set}\label{section:metric_projection} Given a closed set $\mathcal{A}\subset\mathbb{R}^n$ and a point $\mathbf{x}\in\mathbb{R}^n$, the Euclidean distance of $\mathbf{x}$ from the set $\mathcal{A}$ is evaluated as
\begin{equation}
    d(\mathbf{x}, \mathcal{A}) = \underset{\mathbf{q}\in\mathcal{A}}{\min}\norm{\mathbf{x} - \mathbf{q}}.\label{definition:distance_from_set}
\end{equation}
The set $\mathcal{PJ}(\mathbf{x}, \mathcal{A})\subset\mathcal{A}$, which is defined as
\begin{equation}
    \mathcal{PJ}(\mathbf{x}, \mathcal{A}) = \{\mathbf{q}\in\mathcal{A}|\norm{\mathbf{x} - \mathbf{q}} = d(\mathbf{x}, \mathcal{A})\},\label{definition:set_of_projections}
\end{equation}
is the set of points in $\mathcal{A}$ that are at the distance of $d(\mathbf{x}, \mathcal{A})$ from $\mathbf{x}$. If $\mathbf{card}(\mathcal{PJ}(\mathbf{x}, \mathcal{A}))$ is one, then the element of the set $\mathcal{PJ}(\mathbf{x}, \mathcal{A})$ is denoted by $\Pi(\mathbf{x}, \mathcal{A})$. 

\subsection{Sets with positive reach}
\label{section:set_with_positive_reach}Given a closed set $\mathcal{A}\subset\mathbb{R}^n$, the set $\text{Unp}(\mathcal{A})$, which is defined as
\begin{equation}
\text{Unp}(\mathcal{A}) = \{\mathbf{x}\in\mathbb{R}^n|\mathbf{card}(\mathcal{PJ}(\mathbf{x}, \mathcal{A})) = 1\},
\end{equation}
denotes the set of all $\mathbf{x}\in\mathbb{R}^n$ for which there exists a unique point in $\mathcal{A}$ nearest to $\mathbf{x}$. Then, for any $\mathbf{x}\in\mathcal{A}$, the reach of set $\mathcal{A}$ at $\mathbf{x}$, denoted by $\textbf{r}(\mathcal{A}, \mathbf{x})$ \cite[Pg 55]{rataj2019curvature}, is defined as
\begin{equation}
\mathbf{r}(\mathcal{A}, \mathbf{x}) := \sup\{r \geq 0|\mathcal{B}_{r}^{\circ}(\mathbf{x})\subset\text{Unp}(\mathcal{A})\}.
\end{equation}
The reach of set $\mathcal{A}$ is then given by
\begin{equation}
\mathbf{reach}(\mathcal{A}) := \underset{\mathbf{x}\in\mathcal{A}}{\inf}\mathbf{ r}(\mathcal{A}, \mathbf{x}).
\end{equation} 
If a closed set has reach greater than or equal to $r > 0$, then any location less than $r$ distance away from the set will have a unique closest point on the set.

\subsection{Geometric sets}
\subsubsection{Line} Let $\mathbf{p}\in\mathbb{R}^n$ and $\mathbf{q}\in\mathbb{R}^n\setminus\{\mathbf{0}\}$, then a line passing through the point $\mathbf{p}$ in the direction of the vector $\mathbf{q}$ is defined as
\begin{equation}
    \mathcal{L}(\mathbf{p}, \mathbf{q}) := \left\{\mathbf{x}\in\mathbb{R}^n|\mathbf{x} = \mathbf{p} + \lambda\mathbf{q}, \lambda\in\mathbb{R}\right\}.
\end{equation}
\subsubsection{Line segment} Let $\mathbf{p}\in\mathbb{R}^n$ and $\mathbf{q}\in\mathbb{R}^n$, then a line segment joining $\mathbf{p}$ and $\mathbf{q}$ is denoted by
\begin{equation}
    \mathcal{L}_s(\mathbf{p}, \mathbf{q}):=\left\{\mathbf{x}\in\mathbb{R}^n|\mathbf{x} = \lambda\mathbf{p} + (1 - \lambda)\mathbf{q}, \lambda\in[0, 1]\right\}.
\end{equation}
\subsubsection{Hyperplane} Given $\mathbf{p}\in\mathbb{R}^n$ and $\mathbf{q}\in\mathbb{R}^n\setminus\{\mathbf{0}\}$, a hyperplane passing through $\mathbf{p}$ and orthogonal to $\mathbf{q}$ is given by
\begin{equation}
    \mathcal{P}(\mathbf{p}, \mathbf{q}):= \{\mathbf{x}\in\mathbb{R}^n|\mathbf{q}^\intercal(\mathbf{x} - \mathbf{p}) = 0\}.\label{definition:hyperplane}
\end{equation}
The hyperplane divides the Euclidean space $\mathbb{R}^n$ into two half-spaces \textit{i.e.}, a closed positive half-space $\mathcal{P}_{\geq}(\mathbf{p}, \mathbf{q})$ and a closed negative half-space $\mathcal{P}_{\leq}(\mathbf{p}, \mathbf{q})$ which are obtained by substituting `$=$' with `$\geq$' and `$\leq$' respectively, in the right-hand side of \eqref{definition:hyperplane}. We also use the notations $\mathcal{P}_{>}(\mathbf{p}, \mathbf{q})$ and $\mathcal{P}_{<}(\mathbf{p} ,\mathbf{q})$ to denote the open positive and the open negative half-spaces such that $\mathcal{P}_{>}(\mathbf{p}, \mathbf{q}) = \mathcal{P}_{\geq}(\mathbf{p}, \mathbf{q})\backslash\mathcal{P}(\mathbf{p}, \mathbf{q})$ and $\mathcal{P}_{<}(\mathbf{p} ,\mathbf{q})= \mathcal{P}_{\leq}(\mathbf{p}, \mathbf{q})\backslash\mathcal{P}(\mathbf{p}, \mathbf{q})$.
\subsubsection{Convex cone}Given $\mathbf{c}\in\mathbb{R}^n$, $\mathbf{p}_1\in\mathbb{R}^n\setminus\{\mathbf{c}\}$, and $\mathbf{p}_2\in\mathbb{R}^n\setminus\{\mathbf{c}\}$, a convex cone with its vertex at $\mathbf{c}$ and its edges passing through $\mathbf{p}_1$ and $\mathbf{p}_2$ is defined as
\begin{equation}
    \begin{aligned}
    \mathcal{C}(\mathbf{c}, \mathbf{p}_1, \mathbf{p}_2) := \{&\mathbf{x}\in\mathbb{R}^n|\mathbf{x} = \mathbf{c} + \lambda_1(\mathbf{p}_1 - \mathbf{c}) + \lambda_2(\mathbf{p}_2 - \mathbf{c}), \\
    &\forall\lambda_1\geq 0, \forall\lambda_2\geq0\}.
    \end{aligned}
\end{equation}

\subsubsection{Conic hull \cite[Section 2.1.5]{boyd2004convex}}Given a set $\mathcal{A}\subset\mathbb{R}^n$ and a point $\mathbf{x}\in\mathbb{R}^n$, the conic hull $\mathcal{CH}(\mathbf{x}, \mathcal{A})$ for the set $\mathcal{A}$, with its vertex at $\mathbf{x}$ is defined as
\begin{equation}
    \mathcal{CH}(\mathbf{x}, \mathcal{A}) := \bigcup_{\mathbf{p}, \mathbf{q}\in\mathcal{A}}\mathcal{C}(\mathbf{x}, \mathbf{p}, \mathbf{q}).\label{definition:conic_hull}
\end{equation}
The conic hull $\mathcal{CH}(\mathbf{x}, \mathcal{A})$ is the smallest convex cone with its vertex at $\mathbf{x}$ that contains the set $\mathcal{A}$ \textit{i.e.}, $\mathcal{A}\subset\mathcal{CH}(\mathbf{x}, \mathcal{A}).$ 

\subsection{Tangent cone and Normal cone} Given a closed set $\mathcal{A}\subset\mathbb{R}^n$, the tangent cone to $\mathcal{A}$ at a point $\mathbf{x}\in\mathbb{R}^n$ \cite[Def 4.6]{blanchini2008set} is defined by
\begin{equation}
    \mathbf{T}_{\mathcal{A}}(\mathbf{x}) := \left\{\mathbf{w}\in\mathbb{R}^n\bigg|\underset{\tau \rightarrow 0^+}{\text{lim inf }}\frac{d(\mathbf{x} + \tau\mathbf{w}, \mathcal{A})}{\tau} = 0\right\}.
\end{equation}
The tangent cone to the set $\mathcal{A}$ at $\mathbf{x}$ is the set that contains all the vectors whose directions point from $\mathbf{x}$ either inside or tangent to the set $\mathcal{A}$. Given a tangent cone to a set $\mathcal{A}$ at a point $\mathbf{x}$, the normal cone to the set $\mathcal{A}$ at the point $\mathbf{x}$, as defined in \cite[Pg 58]{rataj2019curvature}, is given by
\begin{equation}
    \mathbf{N}_{\mathcal{A}}(\mathbf{x}) := \left\{\mathbf{p}\in\mathbb{R}^n\big|\mathbf{p}^\intercal\mathbf{w} \leq 0, \forall\mathbf{w}\in\mathbf{T}_{\mathcal{A}}(\mathbf{x})\right\}.
\end{equation}
\label{section:normal_and_tangent_cone}

The next two lemmas provide some properties of the sets with positive reach, which will be used in the paper.

\begin{lemma}
Given a closed set $\mathcal{A}\subset\mathbb{R}^n$, we define the set $\mathcal{G} = \left(\mathcal{A} \oplus \mathcal{B}_{\alpha}^{\circ}(\mathbf{0})\right)^c.$ If $\mathbf{reach}(\mathcal{A}) \geq \alpha$, then $\mathbf{reach}(\mathcal{G}) \geq \alpha$. 
\label{lemma:reach_property}
\end{lemma}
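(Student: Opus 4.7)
The plan is to prove the contrapositive-flavored statement: every point lying within distance strictly less than $\alpha$ of $\mathcal{G}$ admits a unique nearest point in $\mathcal{G}$. Since $\{\mathbf{x}\in\mathbb{R}^n : d(\mathbf{x},\mathcal{G})<\alpha\} = \bigcup_{\mathbf{g}\in\mathcal{G}} \mathcal{B}_{\alpha}^{\circ}(\mathbf{g})$, this establishes $\mathcal{B}_{\alpha}^{\circ}(\mathbf{g})\subset\mathrm{Unp}(\mathcal{G})$ for every $\mathbf{g}\in\mathcal{G}$, which gives $\mathbf{r}(\mathcal{G},\mathbf{g})\geq\alpha$, and hence $\mathbf{reach}(\mathcal{G})\geq\alpha$.

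First I would rewrite $\mathcal{G}$ explicitly. Because $\mathcal{B}_{\alpha}^{\circ}(\mathbf{0})$ is open, the Minkowski sum satisfies $\mathcal{A}\oplus\mathcal{B}_{\alpha}^{\circ}(\mathbf{0}) = \{\mathbf{x}\in\mathbb{R}^n : d(\mathbf{x},\mathcal{A})<\alpha\}$, so $\mathcal{G}=\{\mathbf{x}\in\mathbb{R}^n : d(\mathbf{x},\mathcal{A})\geq\alpha\}$, which is closed. Fix $\mathbf{x}$ with $d(\mathbf{x},\mathcal{G})<\alpha$. If $\mathbf{x}\in\mathcal{G}$, then $\mathbf{x}$ itself is trivially the unique nearest point. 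Otherwise $d(\mathbf{x},\mathcal{A})<\alpha$, and since $\mathcal{A}$ is closed, $\mathbf{x}\notin\mathcal{A}$, so $0<d(\mathbf{x},\mathcal{A})<\alpha$. Because $\mathbf{reach}(\mathcal{A})\geq\alpha$, the metric projection $\mathbf{a}:=\Pi(\mathbf{x},\mathcal{A})$ is uniquely defined.

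Next I would construct an explicit candidate for the nearest point in $\mathcal{G}$ by extending the ray from $\mathbf{a}$ through $\mathbf{x}$ to the boundary of the $\alpha$-tube around $\mathcal{A}$:
\begin{equation*}
    \mathbf{g}^\star := \mathbf{a} + \alpha\,\frac{\mathbf{x}-\mathbf{a}}{\norm{\mathbf{x}-\mathbf{a}}}.
\end{equation*}
The key technical point, which I anticipate being the main obstacle, is to verify that $\mathbf{g}^\star\in\mathcal{G}$, i.e., that $d(\mathbf{g}^\star,\mathcal{A})=\alpha$. By a standard property of sets of positive reach, for every $t\in[0,\alpha)$ the point $\mathbf{y}(t):=\mathbf{a}+t(\mathbf{x}-\mathbf{a})/\norm{\mathbf{x}-\mathbf{a}}$ lies in $\mathrm{Unp}(\mathcal{A})$ with $\Pi(\mathbf{y}(t),\mathcal{A})=\mathbf{a}$ and hence $d(\mathbf{y}(t),\mathcal{A})=t$; continuity of the distance function as $t\uparrow\alpha$ forces $d(\mathbf{g}^\star,\mathcal{A})=\alpha$. (In case this limiting argument feels delicate, one can instead observe that any $\mathbf{q}\in\mathcal{A}$ with $\norm{\mathbf{g}^\star-\mathbf{q}}<\alpha$ would yield, by continuity along the ray, a contradiction with the uniqueness of the projection $\mathbf{a}$ for nearby $\mathbf{y}(t)$.)

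Finally I would establish both existence and uniqueness of the nearest point via the triangle inequality. For any $\mathbf{g}\in\mathcal{G}$, $\norm{\mathbf{g}-\mathbf{a}}\geq d(\mathbf{g},\mathcal{A})\geq\alpha$, so
\begin{equation*}
    \norm{\mathbf{x}-\mathbf{g}} \;\geq\; \norm{\mathbf{g}-\mathbf{a}} - \norm{\mathbf{x}-\mathbf{a}} \;\geq\; \alpha-\norm{\mathbf{x}-\mathbf{a}} \;=\; \norm{\mathbf{x}-\mathbf{g}^\star},
\end{equation*}
showing $\mathbf{g}^\star$ achieves the minimum distance from $\mathbf{x}$ to $\mathcal{G}$. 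Equality throughout requires $\mathbf{a}$, $\mathbf{x}$, $\mathbf{g}$ to be collinear with $\mathbf{x}$ between $\mathbf{a}$ and $\mathbf{g}$ and $\norm{\mathbf{g}-\mathbf{a}}=\alpha$, which pins $\mathbf{g}=\mathbf{g}^\star$ and yields uniqueness. This completes the verification that $\mathcal{B}_{\alpha}^{\circ}(\mathbf{g})\subset\mathrm{Unp}(\mathcal{G})$ for every $\mathbf{g}\in\mathcal{G}$, proving $\mathbf{reach}(\mathcal{G})\geq\alpha$.
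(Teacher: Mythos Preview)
Your argument is correct and shares the paper's key construction: both identify the candidate nearest point $\mathbf{g}^\star=\mathbf{a}+\alpha(\mathbf{x}-\mathbf{a})/\norm{\mathbf{x}-\mathbf{a}}$ (the paper calls it $\mathbf{q}$) and use the positive-reach property of $\mathcal{A}$ (cited in the paper as \cite[Lemma~4.5]{rataj2019curvature}) to place it on $\partial\mathcal{G}$. The execution differs, however. The paper proceeds by contradiction, assuming two distinct projections $\mathbf{p}_1,\mathbf{p}_2$ of $\mathbf{x}$ onto $\mathcal{G}$ and splitting into cases according to whether $\alpha\geq\beta+\eta$ or $\alpha<\beta+\eta$ (with $\beta=d(\mathbf{x},\mathcal{A})$, $\eta=d(\mathbf{x},\mathcal{G})$): the second case shows $\mathbf{q}$ would be strictly inside $\mathcal{B}_\eta(\mathbf{x})$, and the first case argues that $\mathbf{p}_1,\mathbf{p}_2$ would have to be two distinct tangency points of $\partial\mathcal{B}_\eta(\mathbf{x})$ with $\partial\mathcal{B}_\alpha(\Pi(\mathbf{x},\mathcal{A}))$, which is impossible. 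Your single chain $\norm{\mathbf{x}-\mathbf{g}}\geq\norm{\mathbf{g}-\mathbf{a}}-\norm{\mathbf{x}-\mathbf{a}}\geq\alpha-\norm{\mathbf{x}-\mathbf{a}}=\norm{\mathbf{x}-\mathbf{g}^\star}$, together with the equality analysis, subsumes both cases at once and is tidier. One small expository point: the clause ``since $\mathcal{A}$ is closed, $\mathbf{x}\notin\mathcal{A}$'' is not the right justification; what actually forces $\mathbf{x}\notin\mathcal{A}$ is that any $\mathbf{g}\in\mathcal{G}$ satisfies $\norm{\mathbf{x}-\mathbf{g}}\geq d(\mathbf{g},\mathcal{A})\geq\alpha$ whenever $\mathbf{x}\in\mathcal{A}$, contradicting $d(\mathbf{x},\mathcal{G})<\alpha$. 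Closedness of $\mathcal{A}$ is then used only to convert $\mathbf{x}\notin\mathcal{A}$ into $d(\mathbf{x},\mathcal{A})>0$.
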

\begin{proof}
See Appendix \ref{proof:reach_property}.
\end{proof}


\begin{lemma}
    Consider a closed set $\mathcal{A}\subset\mathbb{R}^n$ and scalars $\alpha\geq \beta\geq 0$. If $\mathbf{reach}(\mathcal{A}) \geq \alpha,$ then $\mathbf{reach}(\mathcal{D}_{\beta}(\mathcal{A})) \geq \alpha - \beta.$\label{lemma:reach_extends}
\end{lemma}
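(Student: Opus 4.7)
The plan is to show that the set of points at distance less than $\alpha - \beta$ from $\mathcal{D}_{\beta}(\mathcal{A})$ is contained in $\text{Unp}(\mathcal{D}_{\beta}(\mathcal{A}))$; since every $\mathbf{x}\in\mathcal{D}_{\beta}(\mathcal{A})$ has $\mathcal{B}_{\alpha-\beta}^{\circ}(\mathbf{x})$ inside this set, the reach bound follows directly from the definition. The case $\alpha=\beta$ is trivial, so assume $\alpha>\beta$.

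The first key step is the distance identity
\begin{equation}
    d(\mathbf{y},\mathcal{D}_{\beta}(\mathcal{A})) \;=\; \max\{0,\,d(\mathbf{y},\mathcal{A}) - \beta\},\qquad \mathbf{y}\in\mathbb{R}^n. \label{eq:plan_distid}
\end{equation}
The ``$\leq$'' direction follows by picking, for each $\mathbf{a}\in\mathcal{A}$ with $\|\mathbf{y}-\mathbf{a}\|>\beta$, the explicit point $\mathbf{a}+\beta(\mathbf{y}-\mathbf{a})/\|\mathbf{y}-\mathbf{a}\|\in\mathcal{D}_{\beta}(\mathcal{A})$ at distance $\|\mathbf{y}-\mathbf{a}\|-\beta$ from $\mathbf{y}$, then taking infimum over $\mathbf{a}$. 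The ``$\geq$'' direction follows from writing any $\mathbf{z}\in\mathcal{D}_{\beta}(\mathcal{A})$ as $\mathbf{z}=\mathbf{a}+\mathbf{b}$ with $\mathbf{a}\in\mathcal{A}$, $\|\mathbf{b}\|\le\beta$, and applying the triangle inequality to get $d(\mathbf{y},\mathcal{A})\leq\|\mathbf{y}-\mathbf{z}\|+\beta$.

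The second step handles $\mathbf{y}\notin\mathcal{D}_{\beta}(\mathcal{A})$ with $d(\mathbf{y},\mathcal{D}_{\beta}(\mathcal{A}))<\alpha-\beta$. By \eqref{eq:plan_distid}, $d(\mathbf{y},\mathcal{A})<\alpha$, so the assumption $\mathbf{reach}(\mathcal{A})\geq\alpha$ supplies a unique nearest point $\mathbf{a}^{\star}:=\Pi(\mathbf{y},\mathcal{A})$. I claim the unique nearest point in $\mathcal{D}_{\beta}(\mathcal{A})$ is
\begin{equation*}
    \mathbf{z}^{\star} := \mathbf{a}^{\star} + \beta\,\frac{\mathbf{y}-\mathbf{a}^{\star}}{\|\mathbf{y}-\mathbf{a}^{\star}\|}.
\end{equation*}
If $\mathbf{z}\in\mathcal{PJ}(\mathbf{y},\mathcal{D}_{\beta}(\mathcal{A}))$ and $\mathbf{z}=\mathbf{a}+\mathbf{b}$ with $\mathbf{a}\in\mathcal{A}$, $\|\mathbf{b}\|\le\beta$, then the chain
\begin{equation*}
    d(\mathbf{y},\mathcal{A}) \;\leq\; \|\mathbf{y}-\mathbf{a}\| \;\leq\; \|\mathbf{y}-\mathbf{z}\|+\|\mathbf{z}-\mathbf{a}\| \;\leq\; d(\mathbf{y},\mathcal{D}_{\beta}(\mathcal{A}))+\beta \;=\; d(\mathbf{y},\mathcal{A})
\end{equation*}
must be an equality throughout. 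This forces $\mathbf{a}=\mathbf{a}^{\star}$ (uniqueness of the projection on $\mathcal{A}$), and the equality case of the triangle inequality places $\mathbf{z}$ on $\mathcal{L}_s(\mathbf{y},\mathbf{a}^{\star})$ at distance exactly $\beta$ from $\mathbf{a}^{\star}$, giving $\mathbf{z}=\mathbf{z}^{\star}$.

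For $\mathbf{y}\in\mathcal{D}_{\beta}(\mathcal{A})$, uniqueness of the nearest point is immediate ($\mathbf{y}$ itself). Hence every $\mathbf{y}$ with $d(\mathbf{y},\mathcal{D}_{\beta}(\mathcal{A}))<\alpha-\beta$ lies in $\text{Unp}(\mathcal{D}_{\beta}(\mathcal{A}))$, so $\mathcal{B}_{\alpha-\beta}^{\circ}(\mathbf{x})\subset\text{Unp}(\mathcal{D}_{\beta}(\mathcal{A}))$ for every $\mathbf{x}\in\mathcal{D}_{\beta}(\mathcal{A})$, and $\mathbf{reach}(\mathcal{D}_{\beta}(\mathcal{A}))\geq\alpha-\beta$. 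The only nontrivial step is the uniqueness argument in \eqref{eq:plan_distid}'s second stage; everything else is bookkeeping via the triangle inequality, so I do not expect serious technical obstacles.
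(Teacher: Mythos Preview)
Your proof is correct and shares the same core mechanism as the paper's: both reduce uniqueness of the projection onto $\mathcal{D}_{\beta}(\mathcal{A})$ to uniqueness of $\Pi(\mathbf{y},\mathcal{A})$ via the triangle-inequality chain $d(\mathbf{y},\mathcal{A})\le\|\mathbf{y}-\mathbf{a}\|\le\|\mathbf{y}-\mathbf{z}\|+\|\mathbf{z}-\mathbf{a}\|\le d(\mathbf{y},\mathcal{A})$, then use the equality case to pin $\mathbf{z}$ on the segment $\mathcal{L}_s(\mathbf{y},\mathbf{a}^{\star})$. The only difference is packaging: the paper argues by contradiction (positing two distinct nearest points $\mathbf{p}_1,\mathbf{p}_2$ and showing $\Pi(\mathbf{p}_1,\mathcal{A})=\Pi(\mathbf{p}_2,\mathcal{A})$ forces $\mathbf{p}_1=\mathbf{p}_2$), whereas you argue directly and make the distance identity $d(\mathbf{y},\mathcal{D}_{\beta}(\mathcal{A}))=\max\{0,d(\mathbf{y},\mathcal{A})-\beta\}$ explicit up front---a fact the paper uses implicitly when it writes $d(\mathbf{x},\mathbf{p}_i)=\eta-\beta$.
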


\begin{proof}
See Appendix \ref{proof:reach_extends}.
\end{proof}
  
\subsection{Hybrid system framework}
\label{sec:hybrid_threory}
A hybrid dynamical system
\cite{goebel2012hybrid} is represented using differential and difference inclusions for the state $\mathbf{\xi}\in\mathbb{R}^n$ as follows:
\begin{align}
    \begin{cases}\begin{matrix}\mathbf{\dot{\xi}} \in \mathbf{F}(\mathbf{\xi}) , & \mathbf{\xi} \in \mathcal{F}, \\
    \mathbf{\xi}^{+}\in \mathbf{J}(\mathbf{\xi}), & \mathbf{\xi}\in\mathcal{J},\end{matrix}\end{cases}\label{hybrid_system_general_model}
\end{align}
where the \textit{flow map} $\mathbf{F}:\mathbb{R}^n\rightrightarrows\mathbb{R}^n$ is the differential inclusion which governs the continuous evolution when $\mathbf{\xi}$ belongs to the \textit{flow set} $\mathcal{F}\subseteq\mathbb{R}^n$, where the symbol `$\rightrightarrows$' represents set-valued mapping. The \textit{jump map} $\mathbf{J}:\mathbb{R}^n\rightrightarrows\mathbb{R}^n$ is the difference inclusion that governs the discrete evolution when $\mathbf{\xi}$ belongs to the \textit{jump set} $\mathcal{J}\subseteq\mathbb{R}^n$. The hybrid system \eqref{hybrid_system_general_model} is defined by its data and denoted as $\mathcal{H} = (\mathcal{F}, \mathbf{F}, \mathcal{J}, \mathbf{J}).$

A subset $\mathbb{T}\subset\mathbb{R}_{\geq0}\times\mathbb{N}$ is a \textit{hybrid time domain} if it is a union of a finite or infinite sequence of intervals $[t_j, t_{j + 1}]\times \{j\},$ where the last interval (if existent) is possibly of the form $[t_j, T)$ with $T$ finite or $T = +\infty$. The ordering of points on each hybrid time domain is such that $(t, j)\preceq(t^{\prime}, j^{\prime})$ if $t < t^{\prime},$ or $t = t^{\prime}$ and $j \leq j^{\prime}$. A \textit{hybrid solution} $\phi$ is maximal if it cannot be extended, and complete if its domain dom $\phi$ (which is a hybrid time domain) is unbounded.

\section{Problem Formulation}
\label{Section:problem_formulation}
We consider a disk-shaped robot operating in a two dimensional, compact, arbitrarily-shaped (possibly non-convex) subset of the Euclidean space $\mathcal{W}\subset\mathbb{R}^2$. The workspace $\mathcal{W}$ is cluttered with a finite number of compact, pairwise disjoint obstacles $\mathcal{O}_i\subset\mathcal{W}, i\in\{1, \ldots, b\}, b\in\mathbb{N}$. We define obstacle $\mathcal{O}_0 := (\mathcal{W}^{\circ})^c$ as the complement of the interior of the workspace. The robot is governed by single integrator dynamics
\begin{equation}
    \mathbf{\dot{x}} = \mathbf{u},
\end{equation}
where $\mathbf{x}$ is the location of the center of the robot and $\mathbf{u}\in\mathbb{R}^2$ is the control input.
The task is to reach a predefined obstacle-free target location from any obstacle-free region while avoiding collisions. Without loss of generality, consider the origin $\mathbf{0}$ as the target location. 

We define the obstacle-occupied workspace as $\mathcal{O}_{\mathcal{W}} := \bigcup_{i\in\mathbb{I}}\mathcal{O}_i, $ where the set $\mathbb{I}:= \{0, 1, ..., b\}$, contains indices corresponding to the disjoint obstacles. The obstacle-free workspace is denoted by $\mathcal{W}_0$, where, given $y\geq 0$, the $y-$eroded version of the obstacle-free workspace \textit{i.e.}, $\mathcal{W}_y$ is defined as
\begin{equation}
    \mathcal{W}_{y} := \mathbb{R}^2\setminus \mathcal{D}_{y}(\mathcal{O}_{\mathcal{W}}^{\circ})\subset\mathcal{W}_0.\label{y_free_workspace}
\end{equation}

Let $r>0$ be the radius of the robot and $r_s >0$ be the minimum distance that the robot should maintain with respect to any obstacle for safe navigation. Hence, $\mathcal{W}_{r_a}$, with $r_a = r + r_s$, is a free workspace with respect to the center of the robot \textit{i.e.}, $\mathbf{x}\in\mathcal{W}_{r_a}\iff\mathcal{B}_{r_a}(\mathbf{x})\subset\mathcal{W}_0$. Since the obstacles can be non-convex and can be in close proximity with each other, to maintain the feasibility of the robot navigation, we make the following assumption:

\begin{figure}
    \centering
    \includegraphics[width = 0.9\linewidth]{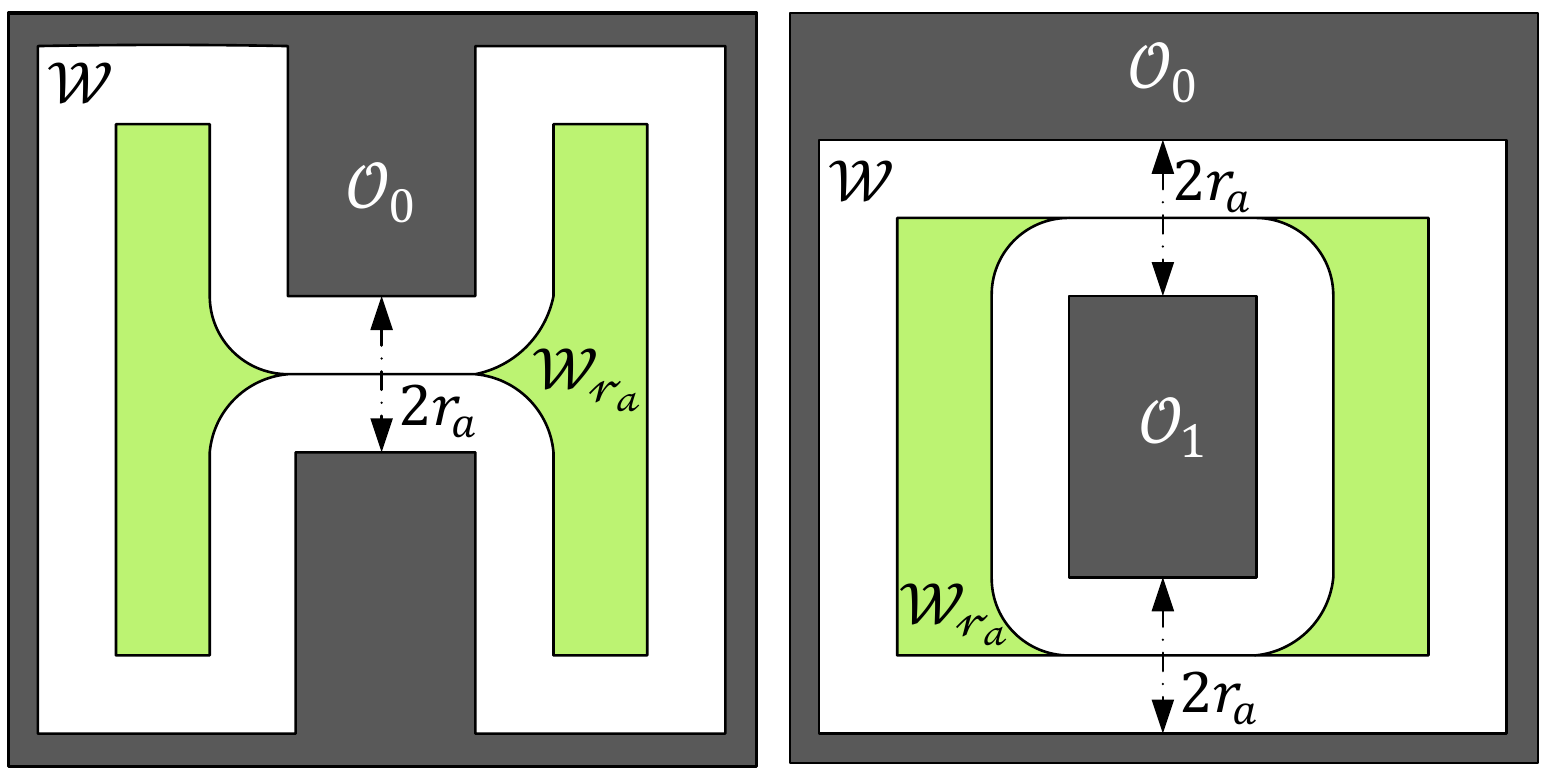}
    \caption{Two examples of workspaces that do not satisfy Assumption \ref{assumption:connected_interior}.}
    \label{diag:workspace_not_satisfying_Assumption1}
\end{figure}

\begin{assumption}
The interior of the obstacle-free workspace w.r.t. the center of the robot \textit{i.e.}, $\mathcal{W}_{r_a}^{\circ}$ is pathwise connected, and $\mathbf{0}\in\mathcal{W}_{r_a}^{\circ}$.\label{assumption:connected_interior}
\end{assumption}

According to Assumption \ref{assumption:connected_interior}, from any location in the set $\mathcal{W}_{r_a}$, there exists at least one feasible path to the target location. We require the origin to be in the interior of the set $\mathcal{W}_{r_a}$ to ensure its stability, as discussed later in Theorem \ref{theorem:global_stability}. Since we require the interior of the set $\mathcal{W}_{r_a}$ to be pathwise connected, the environments, such as the ones showed in Fig. \ref{diag:workspace_not_satisfying_Assumption1}, which do not satisfy Assumption \ref{assumption:connected_interior}, are invalid.

\begin{figure*}
    \centering
    \includegraphics[width = 0.85\textwidth]{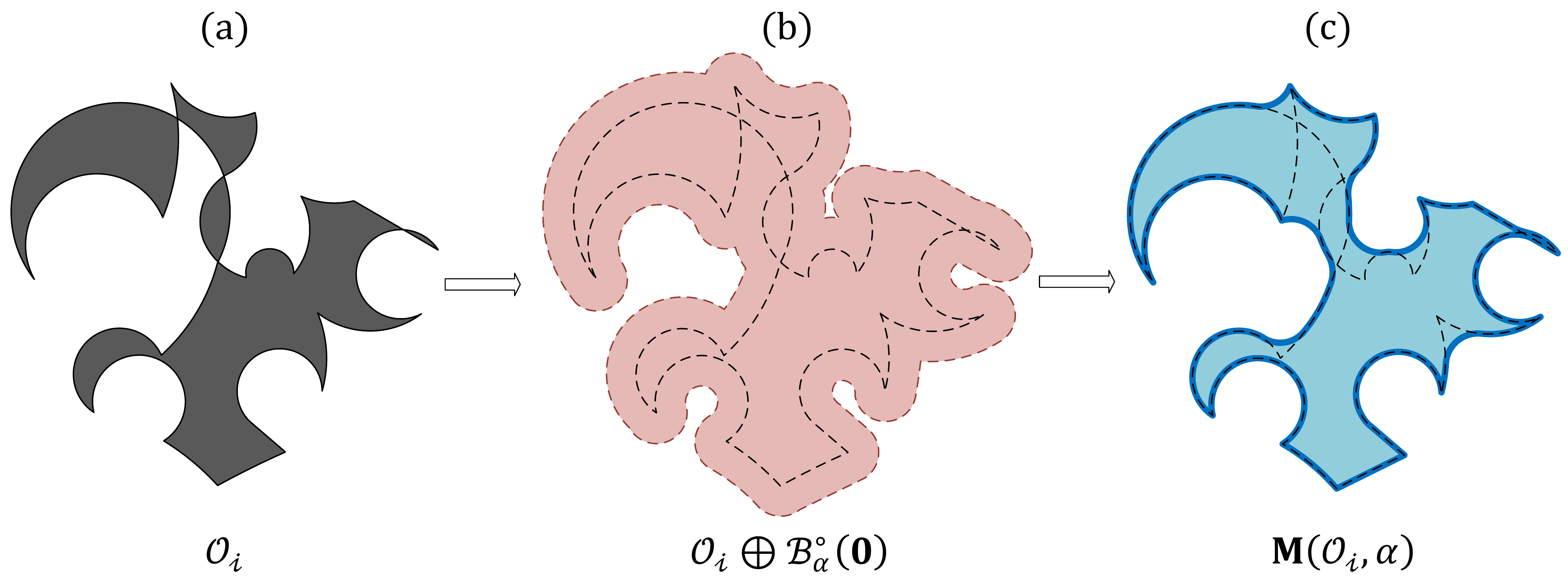}
    \caption{
    (a) The original obstacle $\mathcal{O}_i\subset\mathbb{R}^2$. (b) Dilation of obstacle $\mathcal{O}_i$ by a structuring element $\mathcal{B}_{\alpha}^{\circ}(\mathbf{0}), \alpha > 0$. (c) Erosion of the dilated obstacle $\mathcal{O}_i\oplus\mathcal{B}_{\alpha}^{\circ}(\mathbf{0})$ by the same structuring element $\mathcal{B}_{\alpha}^{\circ}(\mathbf{0})$.}
    \label{diagram:obstacle_reshaping}
\end{figure*}

The obstacle-avoidance strategy, which will be detailed later in Section \ref{sec:the_proposed_contrroller}, requires a unique closest point on the obstacle-occupied workspace from the robot's center. This condition is not always satisfied in the case of non-convex and closely positioned obstacles. Constructing convex hulls around the non-convex obstacles is conservative solution since it makes much of the obstacle-free workspace non-available for navigation. Therefore, in Section \ref{section:obstacle_reshaping}, we will introduce an obstacle reshaping technique that generates a modified obstacle-occupied workspace $\mathcal{O}_{\mathcal{W}}^M$, in a less conservative manner (without necessarily convexifying the obstacles), in a way that ensures uniqueness of the closest point on $\mathcal{O}_{\mathcal{W}}^M$ from the robot's center. 

Similar to $\mathcal{W}_{y}$ in \eqref{y_free_workspace}, the $y-$eroded modified obstacle-free workspace, which is denoted by $\mathcal{V}_{y}$, is defined as
\begin{equation}
    \mathcal{V}_y := \mathbb{R}^2\setminus\mathcal{D}_{y}(\mathcal{O}_{\mathcal{W}}^M)^{\circ},\label{y_modified_free_workspace}
\end{equation}
where $y \geq 0$. Hence, the set $\mathcal{V}_{r_a}$ denotes the modified obstacle-free workspace with respect to the center of the robot. We construct $\mathcal{O}_{\mathcal{W}}^M$ such that the modified obstacle-free workspace $\mathcal{V}_{r_a}$ is a subset of the original obstacle-free workspace $\mathcal{W}_{r_a}$, as stated with more details later in Remark \ref{remark:larger_area}.

Given a target location in the interior of the obstacle-free workspace, \textit{i.e.,} $\mathbf{0}\in(\mathcal{W}_{r_a})^{\circ}$, as stated in Assumption \ref{assumption:connected_interior}, we aim to design a hybrid feedback control law such that: 

\begin{enumerate}
    \item the set $\mathcal{V}_{r_a}$ is forward invariant.
    \item the target location $\mathbf{x} = 0$ is globally asymptotically stable.
\end{enumerate}
As it is going to be shown later, the obstacle reshaping procedure guarantees that if the target location belongs to $(\mathcal{W}_{r_a})^{\circ}$ then it also belongs to $\mathcal{V}_{r_a}$.

We use hybrid feedback control techniques \cite{goebel2012hybrid} to develop a navigation scheme for the robot operating in environments that satisfy Assumption \ref{assumption:connected_interior}. The design process can be summarized as follows:
\begin{enumerate}
\item the proposed hybrid navigation scheme involves two modes of operation for the robot: \textit{move-to-target} and \textit{obstacle-avoidance}. The design of the obstacle avoidance strategy requires a unique projection onto the unsafe region within its close proximity. However, ensuring this uniqueness can be challenging in cases where obstacles have arbitrary shapes and are in close proximity to one another. Hence, before implementing the hybrid navigation scheme, we first transform the obstacles using an obstacle-reshaping operator, as discussed later in Section \ref{section:obstacle_reshaping}, to obtain the modified obstacles. This operator transforms the obstacle-occupied workspace and guarantees the uniqueness of the projection of the robot's center onto the modified obstacle-occupied workspace in its $\alpha$-neighbourhood, where the parameter $\alpha$ is chosen as per Lemma \ref{lemma:alpha_existence}.

\item when the center of the robot is outside the $\alpha$-neighbourhood of the modified obstacles or the nearest disjoint modified obstacle does not intersect with its straight path to the target location, the robot moves straight towards the target in the \textit{move-to-target} mode.

\item when the center of the robot enters the $\alpha-$neighbourhood of the modified obstacle that is obstructing its straight path towards the target location, the robot switches to the \textit{obstacle-avoidance} mode.

\item in the \textit{obstacle-avoidance} mode, to avoid collision, the robot moves parallel to the boundary of the nearest modified obstacle until it reaches the location at which the following two conditions are satisfied: 1) the robot is closer to the target location than the location where it entered in the \textit{obstacle-avoidance} mode; 2) the straight path towards the target from that location does not intersect with the nearest disjoint modified obstacle. At this location, the robot switches back to the \textit{move-to-target} mode.

\item later in Lemma \ref{lemma:properties_of_modified_workspace}, we show that the target location belongs to the modified obstacle-free workspace, and from any location away from the interior of the modified obstacles, there exists a feasible path towards the target location. Hence, with a consecutive implementation of steps $2-4$ for the environment with modified obstacles, we guarantee asymptotic convergence of the center of the robot to the target location.
\end{enumerate}

In the next section, we provide the transformation that modifies the obstacle-occupied workspace $\mathcal{O}_{\mathcal{W}}$ which satisfies Assumption \ref{assumption:connected_interior}, such that the robot always has a unique closest point on the modified obstacle-occupied workspace inside its $\alpha-$neighbourhood.

\section{Obstacle reshaping}
\label{section:obstacle_reshaping}
Given $\mathcal{O}_{\mathcal{W}}$ that satisfies Assumption \ref{assumption:connected_interior}, the objective of the obstacle-reshaping task is to obtain a modified obstacle-occupied workspace $\mathcal{O}_{\mathcal{W}}^M$ such that every location less than $\alpha-$distance away from the set $\mathcal{O}_{\mathcal{W}}^M$ has a unique closest point on the set. The choice of the parameter $\alpha$ is crucial for the successful implementation of the proposed navigation scheme, as stated later in Lemma \ref{lemma:alpha_existence}. For now, we assume that $\alpha > 0$ such that the $\alpha-$eroded obstacle-free workspace $\mathcal{W}_{\alpha}$ is not an empty set. The obstacle-reshaping operator $\mathbf{M}$ is defined as
\begin{equation}
    \mathbf{M}(\mathcal{O}_{\mathcal{W}}, \alpha) = (\mathcal{O}_{\mathcal{W}}\oplus\mathcal{B}_{\alpha}^{\circ}(\mathbf{0}))\ominus\mathcal{B}_{\alpha}^{\circ}(\mathbf{0})=:\mathcal{O}_{\mathcal{W}}^{M}.\label{obstacle_modification_step}
\end{equation}

The operator $\mathbf{M}$ first dilates the set $\mathcal{O}_{\mathcal{W}}$ using the open Euclidean ball of radius $\alpha$ centered at the origin as the structuring element, and then erodes the dilated set using the same structuring element, resulting in the modified set $\mathcal{O}_{\mathcal{W}}^M$. This process is similar to the \textit{closing} operator commonly used in the field of mathematical morphology \cite{haralick1987image}. Note that the proposed modification scheme is applicable to $n-$dimensional environments.
Next, we discuss some of the features of the obstacle-reshaping operator $\mathbf{M}.$

\begin{remark}
Consider the modified set $\mathcal{O}_{\mathcal{W}}^M$ obtained after applying the operator $\mathbf{M}$ on the set $\mathcal{O}_{\mathcal{W}}$ with $\alpha > 0.$ Some of the features of the obstacle-reshaping operator $\mathbf{M}$, as stated in \cite[Table 1]{serra1986introduction}, are as follows:\\
\textbf{Idempotent:} the application of the transformation $\mathbf{M}$ to a modified set $\mathcal{O}_{\mathcal{W}}^M$ with the same structuring element (the open Euclidean ball $\mathcal{B}_{\alpha}^{\circ}(\mathbf{0})$), does not change the set $\mathcal{O}_{\mathcal{W}}^M$ \textit{i.e.,} 
\begin{equation}
\mathbf{M}(\mathbf{M}(\mathcal{O}_{\mathcal{W}}, \alpha), \alpha) = \mathbf{M}(\mathcal{O}_{\mathcal{W}},\alpha) = \mathcal{O}_{\mathcal{W}}^M.
\end{equation}\\
\textbf{Extensive:} the modified set always contains the original set \textit{i.e.}, $\mathcal{O}_{\mathcal{W}}\subset\mathcal{O}_{\mathcal{W}}^M.$\\
\textbf{Increasing:} for any subset $\mathcal{A}\subset\mathcal{O}_{\mathcal{W}}$, the modified set $\mathcal{A}^M$ always belongs to the modified set $\mathcal{O}_{\mathcal{W}}^M$ \textit{i.e.}, $\mathcal{A}^M\subset\mathcal{O}_{\mathcal{W}}^M.$
\label{remark:features}
\end{remark}

Notice that, by duality of dilation and erosion \cite[Theorem 25]{haralick1987image}, the dilation of a set, with an open Euclidean ball centered at the origin as the structuring element, is equivalent to the erosion of the complement of that set with the same structuring element. This allows us to provide alternative representations of the proposed obstacle-reshaping operator, as stated in the next remark.

\begin{remark}\label{remark:alternate_modified_obstacle}
The $\alpha-$eroded obstacle-free workspace $\mathcal{W}_{\alpha}$, defined according to \eqref{y_free_workspace}, is equivalent to the complement of the set obtained after dilating $\mathcal{O}_{\mathcal{W}}$ with the open Euclidean ball of radius $\alpha$ centered at the origin $\mathcal{B}_{\alpha}^{\circ}(\mathbf{0})$ \textit{i.e.,} $\mathcal{W}_{\alpha} = (\mathcal{O}_{\mathcal{W}} \oplus\mathcal{B}_{\alpha}^{\circ}(\mathbf{0}))^c$. Therefore, by duality of dilation and erosion \cite[Theorem 25]{haralick1987image}, the modified obstacle-occupied workspace is equivalent to the complement of the set obtained after dilating $\mathcal{W}_{\alpha}$ by the open Euclidean ball of radius $\alpha$ centered at the origin $\mathcal{B}_{\alpha}^{\circ}(\mathbf{0})$. In other words,
\begin{equation}
    \mathbf{M}(\mathcal{O}_{\mathcal{W}}, \alpha) = (\mathcal{W}_{\alpha}\oplus\mathcal{B}_{\alpha}^{\circ}(\mathbf{0}))^c = \mathcal{W}_{\alpha}^c\ominus\mathcal{B}_{\alpha}^{\circ}(\mathbf{0}).\label{alternate_modified_obstacles}
\end{equation}
\end{remark}

The operator $\mathbf{M}$ does not guarantee a unique projection onto the modified obstacle from every point in its $\alpha-$neighbourhood. To illustrate this fact, we consider an environment with two obstacles $\mathcal{O}_{\mathcal{W}} = \mathcal{O}_i\cup\mathcal{O}_j$ such that $d(\mathcal{O}_i, \mathcal{O}_j) < 2 \alpha,$ as shown in Fig. \ref{diagram:assumption_requirement}. In Fig. \ref{diagram:assumption_requirement}b, one can see that the operation $\mathbf{M}(\mathcal{O}_{\mathcal{W}}, \alpha)$ has fused these obstacles into a single set, represented in blue. However, depending on the arrangement of the obstacles, it may happen that even though $d(\mathcal{O}_i, \mathcal{O}_j)< 2 \alpha$, the modified obstacle-occupied workspace $\mathcal{O}_{\mathcal{W}}^M$ contains two disjoint modified obstacles which are less than $2\alpha$ distance apart from each other, as shown in Fig. \ref{diagram:assumption_requirement}a. In this case, it is possible to find a location $\mathbf{x}$ less than $\alpha$ distance away from the set $\mathcal{O}_{\mathcal{W}}^M$ which has multiple closest points on the set $\mathcal{O}_{\mathcal{W}}^M$, as shown in Fig. \ref{diagram:assumption_requirement}a. 

Observe that in Fig. \ref{diagram:assumption_requirement}a, the open Euclidean balls of radius $\alpha$ centered at the locations $\mathbf{c}_1$ and $\mathbf{c}_2$ intersect each other. Hence, when the set $\mathcal{O}_{\mathcal{W}}\oplus\mathcal{B}_{\alpha}^{\circ}(\mathbf{0})$ is eroded to obtain the modified set \eqref{obstacle_modification_step}, two disjoint modified obstacles are obtained, even though $d(\mathcal{O}_i, \mathcal{O}_j)<2\alpha$. As a result, at the location $\mathbf{x}$ inside the $\alpha-$dilated modified obstacle, one can get multiple projections, as shown in Fig. \ref{diagram:assumption_requirement}a.

\begin{figure}
    \centering
    \includegraphics[width = 1\linewidth]{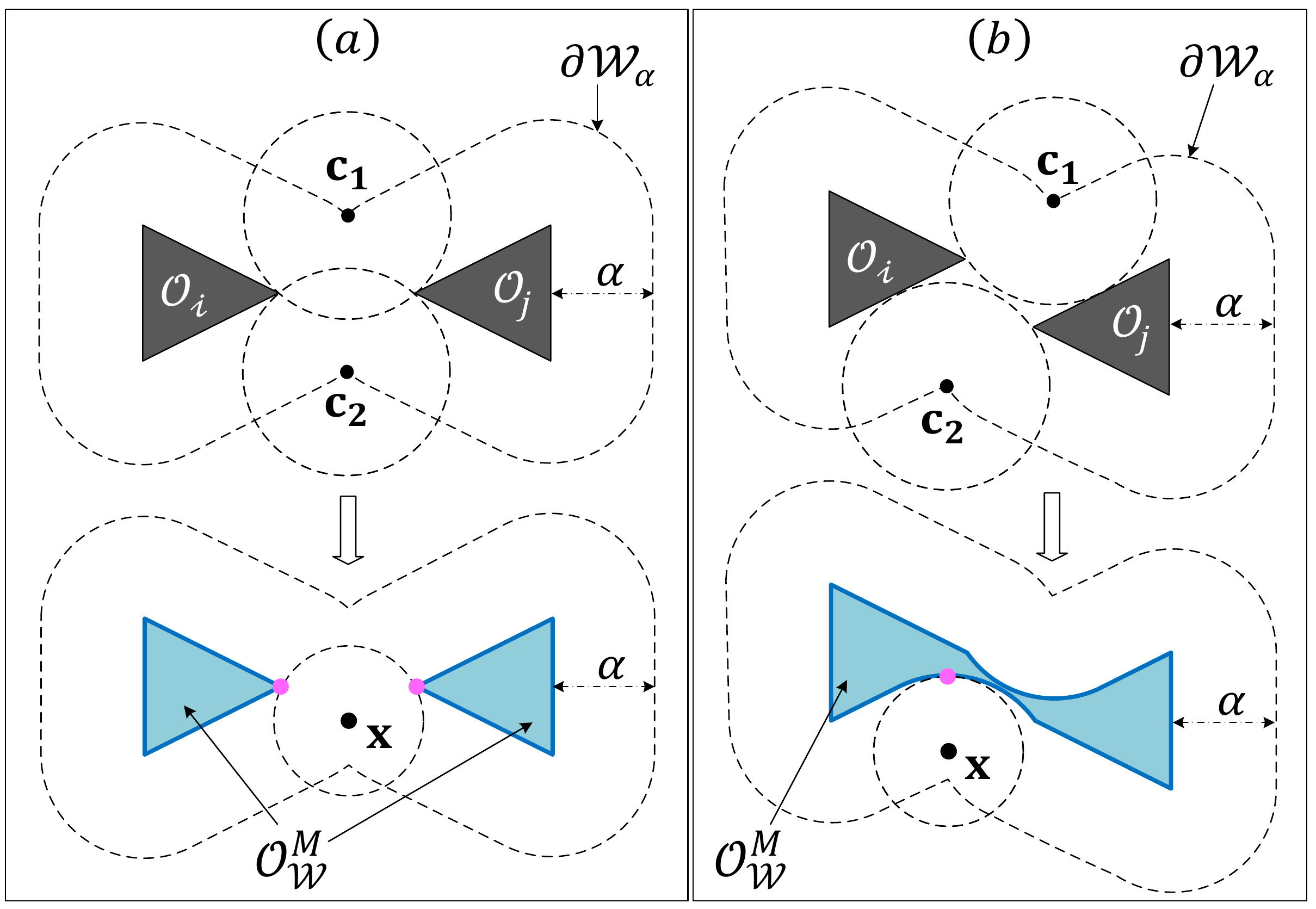}
    \caption{Workspace with two obstacles $\mathcal{O}_{\mathcal{W}} = \mathcal{O}_i\cup\mathcal{O}_j$ such that $d(\mathcal{O}_i, \mathcal{O}_j)<2\alpha$. Left figure shows that the set $\mathcal{O}_{\mathcal{W}}^M$ is not a connected set. Right figure shows that the set $\mathcal{O}_{\mathcal{W}}^M$ is a connected set.}
    \label{diagram:assumption_requirement}
\end{figure}

To guarantee a unique projection onto the modified obstacle from all locations inside its $\alpha-$neighbourhood, we require the following assumption on the obstacle-occupied workspace:

\begin{assumption}
    For all $\mathbf{x}\in\partial\mathcal{W}_{\alpha}$ and for every $\mathbf{n}\in\mathbf{N}_{\mathcal{W}_{\alpha}}(\mathbf{x})$ with $\|\mathbf{n}\| = 1$, the intersection $\mathcal{B}_{\alpha}^{\circ}(\mathbf{x} + \alpha\mathbf{n})\cap\mathcal{W}_{\alpha}$ is an empty set, where $\alpha > 0$ such that $\mathcal{W}_{\alpha}\ne\emptyset$.\label{Assumption:reach}
\end{assumption}

According to Assumption \ref{Assumption:reach}, for all $\mathbf{x}$ on the boundary of the $\alpha-$eroded obstacle-free workspace $\mathcal{W}_{\alpha}$ and for every unit normal vector $\mathbf{n}$ to $\mathcal{W}_{\alpha}$ at $\mathbf{x}$, the open Euclidean ball $\mathcal{B}_{\alpha}(\mathbf{x} + \alpha\mathbf{n})$ does not intersect with the set $\mathcal{W}_{\alpha}.$ 
Figure \ref{diag:reach_diagrammatic_representation}a shows a two-dimensional workspace that does not satisfy Assumption \ref{Assumption:reach}, whereas the inter-obstacle arrangement shown in Fig. \ref{diag:reach_diagrammatic_representation}b satisfies Assumption \ref{Assumption:reach}.

Unlike \cite[Assumprtion 1]{arslan2019sensor}, \cite[Assumprtion 2]{verginis2021adaptive}, and \cite[Section V-C3]{berkane2021obstacle}, Assumption \ref{Assumption:reach} does not impose restrictions on the minimum separation between any pair of obstacles and allows obstacles to be non-convex. If fact, if one assumes (as in the above mentioned references) that the obstacles are convex and the minimum separation between any pair of obstacles is greater than $2r_a$, then Assumption \ref{Assumption:reach} along with Assumption \ref{assumption:connected_interior} are satisfied, as stated later in Proposition \ref{lemma:assumption_equivalence}.

\begin{figure}
    \centering
    \includegraphics[width = 1\linewidth]{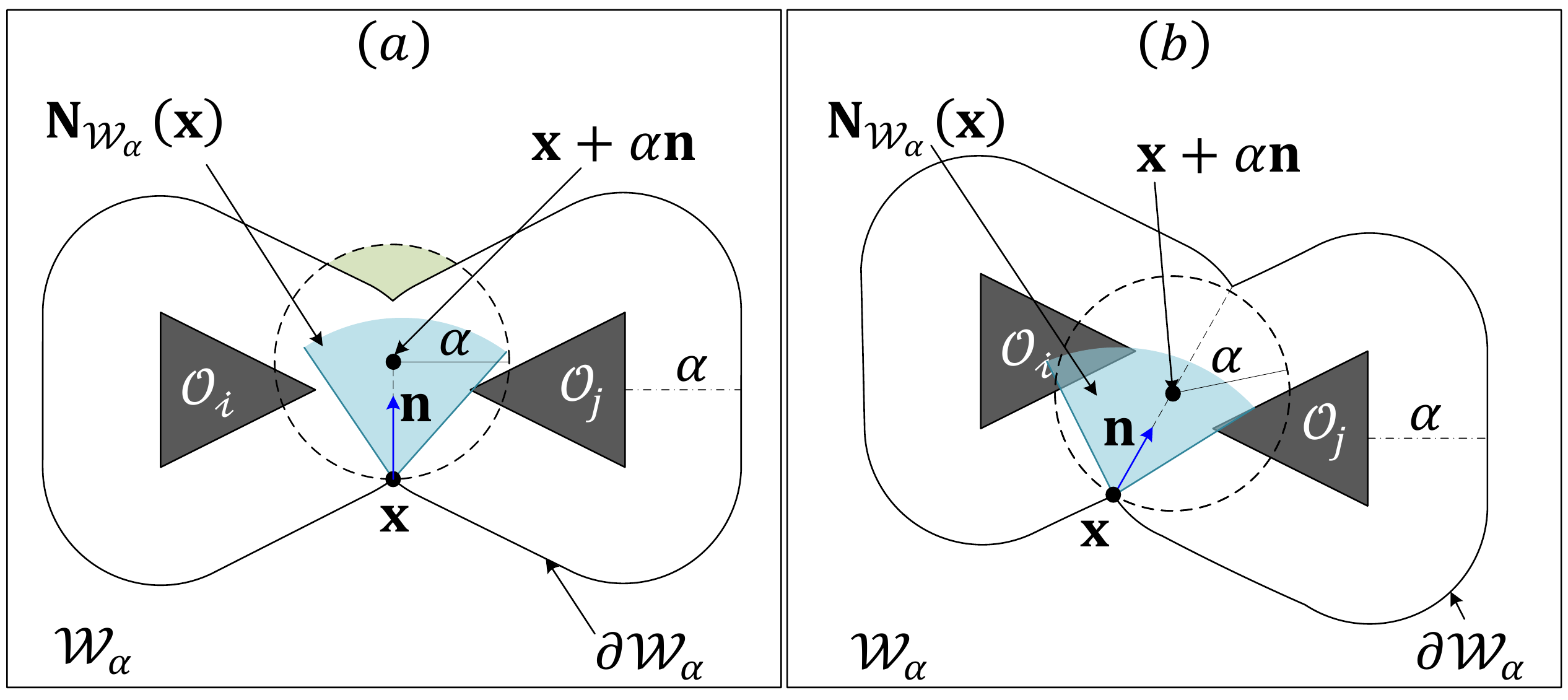}
    \caption{Workspace with two obstacle $\mathcal{O}_{\mathcal{W}} = \mathcal{O}_i\cup\mathcal{O}_j$ such that $d(\mathcal{O}_i, \mathcal{O}_j) < 2\alpha$. (a) $\mathcal{O}_{\mathcal{W}}$ does not satisfy Assumption \ref{Assumption:reach}. (b) $\mathcal{O}_{\mathcal{W}}$ satisfies Assumption \ref{Assumption:reach}.}
    \label{diag:reach_diagrammatic_representation}
\end{figure}

Next, we show that if $\mathcal{O}_{\mathcal{W}}$ satisfies Assumption \ref{Assumption:reach} for some $\alpha > 0$, then from any location in the $\alpha-$neighbourhood of the modified obstacles $\mathcal{O}_{\mathcal{W}}^M$, obtained using \eqref{obstacle_modification_step}, the projection onto $\mathcal{O}_{\mathcal{W}}^M$ is always unique.

\begin{lemma} Let Assumption \ref{Assumption:reach} hold, then, for all locations $\mathbf{x}$ less than $\alpha$ distance away from the modified obstacle-occupied workspace, there is a unique closest point from $\mathbf{x}$ to the set $\mathcal{O}_{\mathcal{W}}^M$ \textit{i.e.,} $\forall\mathbf{x}\in\mathcal{D}_{\alpha}((\mathcal{O}_{\mathcal{W}}^M)^{\circ}), \mathbf{card}(\mathcal{PJ}(\mathbf{x}, \mathcal{O}_{\mathcal{W}}^M)) = 1.$
\label{lemma:unique_projection}
\end{lemma}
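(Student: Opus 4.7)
The plan is to show that $\mathbf{reach}(\mathcal{O}_{\mathcal{W}}^M) \geq \alpha$, after which the definition of reach in Section~\ref{section:set_with_positive_reach} forces every point within $\alpha$ of $\mathcal{O}_{\mathcal{W}}^M$ to have a unique nearest point. The entry point is Remark~\ref{remark:alternate_modified_obstacle}, by which $\mathcal{O}_{\mathcal{W}}^M = (\mathcal{W}_{\alpha} \oplus \mathcal{B}_{\alpha}^{\circ}(\mathbf{0}))^c$; this is exactly the shape on which Lemma~\ref{lemma:reach_property} operates with $\mathcal{A} = \mathcal{W}_{\alpha}$, so once $\mathbf{reach}(\mathcal{W}_{\alpha}) \geq \alpha$ is established, the conclusion $\mathbf{reach}(\mathcal{O}_{\mathcal{W}}^M) \geq \alpha$ drops out of Lemma~\ref{lemma:reach_property} with no further work.

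The central step is therefore to extract $\mathbf{reach}(\mathcal{W}_{\alpha}) \geq \alpha$ from Assumption~\ref{Assumption:reach}. This is the classical Federer external-rolling-ball characterization of positive reach: a closed set $\mathcal{A}$ satisfies $\mathbf{reach}(\mathcal{A}) \geq \alpha$ if and only if for every $\mathbf{x} \in \partial\mathcal{A}$ and every unit $\mathbf{n} \in \mathbf{N}_{\mathcal{A}}(\mathbf{x})$ the open ball $\mathcal{B}_{\alpha}^{\circ}(\mathbf{x} + \alpha \mathbf{n})$ is disjoint from $\mathcal{A}$. Since this is exactly what Assumption~\ref{Assumption:reach} asserts for $\mathcal{W}_{\alpha}$, one may either cite this equivalence from \cite{rataj2019curvature} or verify it in place: pick any $\mathbf{y} \in \mathcal{W}_{\alpha}^c$ with $d(\mathbf{y}, \mathcal{W}_{\alpha}) < \alpha$ and any $\mathbf{x}_0 \in \mathcal{PJ}(\mathbf{y}, \mathcal{W}_{\alpha})$, observe that $\mathbf{n}_0 = (\mathbf{y}-\mathbf{x}_0)/\|\mathbf{y}-\mathbf{x}_0\|$ is a unit vector in $\mathbf{N}_{\mathcal{W}_{\alpha}}(\mathbf{x}_0)$, and show that any competing nearest point $\mathbf{x}_1 \ne \mathbf{x}_0$ would be forced into $\mathcal{B}_{\alpha}^{\circ}(\mathbf{x}_0 + \alpha\mathbf{n}_0)$, contradicting Assumption~\ref{Assumption:reach}. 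This delivers $\mathcal{B}_{\alpha}^{\circ}(\mathbf{x}_0) \subset \text{Unp}(\mathcal{W}_{\alpha})$ for every $\mathbf{x}_0 \in \mathcal{W}_{\alpha}$, hence $\mathbf{reach}(\mathcal{W}_{\alpha}) \geq \alpha$.

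With $\mathbf{reach}(\mathcal{O}_{\mathcal{W}}^M) \geq \alpha$ in hand via Lemma~\ref{lemma:reach_property}, I would close the proof by translating the reach bound into the pointwise claim. For $\mathbf{x} \in \mathcal{D}_{\alpha}((\mathcal{O}_{\mathcal{W}}^M)^\circ)$ there exists $\mathbf{p} \in (\mathcal{O}_{\mathcal{W}}^M)^\circ$ with $\|\mathbf{x} - \mathbf{p}\| \leq \alpha$; since $\mathbf{p}$ lies in the interior, a small slide of $\mathbf{p}$ along $(\mathbf{x}-\mathbf{p})/\|\mathbf{x}-\mathbf{p}\|$ (or the choice $\mathbf{p}' = \mathbf{p}$ when $\mathbf{x}=\mathbf{p}$) produces $\mathbf{p}' \in \mathcal{O}_{\mathcal{W}}^M$ with $\|\mathbf{x}-\mathbf{p}'\| < \alpha$, so that $\mathbf{x} \in \mathcal{B}_{\alpha}^{\circ}(\mathbf{p}') \subset \text{Unp}(\mathcal{O}_{\mathcal{W}}^M)$ and therefore $\mathbf{card}(\mathcal{PJ}(\mathbf{x}, \mathcal{O}_{\mathcal{W}}^M)) = 1$. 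The main obstacle is the middle step---bridging the external-tangent-ball form of Assumption~\ref{Assumption:reach} and the unique-projection-neighbourhood form of reach; the subtle point is that at non-smooth boundary points of $\mathcal{W}_{\alpha}$ the normal cone contains several unit directions, and the quantification over \emph{every} such $\mathbf{n}$ in Assumption~\ref{Assumption:reach} is precisely what is needed to rule out multiple nearest points in that case.
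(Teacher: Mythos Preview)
Your proposal is correct and follows essentially the same route as the paper: reduce the claim to $\mathbf{reach}(\mathcal{O}_{\mathcal{W}}^M)\geq\alpha$, invoke Remark~\ref{remark:alternate_modified_obstacle} to write $\mathcal{O}_{\mathcal{W}}^M=(\mathcal{W}_{\alpha}\oplus\mathcal{B}_{\alpha}^{\circ}(\mathbf{0}))^c$, apply Lemma~\ref{lemma:reach_property}, and then verify $\mathbf{reach}(\mathcal{W}_{\alpha})\geq\alpha$ from Assumption~\ref{Assumption:reach}. The only cosmetic difference is in this last verification: the paper appeals to \cite[Proposition 4.14]{rataj2019curvature} (the tangent-cone criterion $d(\mathbf{p}-\mathbf{x},\mathbf{T}_{\mathcal{W}_{\alpha}}(\mathbf{x}))\leq\|\mathbf{p}-\mathbf{x}\|^2/2\alpha$) rather than the direct rolling-ball argument you sketch, but both are standard and equivalent characterizations of positive reach.
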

\begin{proof}
See Appendix \ref{proof:unique_projection}.
\end{proof}

Notice that in Fig. \ref{diagram:assumption_requirement}b, even though initially obstacles were disjoint, the obstacle-reshaping operator combined them into one pathwise connected modified obstacle set. In fact, for the obstacle-occupied workspace that satisfies Assumption \ref{Assumption:reach}, if two obstacles are less than $2\alpha$ distance apart, then the modified obstacle set obtained for the union of these two obstacles is a connected set. Next, we elaborate on this feature of the proposed obstacle-reshaping operator.

For each obstacle $\mathcal{O}_{i}, i\in\mathbb{I},$ we define the following set:
\begin{equation}
    \mathcal{O}_{i, \alpha} = \bigcup_{k\in\mathbb{I}_{i, \alpha}}\bigcup_{j\in\mathbb{I}_{k, \alpha}}\mathcal{O}_j,\label{obstacle_alpha_chain}
\end{equation}
where, for $k\in\mathbb{I},$ the set $\mathbb{I}_{k, \alpha}$, which is defined as
\begin{equation}
    \mathbb{I}_{k, \alpha} = \{j\in\mathbb{I}|d(\mathcal{O}_k, \mathcal{O}_j) < 2\alpha\},\label{obstacles_less_than_2alpha}
\end{equation}
contains the indices corresponding to obstacles that are at distances less than $2\alpha$ from obstacle $\mathcal{O}_k.$ According to \eqref{obstacle_alpha_chain} and \eqref{obstacles_less_than_2alpha}, the distance between any proper subset of the obstacle set $\mathcal{O}_{i, \alpha}$ and its relative complement with respect to the same set, is always less than $2\alpha.$ In other words, if $\mathcal{O}_{\mathbb{C}}\subset\mathcal{O}_{i, \alpha}$ and $ \mathcal{O}_{\mathbb{D}} = \mathcal{O}_{i, \alpha}\setminus\mathcal{O}_{\mathbb{C}},$ then the distance $d(\mathcal{O}_{\mathbb{C}},\mathcal{O}_{\mathbb{D}}) <2\alpha.$ If $\mathcal{O}_j\subset\mathcal{O}_{i, \alpha}, j\in\mathbb{I}\setminus\{i\}$, then $\mathcal{O}_{i, \alpha} = \mathcal{O}_{j, \alpha}$. Next, we show that the modified obstacle-occupied set $\mathcal{O}_{i, \alpha}^M$, for any $i\in\mathbb{I}$, is a connected set. 

\begin{lemma}\label{lemma:pathwise_connected}
    Let Assumption \ref{Assumption:reach} hold, then the modified obstacle-occupied set $\mathcal{O}_{i, \alpha}^M$, for any $i\in\mathbb{I},$ is a connected set.
\end{lemma}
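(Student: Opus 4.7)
The strategy is to prove that $\mathcal{O}_{i,\alpha}^M$ coincides with a single connected component of the modified workspace $\mathcal{O}_{\mathcal{W}}^M$, using the positive reach of $\mathcal{O}_{\mathcal{W}}^M$ inherited from Assumption \ref{Assumption:reach} together with the chain structure of $\mathcal{O}_{i,\alpha}$ laid out in \eqref{obstacle_alpha_chain}.

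First, I would establish that $\mathbf{reach}(\mathcal{O}_{\mathcal{W}}^M) \geq \alpha$. By Remark \ref{remark:alternate_modified_obstacle}, the modified obstacle-occupied workspace admits the complement representation $\mathcal{O}_{\mathcal{W}}^M = (\mathcal{W}_\alpha \oplus \mathcal{B}_\alpha^\circ(\mathbf{0}))^c$. Assumption \ref{Assumption:reach} is exactly Federer's exterior-ball characterization applied to the closed set $\mathcal{W}_\alpha$ and hence encodes $\mathbf{reach}(\mathcal{W}_\alpha) \geq \alpha$. Invoking Lemma \ref{lemma:reach_property} with $\mathcal{A} = \mathcal{W}_\alpha$ then transfers the bound to $\mathbf{reach}(\mathcal{O}_{\mathcal{W}}^M) \geq \alpha$.

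Second, I would establish the general principle that any two distinct connected components of a closed set with reach at least $\alpha$ are separated by at least $2\alpha$. The argument is the standard midpoint lemma: choose a closest pair of representatives $y_1, y_2$ across two components, and assuming $\|y_1 - y_2\| < 2\alpha$ for contradiction, the midpoint $y = (y_1 + y_2)/2$ lies in $\mathcal{B}_\alpha^\circ(y_1) \subset \text{Unp}(\mathcal{O}_{\mathcal{W}}^M)$ and yet admits both $y_1$ and $y_2$ as equidistant closest points in $\mathcal{O}_{\mathcal{W}}^M$, violating uniqueness of projection.

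Third, I would use the chain definition and the extensive property from Remark \ref{remark:features} to show that all original obstacles in $\mathcal{O}_{i,\alpha}$ lie in a single component $C^\star$ of $\mathcal{O}_{\mathcal{W}}^M$. For any consecutive pair $\mathcal{O}_j, \mathcal{O}_k \subset \mathcal{O}_{i,\alpha}$ with $d(\mathcal{O}_j, \mathcal{O}_k) < 2\alpha$, step two forces them into the same component, and transitivity along the chain \eqref{obstacle_alpha_chain}--\eqref{obstacles_less_than_2alpha} yields the common component $C^\star$. For an arbitrary $x \in \mathcal{O}_{i,\alpha}^M$, the erosion definition gives $d(x, \mathcal{O}_{i,\alpha}) < \alpha$, so $x$ is within $2\alpha$ of some $\mathcal{O}_j \subset \mathcal{O}_{i,\alpha}$ and therefore belongs to $C^\star$. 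The reverse inclusion $C^\star \subset \mathcal{O}_{i,\alpha}^M$ follows because every obstacle not in $\mathcal{O}_{i,\alpha}$ is at distance at least $2\alpha$ from $\mathcal{O}_{i,\alpha}$, making the $\alpha$-dilations of the two families disjoint so that $\mathcal{O}_{\mathcal{W}}^M$ decomposes as $\mathcal{O}_{i,\alpha}^M$ together with disjoint modified chunks from other chains. The resulting equality $\mathcal{O}_{i,\alpha}^M = C^\star$ certifies connectedness.

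The main obstacle I anticipate is making explicit the equivalence invoked in the first step between Assumption \ref{Assumption:reach} and $\mathbf{reach}(\mathcal{W}_\alpha) \geq \alpha$; although this is Federer's classical exterior-ball characterization, a clean invocation requires a careful treatment of the normal-cone conventions at the non-smooth boundary points of $\mathcal{W}_\alpha$. All remaining steps are straightforward separation arguments combined with the graph-theoretic transitivity built into the definition of $\mathcal{O}_{i,\alpha}$.
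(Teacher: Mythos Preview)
Your proposal is correct and shares the same core mechanism as the paper's proof---namely, that Assumption~\ref{Assumption:reach} forces $\mathbf{reach}(\mathcal{O}_{\mathcal{W}}^M)\geq\alpha$, after which a midpoint argument shows that pieces of the modified set cannot be closer than $2\alpha$---but the two proofs are organized differently. The paper argues by direct contradiction on $\mathcal{O}_{i,\alpha}^M$: assuming a split $\mathcal{M}_1\cup\mathcal{M}_2$, it uses the chain property \eqref{obstacle_alpha_chain} and extensivity to force $d(\mathcal{M}_1,\mathcal{M}_2)<2\alpha$, takes the midpoint of a closest pair, and invokes Lemma~\ref{lemma:unique_projection} (already proved) to obtain the contradiction. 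Your route instead works globally in $\mathcal{O}_{\mathcal{W}}^M$, re-derives the reach bound that is the content of Lemma~\ref{lemma:unique_projection} in your step~1, establishes the $2\alpha$-separation of components as a standalone principle, and then proves the \emph{stronger} structural fact $\mathcal{O}_{i,\alpha}^M=C^\star$ for a single connected component $C^\star$ of $\mathcal{O}_{\mathcal{W}}^M$. Your approach buys a cleaner identification of $\mathcal{O}_{i,\alpha}^M$ within $\mathcal{O}_{\mathcal{W}}^M$ (useful later in the paper), at the cost of the extra reverse-inclusion step and the dilation-decomposition argument; the paper's approach is shorter because it simply cites Lemma~\ref{lemma:unique_projection} rather than re-proving the reach bound. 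Note that your step~1 and the ``main obstacle'' you flag are already handled verbatim in the proof of Lemma~\ref{lemma:unique_projection}, so within the paper's lemma structure you could replace that step by a citation.
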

\begin{proof}
See Appendix \ref{proof:pathwise_connected}
\end{proof}

{According to Lemma \ref{lemma:pathwise_connected}, if the modified obstacle-occupied workspace contains two disjoint modified obstacles, then the distance between these two modified obstacles will always be greater than or equal to $2\alpha.$

\begin{remark}
When the obstacle is non-convex, the modified obstacle obtained using the operator $\mathbf{M}$, defined in \eqref{obstacle_modification_step}, always occupies less workspace as opposed to the convex hull \cite[Section 2.1.4]{boyd2004convex} of the same obstacle. Although, one obtains a unique projection onto the convex hull of a given obstacle within its $\alpha-$neighbourhood, the use of the convex hull renders most of the obstacle-free workspace unavailable for the robot's navigation, as shown in Fig. \ref{diagram:modified_convexhull}. Moreover, if a given obstacle $\mathcal{O}_i$ is convex, then the modified obstacle $\mathcal{O}_i^M$ obtained using \eqref{obstacle_modification_step} for any $\alpha > 0$ is equal to the original obstacle $\mathcal{O}_i$ \textit{i.e.}, $\mathcal{O}_i^M = \mathcal{O}_i.$
\end{remark}

Since, as per Lemma \ref{lemma:unique_projection}, from any location less than $\alpha$ distance away from the modified obstacles $\mathcal{O}_{\mathcal{W}}^M$, the projection on the set $\mathcal{O}_{\mathcal{W}}^M$ is unique, one can roll up an Euclidean ball of radius at most $\alpha$ on the boundary $\partial\mathcal{O}_{\mathcal{W}}^M$, as stated in \cite[Definition 11]{thale200850}. This motivates an alternative procedure to obtain the boundary of the set $\mathcal{O}_{\mathcal{W}}^M$ when the set $\mathcal{O}_{\mathcal{W}}$ is two-dimensional, by having a virtual ring of radius $\alpha$ rolling on the boundary of the set $\mathcal{O}_{\mathcal{W}}$, as stated in the next remark.

\begin{figure}[ht]
    \centering
    \includegraphics[width = 1\linewidth]{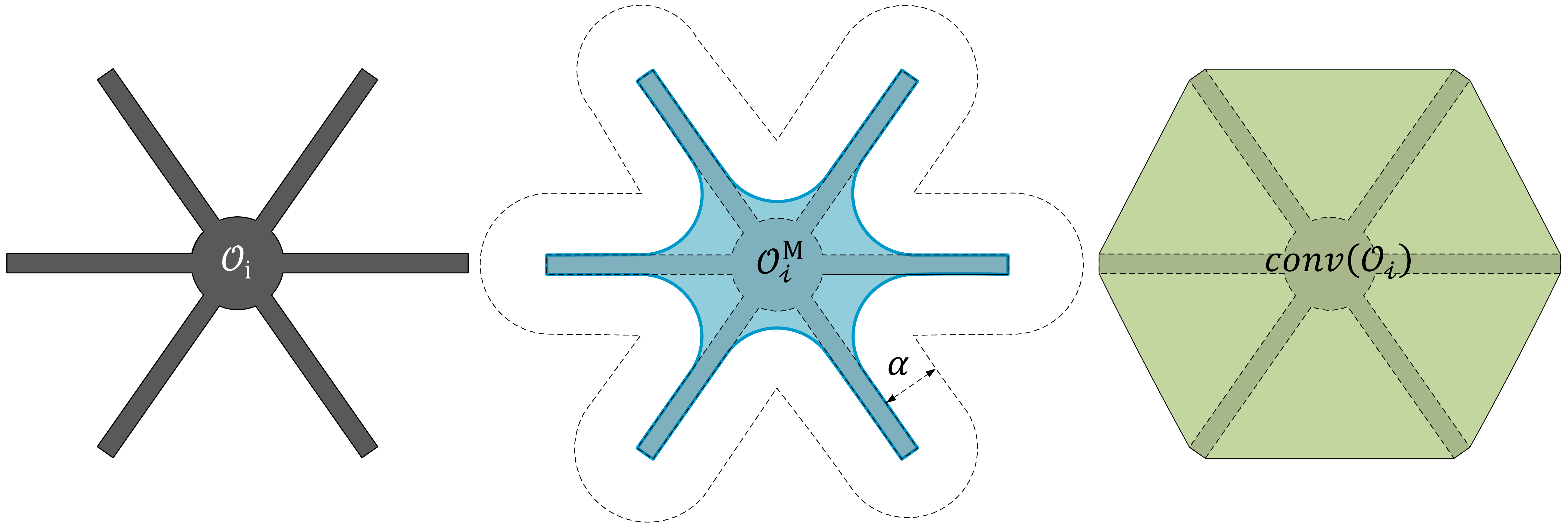}
    \caption{Left figure shows the original obstacle $\mathcal{O}_i\subset\mathbb{R}^2$. Middle figure shows the modified obstacle $\mathcal{O}_{{i}}^M = \mathbf{M}(\mathcal{O}_i, \alpha)$ obtained using \eqref{obstacle_modification_step}. Right figure shows the convex hull $\text{conv}(\mathcal{O}_i)$ for the obstacle $\mathcal{O}_i$.}
    \label{diagram:modified_convexhull}
\end{figure}

\begin{remark}
Given an obstacle-occupied workspace $\mathcal{O}_{\mathcal{W}}\subset\mathbb{R}^2$, satisfying Assumptions \ref{assumption:connected_interior} and \ref{Assumption:reach}, one can construct the modified obstacle $\mathcal{O}_{i, \alpha}^M = \mathbf{M}(\mathcal{O}_{i, \alpha}, \alpha)$, where $i\in\mathbb{I},$ by rotating a virtual ring, of radius $\alpha$ and center $\mathbf{c}$, around the set $\mathcal{O}_{i, \alpha}$, just touching the set $\mathcal{O}_{i, \alpha}$, while ensuring that the ring does not intersect with the interior of that set, as shown in Fig. \ref{program:modified_obstacle_generation}. Then, based on the number of projections of the location $\mathbf{c}$ on the obstacle set $\mathcal{O}_{i, \alpha}$ \textit{i.e.}, $\mathbf{card}(\mathcal{PJ}(\mathbf{c}, \mathcal{O}_{i, \alpha}))$, we construct the boundary of the modified obstacle set $\partial\mathcal{O}_{i, \alpha}^M$ as follows:
\begin{itemize}
    \item if the ring $\partial\mathcal{B}_{\alpha}(\mathbf{c})$ is touching the set $\mathcal{O}_{i, \alpha}$ at a single location, then include that location in the set $\partial\mathcal{O}_{i, \alpha}^{M}$. That is, if $\mathbf{card}(\mathcal{PJ}(\mathbf{c}, \mathcal{O}_{i, \alpha}))=1$, then  $\Pi(\mathbf{c}, \mathcal{O}_{i, \alpha})\in\partial\mathcal{O}_{i, \alpha}^{M}$.
    \item if the ring $\partial\mathcal{B}_{\alpha}(\mathbf{c})$ is simultaneously touching the set $\mathcal{O}_{i, \alpha}$ at more than one location, then include in the set $\partial\mathcal{O}_{i, \alpha}^M$ the part of the ring $\partial\mathcal{B}_{\alpha}(\mathbf{c})$ which intersects the conic hull of the set $\mathcal{PJ}(\mathbf{c}, \mathcal{O}_{i, \alpha})$ with its vertex at $\mathbf{c}$. That is, if $\mathbf{card}(\mathcal{PJ}(\mathbf{c}, \mathcal{O}_{i, \alpha})) > 1$, then  $\mathcal{Y}(\mathbf{c})\subset\partial\mathcal{O}_{i, \alpha}^M$, where the set $\mathcal{Y}(\mathbf{c})$ is given by
    \begin{equation}
    \mathcal{Y}(\mathbf{c}) = \mathcal{CH}(\mathbf{c}, \mathcal{PJ}(\mathbf{c}, \mathcal{O}_{i, \alpha}))\cap\partial\mathcal{B}_{\alpha}(\mathbf{c}).
    \end{equation}
\end{itemize}
\label{remark:virtual_ring}
\end{remark}

We consider the modified obstacle-occupied workspace $\mathcal{O}_{\mathcal{W}}^M\subset\mathcal{W}$ to be the region that the robot should avoid. The set $\mathcal{V}_{r_a}$, defined in \eqref{y_modified_free_workspace}, represents the modified obstacle-free workspace for the center of the robot.

We require the set $\mathcal{V}_{r_a}$ to be a pathwise connected set. However, the pathwise connectedness of the set $\mathcal{V}_{r_a}$ mainly depends on the value of the parameter $\alpha.$ For example, see Fig. \ref{diagram:alpha_should_not_be_very_large} in which the set $\mathcal{V}_{r_a}$ is not connected due to improper selection of the parameter $\alpha$. To that end, we require the following lemma:

\begin{figure}
    \centering
    \includegraphics[width = 0.9\linewidth]{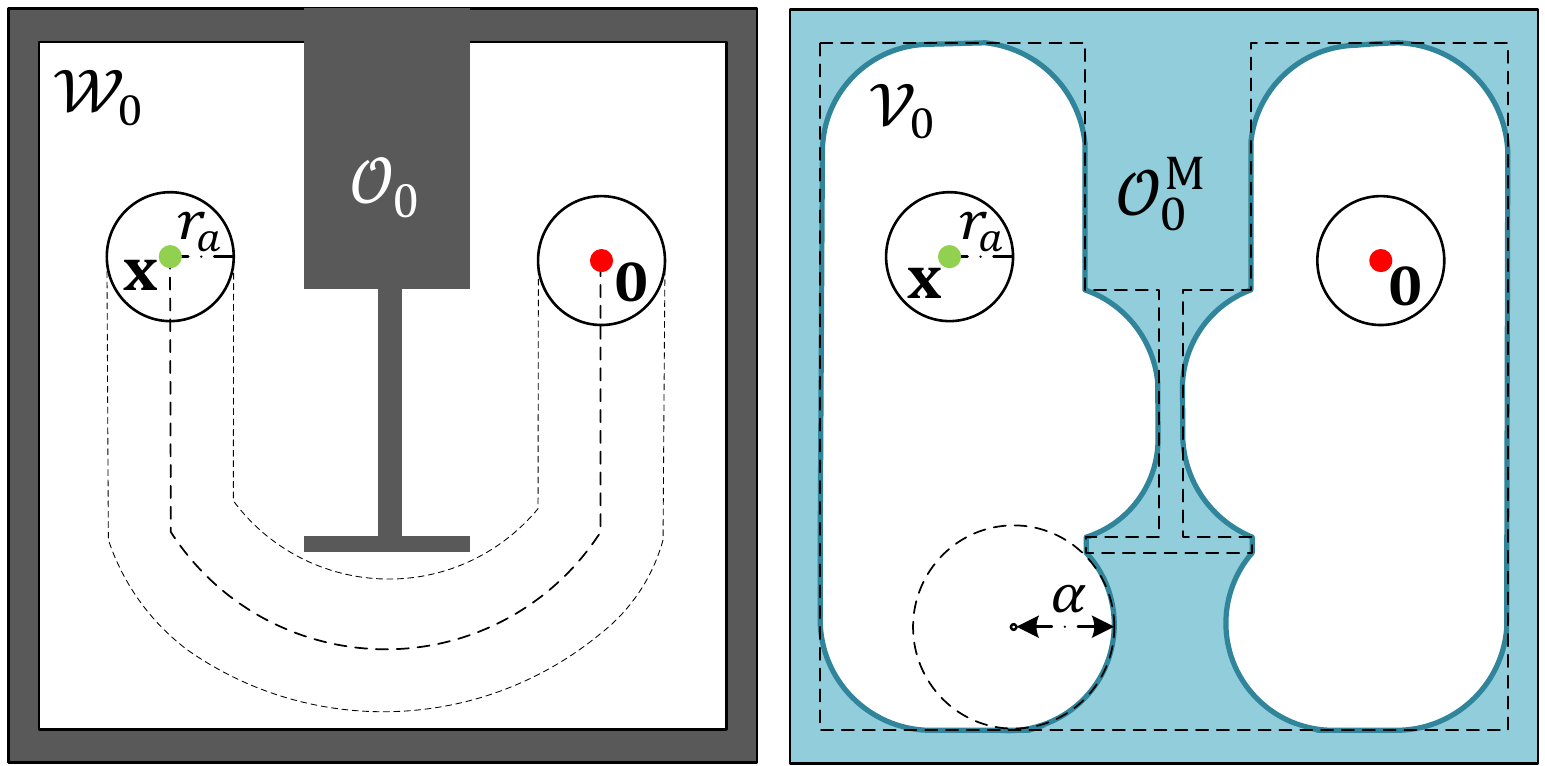}
    \caption{The left figure shows an original workspace where a feasible path exists from location $\mathbf{x}$ to the origin. The right figure shows the modified workspace obtained using \eqref{obstacle_modification_step} with some $\alpha > r_a$, where there is no feasible path from the location $\mathbf{x}$ to the origin.}
    \label{diagram:alpha_should_not_be_very_large}
\end{figure}
\begin{lemma}
    Under Assumption \ref{assumption:connected_interior}, there exists $\bar{\alpha} > r_a$ such that for all $\alpha\in(r_a, \bar{\alpha}]$ the following conditions are satisfied:
    \begin{enumerate}
        \item the $\alpha-$eroded obstacle-free workspace $\mathcal{W}_{\alpha}$ is a pathwise connected set,
        \item the distance between the origin and the set $\mathcal{W}_{\alpha}$ is less than ${\alpha}-r_a$.
    \end{enumerate}
    \label{lemma:alpha_existence}
\end{lemma}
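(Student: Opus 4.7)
My plan is to verify the two conditions separately, produce bounds $\bar{\alpha}_1,\bar{\alpha}_2>r_a$ for each, and take $\bar{\alpha}:=\min(\bar{\alpha}_1,\bar{\alpha}_2)$. Condition~2 is the easier direction: since $\mathbf{0}\in\mathcal{W}_{r_a}^{\circ}$, openness gives some $\delta_0>0$ with $\mathcal{B}_{\delta_0}(\mathbf{0})\subset\mathcal{W}_{r_a}$. A short geometric argument promotes this to $d(\mathbf{0},\mathcal{O}_{\mathcal{W}})\geq r_a+\delta_0$: pick any $\mathbf{a}\in\mathcal{O}_{\mathcal{W}}$, consider $\mathbf{y}:=\delta_0\mathbf{a}/\|\mathbf{a}\|\in\partial\mathcal{B}_{\delta_0}(\mathbf{0})\subset\mathcal{W}_{r_a}$, and note that $\|\mathbf{a}\|-\delta_0=\|\mathbf{a}-\mathbf{y}\|\geq d(\mathbf{y},\mathcal{O}_{\mathcal{W}})\geq r_a$. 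Setting $\bar{\alpha}_2:=r_a+\delta_0$, every $\alpha\in(r_a,\bar{\alpha}_2]$ satisfies $\mathbf{0}\in\mathcal{W}_\alpha$ and hence $d(\mathbf{0},\mathcal{W}_\alpha)=0<\alpha-r_a$.

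Condition~1 is the main obstacle, and I would attack it by contradiction. If no admissible $\bar{\alpha}_1>r_a$ exists, there is a sequence $\alpha_n\downarrow r_a$ together with pairs $\mathbf{p}_n,\mathbf{q}_n\in\mathcal{W}_{\alpha_n}$ lying in distinct connected components of $\mathcal{W}_{\alpha_n}$. Because $\mathcal{W}_{\alpha_n}\subset\mathcal{W}_{r_a}\subset\mathcal{W}$ and $\mathcal{W}$ is compact, passing to subsequences gives $\mathbf{p}_n\to\mathbf{p}_*$ and $\mathbf{q}_n\to\mathbf{q}_*$ with limits in $\mathcal{W}_{r_a}$. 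Using Assumption~\ref{assumption:connected_interior}, I would pick interior anchors $\tilde{\mathbf{p}},\tilde{\mathbf{q}}\in\mathcal{W}_{r_a}^{\circ}$ close to $\mathbf{p}_*,\mathbf{q}_*$ and a continuous path $\gamma\subset\mathcal{W}_{r_a}^{\circ}$ joining them. Compactness of $\gamma$ inside the open set $\mathcal{W}_{r_a}^{\circ}$ forces $m_\gamma:=\min_{t}d(\gamma(t),\mathcal{O}_{\mathcal{W}})>r_a$, so $\gamma\subset\mathcal{W}_{\alpha_n}$ whenever $\alpha_n<m_\gamma$. Concatenating $\gamma$ with short linking segments from $\mathbf{p}_n$ to $\tilde{\mathbf{p}}$ and from $\tilde{\mathbf{q}}$ to $\mathbf{q}_n$, which stay inside $\mathcal{W}_{\alpha_n}$ for $n$ large by continuity of $d(\cdot,\mathcal{O}_{\mathcal{W}})$ and the strict inequalities $d(\tilde{\mathbf{p}}),d(\tilde{\mathbf{q}})>r_a$, produces a path in $\mathcal{W}_{\alpha_n}$ joining $\mathbf{p}_n$ and $\mathbf{q}_n$, contradicting their assumed separation.

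The hardest step is justifying this path construction when $\mathbf{p}_*,\mathbf{q}_*$ happen to lie on $\partial\mathcal{W}_{r_a}^{\circ}$ (where $d=r_a$), and in particular the degenerate subcase $\mathbf{p}_*=\mathbf{q}_*$ in which the two anchors collapse and a single direct $\gamma$ is not available. I would circumvent this by routing everything through the origin: keep the anchors $\tilde{\mathbf{p}},\tilde{\mathbf{q}}$ close to $\mathbf{p}_*,\mathbf{q}_*$ but replace the single path $\gamma$ by two fixed paths in $\mathcal{W}_{r_a}^{\circ}$, one from $\tilde{\mathbf{p}}$ to $\mathbf{0}$ and one from $\tilde{\mathbf{q}}$ to $\mathbf{0}$ (both available by Assumption~\ref{assumption:connected_interior}), each with minimum distance strictly above $r_a$. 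The same linking-segment argument then independently places $\mathbf{p}_n$ and $\mathbf{q}_n$ in the origin's component of $\mathcal{W}_{\alpha_n}$ for all sufficiently large $n$, yielding the contradiction regardless of whether $\mathbf{p}_*=\mathbf{q}_*$ or the limits are boundary points. Taking $\bar{\alpha}_1$ to be the minimum of these two path widths, together with $\bar{\alpha}_2$ from Condition~2, completes the proof.
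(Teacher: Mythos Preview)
Your treatment of Condition~2 is correct and considerably more explicit than the paper's proof, which simply declares both conditions ``evident'' and ``straightforward'' without substantive argument. For Condition~1 your compactness--contradiction strategy is reasonable, but the linking-segment step has a genuine gap in precisely the boundary subcase you flag, and the proposed reroute through the origin does not repair it. Suppose $d(\mathbf{p}_*,\mathcal{O}_{\mathcal{W}})=r_a$. For any interior anchor $\tilde{\mathbf{p}}$, the $1$-Lipschitz property of $d(\cdot,\mathcal{O}_{\mathcal{W}})$ gives $d(\tilde{\mathbf{p}},\mathcal{O}_{\mathcal{W}})-r_a\le\|\tilde{\mathbf{p}}-\mathbf{p}_*\|$, while the same Lipschitz bound only certifies $[\mathbf{p}_n,\tilde{\mathbf{p}}]\subset\mathcal{W}_{\alpha_n}$ when $\|\mathbf{p}_n-\tilde{\mathbf{p}}\|\le d(\tilde{\mathbf{p}},\mathcal{O}_{\mathcal{W}})-\alpha_n$; in the limit this demands $\|\mathbf{p}_*-\tilde{\mathbf{p}}\|<d(\tilde{\mathbf{p}},\mathcal{O}_{\mathcal{W}})-r_a\le\|\mathbf{p}_*-\tilde{\mathbf{p}}\|$, which is impossible. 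Rerouting only the \emph{middle} of the path through $\mathbf{0}$ leaves the endpoint links $[\mathbf{p}_n,\tilde{\mathbf{p}}]$, $[\mathbf{q}_n,\tilde{\mathbf{q}}]$ untouched, and straight segments from boundary points to nearby interior points can genuinely dip below level $r_a$ (e.g.\ with a single-point obstacle $\mathbf{a}$, $\mathbf{p}_*$ at distance $r_a$ and $\tilde{\mathbf{p}}$ at distance $2r_a$ in the perpendicular direction, the segment comes within $2r_a/\sqrt{5}<r_a$ of $\mathbf{a}$).

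To close this you would need either to force $\mathbf{p}_*,\mathbf{q}_*\in\mathcal{W}_{r_a}^{\circ}$ --- for instance by selecting $\mathbf{p}_n,\mathbf{q}_n$ as $d$-maximisers within their respective components and arguing those maxima cannot accumulate at $r_a$, which is itself nontrivial --- or to abandon straight links for curves provably inside $\mathcal{W}_{\alpha_n}$, which typically requires boundary regularity of $\mathcal{O}_{\mathcal{W}}$ beyond mere compactness. The paper's own proof does not engage with any of this, so your attempt is at least honest about where the real difficulty lies; but as written the boundary subcase is not closed, and your final line about ``taking $\bar{\alpha}_1$ to be the minimum of these two path widths'' sits awkwardly with a proof-by-contradiction framework in which no explicit $\bar{\alpha}_1$ is ever exhibited.
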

\begin{proof}
See Appendix \ref{proof:alpha_existence}.
\end{proof}
Next, we show that if we choose the parameter $\alpha$, which is used in \eqref{obstacle_modification_step} and \eqref{y_modified_free_workspace} to obtain the set $\mathcal{V}_{r_a}$, as per Lemma \ref{lemma:alpha_existence}, then the set $\mathcal{V}_{r_a}$ is pathwise connected and the origin belongs to its interior.
\begin{lemma}
    If the parameter $\alpha$, which is used in \eqref{obstacle_modification_step}, is chosen as per Lemma \ref{lemma:alpha_existence}, then the modified obstacle-free workspace w.r.t. the center of the robot $\mathcal{V}_{r_a}$ is pathwise connected and $\mathbf{0}\in\mathcal{V}_{r_a}^{\circ}.$\label{lemma:properties_of_modified_workspace}
\end{lemma}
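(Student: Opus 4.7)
The plan is to express both $\mathcal{V}_{r_a}$ and $\mathcal{O}_{\mathcal{W}}^M$ through distance functions, show the inclusion $\mathcal{W}_{\alpha}\subset\mathcal{V}_{r_a}$, and then retract every point of $\mathcal{V}_{r_a}$ to $\mathcal{W}_{\alpha}$ along the straight segment to its projection. Using Remark \ref{remark:alternate_modified_obstacle}, the identity $\mathcal{O}_{\mathcal{W}}^M=\mathcal{W}_{\alpha}^c\ominus\mathcal{B}_{\alpha}^{\circ}(\mathbf{0})$ unpacks to $\mathcal{O}_{\mathcal{W}}^M=\{\mathbf{x}\in\mathbb{R}^2:d(\mathbf{x},\mathcal{W}_{\alpha})\geq\alpha\}$; by Lemma \ref{lemma:unique_projection} this set has reach $\geq\alpha$ and is the closure of its interior, so the analogous computation for $\mathcal{V}_{r_a}$ gives $\mathcal{V}_{r_a}=\{\mathbf{x}\in\mathbb{R}^2:d(\mathbf{x},\mathcal{O}_{\mathcal{W}}^M)\geq r_a\}$. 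The $1$-Lipschitz property of $d(\cdot,\mathcal{W}_{\alpha})$ then yields $\mathcal{W}_{\alpha}\subset\mathcal{V}_{r_a}$, since for any $\mathbf{p}\in\mathcal{W}_{\alpha}$ and $\mathbf{z}\in\mathcal{O}_{\mathcal{W}}^M$, $\|\mathbf{p}-\mathbf{z}\|\geq d(\mathbf{z},\mathcal{W}_{\alpha})\geq\alpha>r_a$. A crucial observation used below is that Assumption \ref{Assumption:reach} is exactly the statement $\mathbf{reach}(\mathcal{W}_{\alpha})\geq\alpha$.

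For the interior statement $\mathbf{0}\in\mathcal{V}_{r_a}^{\circ}$, Lemma \ref{lemma:alpha_existence} provides $\mathbf{p}\in\mathcal{W}_{\alpha}$ with $\|\mathbf{p}\|<\alpha-r_a$. Setting $\delta:=\alpha-r_a-\|\mathbf{p}\|>0$, for any $\mathbf{y}\in\mathcal{B}_{\delta}^{\circ}(\mathbf{0})$ and any $\mathbf{z}\in\mathcal{O}_{\mathcal{W}}^M$ the triangle inequality gives
\[
\|\mathbf{y}-\mathbf{z}\|\geq\|\mathbf{p}-\mathbf{z}\|-\|\mathbf{y}-\mathbf{p}\|\geq\alpha-(\|\mathbf{y}\|+\|\mathbf{p}\|)>r_a,
\]
so $\mathcal{B}_{\delta}^{\circ}(\mathbf{0})\subset\mathcal{V}_{r_a}$, proving $\mathbf{0}\in\mathcal{V}_{r_a}^{\circ}$.

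For pathwise connectedness, since $\mathcal{W}_{\alpha}$ is pathwise connected (Lemma \ref{lemma:alpha_existence}) and contained in $\mathcal{V}_{r_a}$, it suffices to join any $\mathbf{x}\in\mathcal{V}_{r_a}$ to $\mathcal{W}_{\alpha}$ by a path in $\mathcal{V}_{r_a}$. I would use the segment $\gamma(t)=(1-t)\mathbf{x}+t\mathbf{w}$ with $\mathbf{w}=\Pi(\mathbf{x},\mathcal{W}_{\alpha})$. The hard part is the bound $c:=d(\mathbf{x},\mathcal{W}_{\alpha})\leq\alpha-r_a$: first, $c<\alpha$ (otherwise $\mathbf{x}\in\mathcal{O}_{\mathcal{W}}^M$, contradicting $d(\mathbf{x},\mathcal{O}_{\mathcal{W}}^M)\geq r_a$), so $c<\mathbf{reach}(\mathcal{W}_{\alpha})$ makes the projection $\mathbf{w}$ unique and produces the outward unit normal $\mathbf{n}=(\mathbf{x}-\mathbf{w})/c\in\mathbf{N}_{\mathcal{W}_{\alpha}}(\mathbf{w})$; Assumption \ref{Assumption:reach} applied at $\mathbf{w}$ with this $\mathbf{n}$ then forces $d(\mathbf{w}+\alpha\mathbf{n},\mathcal{W}_{\alpha})=\alpha$, hence $\mathbf{w}+\alpha\mathbf{n}\in\mathcal{O}_{\mathcal{W}}^M$, and so $r_a\leq d(\mathbf{x},\mathcal{O}_{\mathcal{W}}^M)\leq\|\mathbf{x}-(\mathbf{w}+\alpha\mathbf{n})\|=\alpha-c$. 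Finally, the standard identity $d(\gamma(t),\mathcal{W}_{\alpha})=(1-t)c$ and the $1$-Lipschitz property of $d(\cdot,\mathcal{W}_{\alpha})$ give, for every $\mathbf{z}\in\mathcal{O}_{\mathcal{W}}^M$,
\[
\|\gamma(t)-\mathbf{z}\|\geq d(\mathbf{z},\mathcal{W}_{\alpha})-d(\gamma(t),\mathcal{W}_{\alpha})\geq\alpha-(1-t)c\geq\alpha-c\geq r_a,
\]
so $\gamma([0,1])\subset\mathcal{V}_{r_a}$ and path-connectedness follows. I expect the step $c\leq\alpha-r_a$ to be the main obstacle, as it requires converting the rolling-ball content of Assumption \ref{Assumption:reach} into a concrete upper bound on how far points of $\mathcal{V}_{r_a}$ can drift from $\mathcal{W}_{\alpha}$.
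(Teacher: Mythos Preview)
Your argument is correct and, in fact, more explicit than the paper's. The paper observes that idempotence of the closing operator forces $\mathcal{O}_{\mathcal{W}}\oplus\mathcal{B}_{\alpha}^{\circ}(\mathbf{0})=\mathcal{O}_{\mathcal{W}}^M\oplus\mathcal{B}_{\alpha}^{\circ}(\mathbf{0})$, hence $\mathcal{W}_{\alpha}=\mathcal{V}_{\alpha}$; it then simply asserts that pathwise connectedness of $\mathcal{V}_{\alpha}$ implies that of $\mathcal{V}_{r_a}$, and that $d(\mathbf{0},\mathcal{V}_{\alpha})<\alpha-r_a$ together with $d(\partial\mathcal{V}_{\alpha},\partial\mathcal{V}_{r_a})=\alpha-r_a$ gives $\mathbf{0}\in\mathcal{V}_{r_a}^{\circ}$. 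You bypass the idempotence identity and work directly with the distance description $\mathcal{O}_{\mathcal{W}}^M=\{d(\cdot,\mathcal{W}_{\alpha})\geq\alpha\}$, then build an explicit straight-line retraction from every $\mathbf{x}\in\mathcal{V}_{r_a}$ to its unique projection on $\mathcal{W}_{\alpha}$, checking that the whole segment stays in $\mathcal{V}_{r_a}$ via the key bound $d(\mathbf{x},\mathcal{W}_{\alpha})\leq\alpha-r_a$.

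Two remarks worth noting. First, your approach makes transparent exactly where positive reach enters: you need $\mathbf{reach}(\mathcal{W}_{\alpha})\geq\alpha$ both to get a unique projection $\mathbf{w}$ and to certify $\mathbf{w}+\alpha\mathbf{n}\in\mathcal{O}_{\mathcal{W}}^M$. The paper's one-line passage from ``$\mathcal{V}_{\alpha}$ connected'' to ``$\mathcal{V}_{r_a}$ connected'' implicitly needs the same ingredient (namely Lemma~\ref{lemma:unique_projection}, equivalently Assumption~\ref{Assumption:reach}): without it, the step that any point of $\mathcal{V}_{r_a}\setminus\mathcal{V}_{\alpha}$ can be pushed outward into $\mathcal{V}_{\alpha}$ is not automatic. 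So your version is not adding an assumption the paper avoids; it is simply making the dependence explicit. Second, your identification $\mathcal{V}_{r_a}=\{d(\cdot,\mathcal{O}_{\mathcal{W}}^M)\geq r_a\}$ is slightly stronger than what you use: only the inclusion $\mathcal{V}_{r_a}\subset\{d\geq r_a\}$ (which always holds) is needed for the bound $c\leq\alpha-r_a$, and for $t>0$ your inequality gives $d(\gamma(t),\mathcal{O}_{\mathcal{W}}^M)>r_a$, which lands $\gamma(t)$ in $\{d>r_a\}\subset\mathcal{V}_{r_a}$ without any regularity issue at the level set $\{d=r_a\}$.
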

\begin{proof}
See Appendix \ref{proof:properties_of_modified_workspace}
\end{proof}

This concludes the discussion on the formulation and features of the obstacle-reshaping operator $\mathbf{M}$ \eqref{obstacle_modification_step}, which is applicable to $n-$dimensional Euclidean subsets of $\mathbb{R}^n$. Next, we provide the hybrid control design for the robot operating in a two-dimensional workspace \textit{i.e.,} $\mathcal{W}\subset\mathbb{R}^2.$
}

\begin{figure}
    \centering
    \includegraphics[width = 0.85\linewidth]{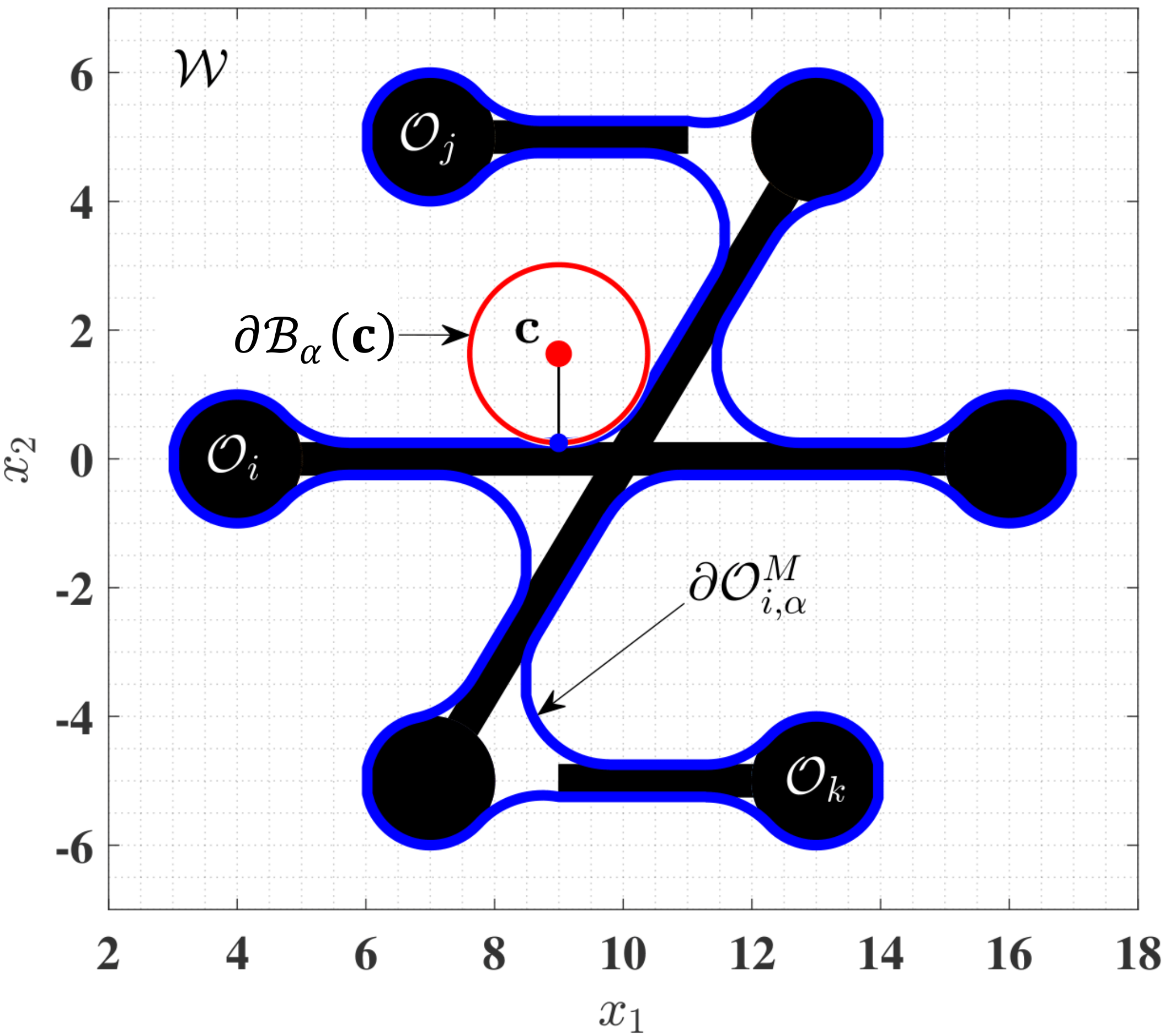}
    \caption{An obstacle-occupied workspace $\mathcal{O}_{\mathcal{W}}$ with three obstacles $\mathcal{O}_i,\mathcal{O}_j$ and $\mathcal{O}_k$, and the boundary of the modified obstacle $\mathcal{O}_{i, \alpha}^M$ obtained using a virtual ring with radius $\alpha$, where $\mathcal{O}_{i, \alpha} = \mathcal{O}_i\cup\mathcal{O}_j\cup\mathcal{O}_k$, see \eqref{obstacle_alpha_chain}. A video can be seen here \url{https://youtu.be/mylrTOtYOSY}.}
    \label{program:modified_obstacle_generation}
\end{figure}

\section{Hybrid Control for Obstacle Avoidance}\label{sec:the_proposed_contrroller}
In the proposed scheme, similar to \cite{berkane2021obstacle}, depending upon the value of the mode indicator $m\in\{-1, 0, 1\}=:\mathbb{M},$ the robot operates in two different modes, namely the \textit{move-to-target} mode $(m = 0)$ when it is away from the modified obstacles and the \textit{obstacle-avoidance} mode $(m\in\{-1, 1\})$ when it is in the vicinity of an modified obstacle. In the \textit{move-to-target} mode, the robot moves straight towards the target, whereas during the \textit{obstacle-avoidance} mode the robot moves around the nearest modified obstacle, either in the clockwise direction $(m = 1)$ or in the counter-clockwise direction $(m = -1)$. We utilize a vector joining the center of the robot and its projection on the modified obstacle-occupied workspace to select between the modes and assign the direction of motion while operating in the \textit{obstacle-avoidance} mode.  

\subsection{Hybrid control Design}
The proposed hybrid control $\mathbf{u}(\mathbf{x}, \mathbf{h}, m)$ is given by
\begin{subequations}
\begin{align}
    \mathbf{u}(\xi) &= -\kappa_s(1 - m^2)\mathbf{x} + \kappa_r m^2\mathbf{v}(\mathbf{x}, m),\label{hybrid_control_input_1}\\
    &\underbrace{\begin{matrix}\mathbf{\dot{h}}\\\dot{m}\end{matrix}\begin{matrix*}[l] =\mathbf{0}\\= 0\end{matrix*}}_{\xi\in\mathcal{F}}, \quad \underbrace{\begin{bmatrix}\mathbf{h}^+\\m^+\end{bmatrix}\in\mathbf{L}(\xi)}_{\xi\in\mathcal{J}},\label{hybrid_control_input_2}
    \end{align}\label{hybrid_control_input}
\end{subequations}
where $\kappa_s > 0$, $\kappa_r > 0$ and $\xi = (\mathbf{x}, \mathbf{h}, m)\in\mathcal{V}_{r_a}\times\mathcal{V}_{r_a}\times\mathbb{M} =:\mathcal{K},$ is the composite state vector. In \eqref{hybrid_control_input_1}, $\mathbf{x}\in\mathcal{V}_{r_a}$ is the location of the center of the robot. The state $\mathbf{h}\in\mathcal{V}_{r_a}$, referred to as a \textit{hit point}, is the location of the center of the robot when it enters in the \textit{obstacle-avoidance} mode. The discrete variable $m\in\mathbb{M}$ is the mode indicator. The update law $\mathbf{L}(\xi)$, used in \eqref{hybrid_control_input_2}, which allows the robot to switch between the modes, is discussed in Section \ref{section:update_law}. The symbols $\mathcal{F}$ and $\mathcal{J}$ denote the flow and jump sets related to different modes of operations, respectively, whose constructions are provided in Section \ref{section:flowset_jumpset_construction}. Next, we provide the design of the vector $\mathbf{v}(\mathbf{x}, m)\in\mathbb{R}^2$, used in \eqref{hybrid_control_input_1}.

The vector $\mathbf{v}(\mathbf{x}, m)$ is defined as
\begin{equation}
    \mathbf{v}(\mathbf{x}, m) = \begin{bmatrix}0 & m\\-m & 0\end{bmatrix}\frac{\mathbf{x} - \Pi(\mathbf{x},\mathcal{O}_{\mathcal{W}}^M)}{\norm{\mathbf{x} - \Pi(\mathbf{x},\mathcal{O}_{\mathcal{W}}^M)}},\label{definition:vxm}
\end{equation}
where $\Pi(\mathbf{x}, \mathcal{O}_{\mathcal{W}}^M)$ is the point on the modified obstacle-occupied workspace $\mathcal{O}_{\mathcal{W}}^M$ which is closest to the center of the robot $\mathbf{x}$, as defined in Section \ref{section:metric_projection}. As per Lemma \ref{lemma:unique_projection}, if the center of the robot $\mathbf{x}$ is inside the $\alpha-$neighbourhood of the set $\mathcal{O}_{\mathcal{W}}^M$, then $\Pi(\mathbf{x}, \mathcal{O}_{\mathcal{W}}^M)$ is unique.
When the robot operates in the \textit{obstacle-avoidance} mode, the vector $\mathbf{v}(\mathbf{x}, m)$ allows it to move around the nearest obstacle either in the clockwise direction $(m = 1)$ or in the counter-clockwise direction $(m = -1)$.  Next, we discuss the design of the flow set $\mathcal{F}$ and the jump set $\mathcal{J}$, used in \eqref{hybrid_control_input}.

\subsection{Geometric construction of the flow and jump sets}\label{section:flowset_jumpset_construction}

\begin{figure}
    \centering
    \includegraphics[width = 0.508\linewidth]{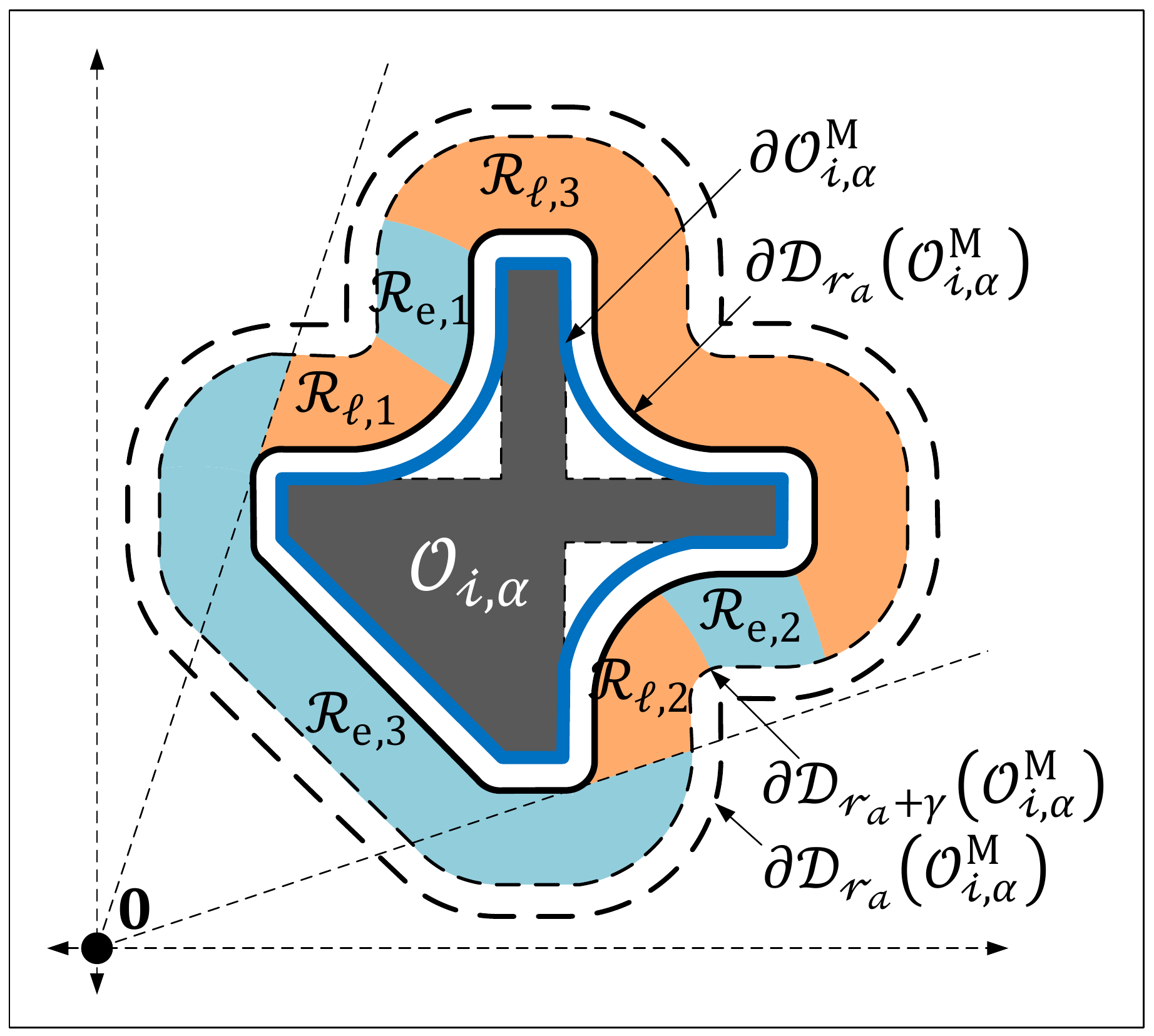}
    \includegraphics[width = 0.475\linewidth]{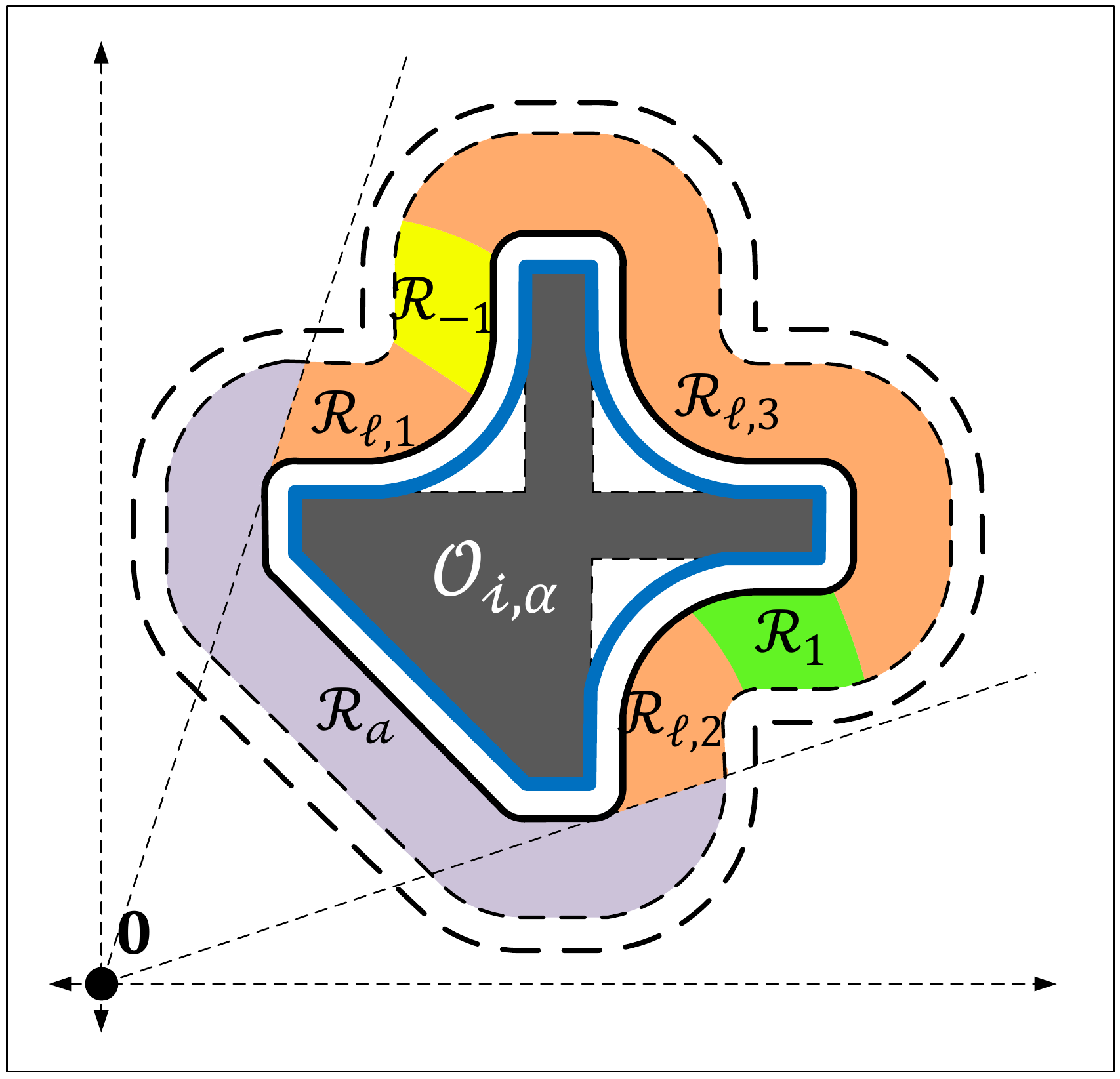}
        \caption{The left figure shows the partitioning of the region $\mathcal{N}_{\gamma}(\mathcal{D}_{r_a}(\mathcal{O}_{i}^M))$ into the \textit{landing} region $\mathcal{R}_l = \mathcal{R}_{l, 1}\cup\mathcal{R}_{l, 2}\cup\mathcal{R}_{l, 3}$, and the \textit{exit} region $\mathcal{R}_e = \mathcal{R}_{e, 1}\cup\mathcal{R}_{e, 2}\cup\mathcal{R}_{e, 3},$ using \eqref{definition:landing_region} and \eqref{definition:exit_region}. The right figure shows the partitioning of the \textit{exit} region into three sub-regions namely the \textit{always exit} region $\mathcal{R}_a$, the \textit{clockwise exit} region $\mathcal{R}_1$, and the \textit{counter-clockwise exit} region $\mathcal{R}_{-1}, $ using \eqref{always_exit_region}, \eqref{clockwise_exit_region} and \eqref{counter_clockwise_exit_region}, respectively.} 
    \label{diagram:exit_region_partitions}
\end{figure}

{When the robot operates in the \textit{move-to-target} mode, in the modified free workspace, its velocity is directed towards the target location. If any connected modified obstacle $\mathcal{O}_{i, \alpha}^M, i\in\mathbb{I}$, is on the line segment $\mathcal{L}_s(\mathbf{x}, \mathbf{0})$ connecting the robot's location with the target location, then the robot, operating in the \textit{move-to-target} mode, will enter the $\alpha-$neighbourhood of that modified obstacle (\textit{i.e.}, $d(\mathbf{x}, \mathcal{O}_{i, \alpha}^M)\leq \alpha$) from the \textit{landing} region $\mathcal{R}_l$, as depicted in Fig. \ref{diagram:exit_region_partitions}. The parameter $\alpha$ is chosen as per Lemma \ref{lemma:alpha_existence}. The \textit{landing} region is defined as
\begin{equation}
    \mathcal{R}_l := \bigcup_{i\in\mathbb{I}}\mathcal{R}_l^i,\label{definition:landing_region}
\end{equation}
where, for each $i\in\mathbb{I}$, the set $\mathcal{R}_l^i$ is given by
\begin{equation}
\begin{aligned}
\mathcal{R}_l^i := \{&\mathbf{x}\in\mathcal{N}_{\gamma}(\mathcal{D}_{r_a}(\mathcal{O}_{i,\alpha}^M))| \mathbf{x}^{\intercal}(\mathbf{x} - \Pi(\mathbf{x}, \mathcal{O}_{i, \alpha}^M)) \geq 0, \\
&\mathcal{L}_{s}(\mathbf{x}, \mathbf{0})\cap\left(\mathcal{D}_{r_a}(\mathcal{O}_{i,\alpha}^M)\right)^{\circ} \ne\emptyset\},\label{definition:individual_landing_region}
\end{aligned}
\end{equation}
where $\gamma \in (0, \alpha - r_a)$. Given a set $\mathcal{A}\subset\mathbb{R}^n$ and a scalar $r > 0$, the $r-$neighbourhood of the set $\mathcal{A}$ is denoted by $\mathcal{N}_r(\mathcal{A}) = \mathcal{D}_r(\mathcal{A})\setminus(\mathcal{A})^\circ.$

Notice that for a connected modified obstacle $\mathcal{O}_{i, \alpha}^M, i\in\mathbb{I}$, the \textit{landing} region $\mathcal{R}_{l}^i$, defined in \eqref{definition:individual_landing_region}, is the intersection of the following two regions:
\begin{enumerate}
    \item the region where the line segment $\mathcal{L}_{s}(\mathbf{x}, \mathbf{0})$, which joins the center of the robot and the target location, intersects with the interior of the $r_a-$dilated connected modified obstacle $\mathcal{D}_{r_a}^{\circ}(\mathcal{O}_{i, \alpha}^M)$. Hence, if the robot moves straight towards the target in this region, it will eventually collide with the modified obstacle $\mathcal{O}_{i, \alpha}^M.$
    \item the region where the inner product between the vectors $\mathbf{x}$ and $\mathbf{x} - \Pi(\mathbf{x}, \mathcal{O}_{i, \alpha}^M)$ is non-negative, as shown in Fig. \ref{diag:tangent_cone}. Notice that, when the robot moves straight towards the target in this region, its distance from the modified obstacle $\mathcal{O}_{i, \alpha}^M$ \textit{i.e.}, $d(\mathbf{x}, \mathcal{O}_{i, \alpha}^M)$ does not increase. To understand this fact, observe that if for any $\mathbf{x}\in\mathcal{N}_{\gamma}(\mathcal{D}_{r_a}(\mathcal{O}_{i, \alpha}^M))$, $\mathbf{x}^{\intercal}(\mathbf{x} - \Pi(\mathbf{x}, \mathcal{O}_{i, \alpha}^M))\geq 0$, then $-\mathbf{x}\in\mathbf{T}_{\mathcal{D}_{d(\mathbf{x}, \mathcal{O}_{i, \alpha}^M)}(\mathcal{O}_{i, \alpha}^M)}(\mathbf{x}) = \mathcal{P}_{\leq}(\mathbf{0}, \mathbf{x} - \Pi(\mathbf{x}, \mathcal{O}_{i, \alpha}^M))$ \textit{i.e.}, the robot, moving straight towards the target in this region,
    does not leave the region $\mathcal{D}_{d(\mathbf{x}, \mathcal{O}_{i, \alpha}^M)}(\mathcal{O}_{i, \alpha}^M)$. The notation $\mathbf{T}_{\mathcal{D}_{d(\mathbf{x}, \mathcal{O}_{i, \alpha}^M)}(\mathcal{O}_{i, \alpha}^M)}(\mathbf{x})$ denotes the tangent cone
     to the set $\mathcal{D}_{d(\mathbf{x}, \mathcal{O}_{i, \alpha}^M)}(\mathcal{O}_{i, \alpha}^M)$ at $\mathbf{x}.$
\end{enumerate}

\begin{figure}
    \centering
    \includegraphics[width = 0.9\linewidth]{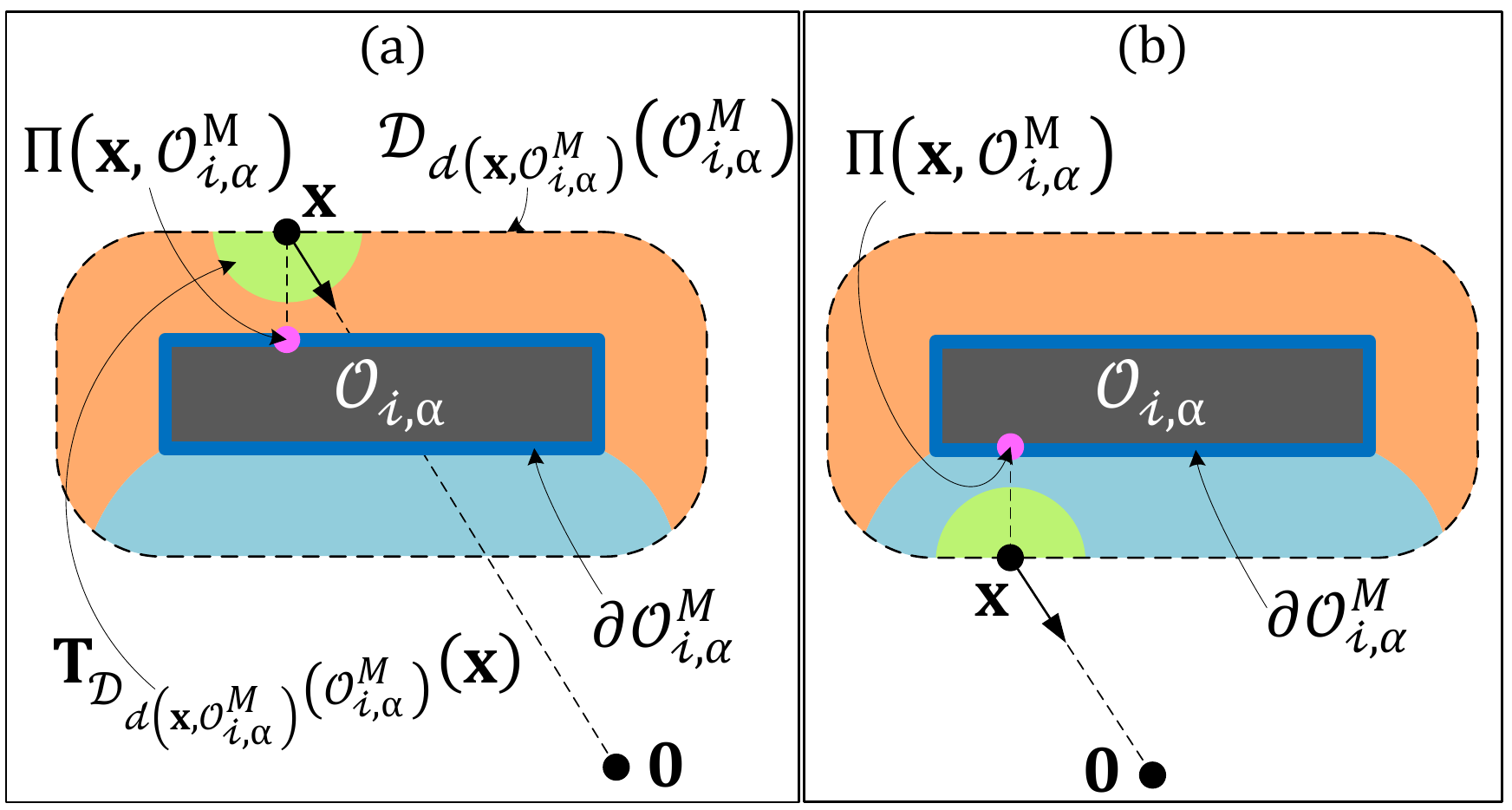}
    \caption{Partitions of the neighbourhood of the modified obstacle $\mathcal{O}_{i, \alpha}^M$ based on the value of the inner product between the two vectors $-\mathbf{x}$ and $\mathbf{x} - \Pi(\mathbf{x}, \mathcal{O}_{i, \alpha}^M)$ and the tangent cone to the set $\mathcal{D}_{d(\mathbf{x}, \mathcal{O}_{i, \alpha}^M)}(\mathcal{O}_{i, \alpha}^M)$ at $\mathbf{x}$. (a) The vector $-\mathbf{x}$ is pointing inside the tangent cone. (b) The vector $-\mathbf{x}$ is pointing outside the tangent cone.}
    \label{diag:tangent_cone}
\end{figure}

According to \eqref{definition:individual_landing_region}, due to the intersection of the above-mentioned two regions, the \textit{landing} region $\mathcal{R}_l^i$ excludes a set of locations from the region $\mathcal{N}_{\gamma}(\mathcal{D}_{r_a}(\mathcal{O}_{i, \alpha}^M))$, for which the inner product between the vectors $\mathbf{x}$ and $\mathbf{x} - \Pi(\mathbf{x}, \mathcal{O}_{\mathcal{W}}^M)$ is negative and the line segment $\mathcal{L}_{s}(\mathbf{x}, \mathbf{0})$ intersects with the interior of the set $\mathcal{D}_{r_a}(\mathcal{O}_{i, \alpha}^M)$, for example see the regions $\mathcal{R}_{e, 1}$ and $\mathcal{R}_{e, 2}$ in Fig. \ref{diagram:exit_region_partitions}. Even though the robot does not have a line-of-sight towards the target location, as long as it moves straight towards the target in these regions, its distance from the modified obstacle $\mathcal{O}_{i, \alpha}^M$ \textit{i.e.}, $d(\mathbf{x}, \mathcal{O}_{i, \alpha}^M)$ increases. To understand this fact, observe that if for any $\mathbf{x}\in\mathcal{N}_{\gamma}(\mathcal{D}_{r_a}(\mathcal{O}_{i, \alpha}^M))$, $\mathbf{x}^{\intercal}(\mathbf{x} - \Pi(\mathbf{x}, \mathcal{O}_{i, \alpha}^M))$ is negative, then $-\mathbf{x}\notin\mathbf{T}_{\mathcal{D}_{d(\mathbf{x}, \mathcal{O}_{i, \alpha}^M)}(\mathcal{O}_{i, \alpha}^M)}(\mathbf{x}) = \mathcal{P}_{\leq}(\mathbf{0}, \mathbf{x} - \Pi(\mathbf{x}, \mathcal{O}_{i, \alpha}^M))$ \textit{i.e.}, the robot, moving straight towards the target in this region, does not enter the region $\mathcal{D}_{d(\mathbf{x}, \mathcal{O}_{i, \alpha}^M)}(\mathcal{O}_{i, \alpha}^M).$ Due to this property, we include these locations in the region called an \textit{exit} region, wherein the robot can operate in the \textit{move-to-target} mode, as discussed next.




When the robot operates in the \textit{obstacle-avoidance} mode in the $\gamma-$ neighbourhood of the modified obstacles, it will switch to the \textit{move-to-target} mode from the \textit{exit} region, which is defined as follows:
\begin{equation}
    \mathcal{R}_e := \overline{\mathcal{N}_{\gamma}(\mathcal{D}_{r_a}(\mathcal{O}_{\mathcal{W}}^M))\setminus\mathcal{R}_l}.\label{definition:exit_region}
\end{equation} 
According to \eqref{definition:landing_region}, \eqref{definition:individual_landing_region} and \eqref{definition:exit_region}, the \textit{exit} region is a combination of following two regions:
\begin{enumerate}
    \item the region where the inner product $\mathbf{x}^\intercal(\mathbf{x} - \Pi(\mathbf{x}, \mathcal{O}_{\mathcal{W}}^M))$ is non-positive. As discussed earlier, when the robot moves straight towards the target in this region, its distance from the unsafe region does not decrease.
    \item the region where the inner product $\mathbf{x}^\intercal(\mathbf{x} - \Pi(\mathbf{x}, \mathcal{O}_{\mathcal{W}}^M))$ is positive and the line segment $\mathcal{L}_{s}(\mathbf{x}, \mathbf{0})$, which joins the robot's location and the target location, does not intersect with the nearest modified obstacle. Hence, the robot can move straight towards the target and safely leave the $\alpha-$neighbourhood of this connected modified obstacle.
\end{enumerate}
}

As shown in Fig. \ref{diagram:exit_region_partitions} (left), for a modified non-convex obstacle $\mathcal{O}_{i, \alpha}^M, i\in\mathbb{I}$, the \textit{exit} region is not a connected set. Consider a situation, wherein the robot is moving in the clockwise direction with respect to the set $\mathcal{O}_{i, \alpha}^M$ in the \textit{landing} region $\mathcal{R}_{l, 1}$. If the robot were to start moving straight towards the target after entering the \textit{exit} region $\mathcal{R}_{e, 1}$, it will re-enter the region $\mathcal{R}_{l, 1}$, resulting in multiple simultaneous switching instances. Similar situation can occur for the robot moving in the \textit{counter-clockwise} direction with respect to the set $\mathcal{O}_{i,\alpha}^M$ in the \textit{landing} region $\mathcal{R}_{l, 2}$, if it moves straight towards the target after entering the \textit{exit} region $\mathcal{R}_{e, 2}.$ On the other hand, if the robot enters in the region $\mathcal{R}_{e, 3}$, then, irrespective of the direction of motion around the obstacle, it can safely move straight towards the target and leave the $\gamma-$neighbourhood of the modified obstacle $\mathcal{O}_{i,\alpha}^M.$

Hence, as shown in Fig. \ref{diagram:exit_region_partitions} (right), based on the angle between the vectors $\mathbf{x}$ and $\mathbf{x} - \Pi(\mathbf{x}, \mathcal{O}_{\mathcal{W}}^M)$, and the presence of a line-of-sight to the target location with respect to the nearest disjoint modified obstacle, we divide the \textit{exit} region $\mathcal{R}_e$ into three sub-regions $\mathcal{R}_a$, $\mathcal{R}_1$ and $\mathcal{R}_{-1}$, referred to as the \textit{always exit} region, the \textit{clockwise exit} region and the \textit{counter-clockwise exit} region, respectively, as follows:

The \textit{always exit} region $\mathcal{R}_a$ is defined as
\begin{equation}
    \mathcal{R}_a := \bigcup_{i\in\mathbb{I}}\mathcal{R}_a^i,\label{always_exit_region}
\end{equation}
where, for each $i\in\mathbb{I}$, the set $\mathcal{R}_a^i$, which is given by
\begin{equation}
    \mathcal{R}_a^i:= \{\mathbf{x}\in\mathcal{R}_e|\mathcal{L}_s(\mathbf{x}, \mathbf{0})\cap\left(\mathcal{D}_{r_a}(\mathcal{O}_{i, \alpha}^M)\right)^{\circ} = \emptyset\},
\end{equation}
contains the locations from the exit region $\mathcal{R}_e$ such that the line segments joining them to the origin do not intersect with the interior of the $r_a-$dilated modified obstacle $\mathcal{D}_{r_a}(\mathcal{O}_{i, \alpha}^M).$

The \textit{clockwise exit} region $\mathcal{R}_1$ is defined as
\begin{equation}
    \mathcal{R}_1 := \left\{\mathbf{x}\in\mathcal{R}_e\setminus\mathcal{R}_a|\measuredangle\left(\mathbf{x}, \mathbf{x} - \Pi(\mathbf{x}, \mathcal{O}_{\mathcal{W}}^M)\right) \in\left[\pi, 2\pi\right]\right\},\label{clockwise_exit_region}
\end{equation}
and the \textit{counter-clockwise exit} region $\mathcal{R}_{-1}$ is defined as
\begin{equation}
    \mathcal{R}_{-1} := \left\{\mathbf{x}\in\mathcal{R}_e\setminus\mathcal{R}_a|\measuredangle\left(\mathbf{x}, \mathbf{x} - \Pi(\mathbf{x}, \mathcal{O}_{\mathcal{W}}^M)\right) \in\left[0, \pi\right]\right\}\label{counter_clockwise_exit_region}
\end{equation}
where, given two vectors $\mathbf{p}, \mathbf{q}\in\mathbb{R}^2, $ the notation $\measuredangle(\mathbf{p}, \mathbf{q})$  indicates the angle measured from $\mathbf{p}$ to $\mathbf{q}$. The angle measured in the counter-clockwise direction is considered positive, and vice versa.

While moving in the clockwise direction around the modified obstacle, the robot is allowed to move straight towards the target only if its center is in the region $\mathcal{R}_1\cup\mathcal{R}_a$. Whereas, the robot moving in the counter-clockwise direction around the modified obstacle should move straight towards the target only if its center is in the region $\mathcal{R}_{-1}\cup\mathcal{R}_a$. Next, we provide the geometric constructions of the flow set $\mathcal{F}$ and the jump set $\mathcal{J}$, used in \eqref{hybrid_control_input}.


\subsubsection{Flow and jump sets (move-to-target mode)}

As discussed earlier, if the robot, which is moving straight towards the target, is on a collision path towards a connected modified obstacle $\mathcal{O}_{i,\alpha}^M$, for some $i\in\mathbb{I}$, then it will enter the $\gamma-$neighbourhood of this modified obstacle through the \textit{landing} region. Hence, the jump set of the \textit{move-to-target} mode for the state $\mathbf{x}$ is defined as
\begin{equation}
    \mathcal{J}_0^{\mathcal{W}}:=\mathcal{N}_{\gamma_s}(\mathcal{D}_{r_a}(\mathcal{O}_{\mathcal{W}}^M))\cap\mathcal{R}_l\label{stabilization_jump_set},
\end{equation}
where $\gamma_s\in[0, \gamma)$. For robustness purposes (with respect to noise), we introduce a hysteresis region by allowing the robot, operating in the \textit{move-to-target} mode inside the $\gamma-$neighbourhood of the set $\mathcal{O}_{\mathcal{W}}^M$, to move closer to the set $\mathcal{O}_{\mathcal{W}}^M$ before switching to the \textit{obstacle-avoidance} mode.

The flow set of the \textit{move-to-target} mode for the state $\mathbf{x}$ is then defined as
\begin{equation}
    \mathcal{F}_0^{\mathcal{W}}:=\left(\mathcal{W}\setminus\left(\mathcal{D}_{r_a+\gamma_s}(\mathcal{O}_{\mathcal{W}}^M)\right)^{\circ}\right)\cup\mathcal{R}_e\label{stabilization_flow_set}.
\end{equation}
Notice that the union of the jump set \eqref{stabilization_jump_set} and the flow set \eqref{stabilization_flow_set} exactly covers the modified robot-centred obstacle-free workspace $\mathcal{V}_{r_a}$ \eqref{y_modified_free_workspace}. Refer to Fig. \ref{diagram:flwoandjumpset} for the representation of the flow and jump sets related to the modified obstacle-occupied workspace $\mathcal{O}_{\mathcal{W}}^M$. Next, we provide the construction of the flow and jump sets for the \textit{obstacle-avoidance} mode.

\subsubsection{Flow and jump sets (\textit{obstacle-avoidance} mode)}

The robot operates in the \textit{obstacle-avoidance} mode only in the $\gamma-$neighbourhood of the modified obstacles $\mathcal{O}_{\mathcal{W}}^M.$ 
The mode indicator variable $ m = 1$ and $m = -1$ prompts the robot to move either in the clockwise direction or in the counter-clockwise direction with respect to the nearest boundary of the set $\mathcal{O}_{\mathcal{W}}^M$, respectively. As discussed earlier, for some $m\in\{-1, 1\},$ the robot should exit the \textit{obstacle-avoidance} mode and switch to the \textit{move-to-target} mode only if its center belongs to the exit region $\mathcal{R}_m\cup\mathcal{R}_a$.

To that end, we make use of the \textit{hit point} $\mathbf{h}$ (\textit{i.e.}, the location of the center of the robot when it switched from the \textit{move-to-target} mode to the current \textit{obstacle-avoidance} mode) to define the jump set of the \textit{obstacle-avoidance} mode $\mathcal{J}_{m}^{\mathcal{W}}$ for the state $\mathbf{x}$ as follows:

\begin{equation}
    \mathcal{J}_m^{\mathcal{W}} := \left(\mathcal{W}\setminus(\mathcal{D}_{r_a + \gamma}(\mathcal{O}_{\mathcal{W}}^M))^{\circ}\right)\cup\mathcal{ER}_m^{\mathbf{h}}\cup\mathcal{B}_{\delta}(\mathbf{0}),\label{avoidance_jump_set}
\end{equation}
where $m\in\{-1, 1\}$ and the parameter $\delta\in(0, d(\mathbf{0}, \mathcal{O}_{\mathcal{W}}^M) - r_a).$ Note that, according to Lemma \ref{lemma:properties_of_modified_workspace}, the target location $\mathbf{0}\in\mathcal{V}_{r_a}^{\circ}$. As a result, the distance $d(\mathbf{0}, \mathcal{O}_{\mathcal{W}}^M) > r_a,$ which guarantees the existence of the parameter $\delta.$ 
In \eqref{avoidance_jump_set}, the inclusion of the set $\mathcal{B}_{\delta}(\mathbf{0})$ in the set $\mathcal{J}_{m}^{\mathcal{W}}$ allows us to ensure the stability of the origin, as stated later in Theorem \ref{theorem:global_stability}.

For some $m\in\{-1, 1\}$ and the \textit{hit point} $\mathbf{h}\in\mathcal{V}_{r_a}$, the set $\mathcal{ER}_m^{\mathbf{h}}$ is given by
\begin{equation}
    \mathcal{ER}_m^{\mathbf{h}} := \{\mathbf{x}\in\overline{\mathcal{R}_m\cup\mathcal{R}_a}\big{|}\norm{\mathbf{h}} - \norm{\mathbf{x}} \geq \epsilon\},\label{partition_rm}
\end{equation}
{where $\epsilon \in(0, \bar{\epsilon}]$, with $\bar{\epsilon} > 0$. This set contains the locations from the exit region $\mathcal{R}_m\cup\mathcal{R}_a$ which are at least $\epsilon$ units closer to the target location than the current \textit{hit point}.

\begin{figure}[h]
    \centering
    \includegraphics[width = 0.9\linewidth]{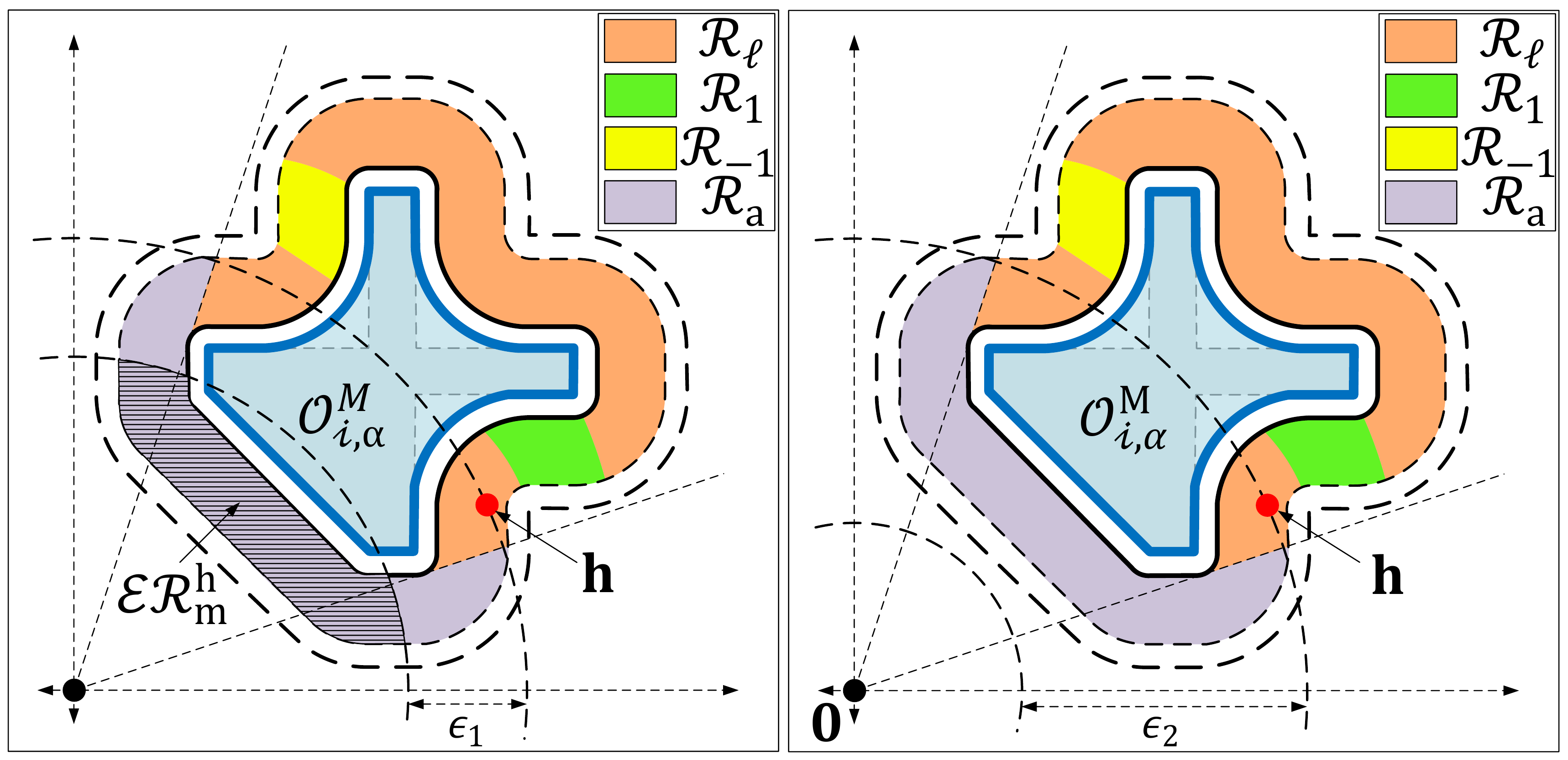}    \caption{Effects of $\epsilon$, used in \eqref{partition_rm}, on the construction of the sets $\mathcal{ER}_{m}^{\mathbf{h}}$, for $m\in\{-1, 1\}$ and a \textit{hit point} $\mathbf{h}\in\mathcal{J}_0^{\mathcal{W}}.$ The left figure shows that when $\epsilon = \epsilon_1$, the sets $\mathcal{ER}_m^{\mathbf{h}}, m\in\{-1, 1\},$ are non-empty. The right figure shows that when $\epsilon = \epsilon_2$, the sets $\mathcal{ER}_m^{\mathbf{h}}, m\in\{-1, 1\},$ are empty.}
    \label{diag:epsilon_diag}
\end{figure}

\begin{figure*}
    \centering
    \includegraphics[width = 0.9\textwidth]{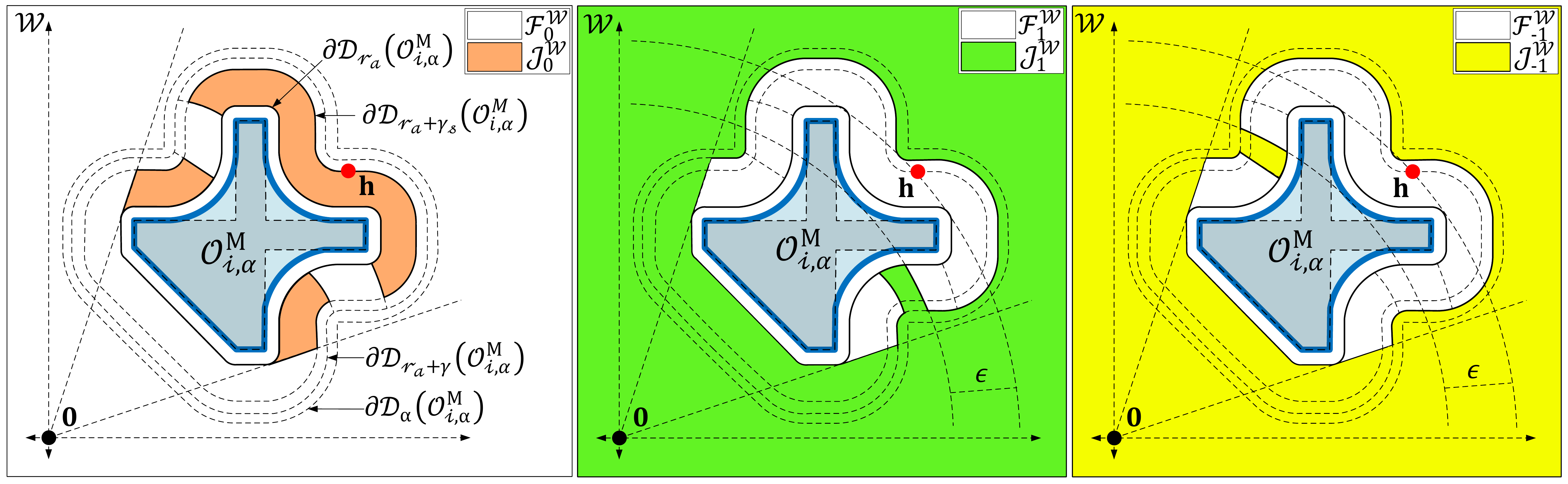}
    \caption{Flow and jump sets $\mathcal{F}_m^{\mathcal{W}}$, $\mathcal{J}_m^{\mathcal{W}}$, related to the set $\mathcal{O}_{i, \alpha}$, shown in Fig. \ref{diagram:exit_region_partitions}.
    The left figure $(m = 0)$ illustrates the case where the robot operates in the \textit{move-to-target} mode and moves straight toward the target location. The middle figure $(m = 1)$ illustrates the case where the robot, operating in the \textit{obstacle-avoidance} mode, moves in the clockwise direction with respect to $\partial\mathcal{O}_{i,  \alpha}^M.$ The right figure $(m = -1)$ illustrates the case where the robot, operating in the \textit{obstacle-avoidance} mode, moves in the counter-clockwise direction with respect to $\partial\mathcal{O}_{i, \alpha}^M.$}
    \label{diagram:flwoandjumpset}
\end{figure*}

Since, according to Lemma \ref{lemma:properties_of_modified_workspace}, the target location belongs to the interior of the modified  obstacle-free workspace w.r.t. the center of the robot \textit{i.e.}, $\mathbf{0}\in(\mathcal{V}_{r_a})^{\circ}$, the existence of a positive scalar $\bar{\epsilon}$ can be guaranteed.
However, if one selects a very high value for $\epsilon$, then for some connected modified obstacles $\mathcal{O}_{i, \alpha}^M, i\in\mathbb{I}$, the set $\mathcal{ER}_m^{\mathbf{h}}\cap\mathcal{N}_{\gamma}(\mathcal{D}_{r_a}(\mathcal{O}_{i, \alpha}^M))$ will become empty, as shown in Fig. \ref{diag:epsilon_diag}, and the robot might get stuck (indefinitely) in the \textit{obstacle-avoidance} mode in the vicinity of those modified obstacles.
Therefore, in the next lemma, we provide an upper bound on the value of $\bar{\epsilon}$.

\begin{lemma}
Under Assumptions \ref{assumption:connected_interior} and \ref{Assumption:reach}, we consider a connected modified obstacle $\mathcal{O}_{i,\alpha}^M, i\in\mathbb{I}$. If $\bar{\epsilon}\in(0, \epsilon_h]$, where
\begin{equation}
\epsilon_h = \sqrt{\left(d(\mathbf{0},\mathcal{O}_{\mathcal{W}}^M)^2 - r_a^2\right)} - \left(d(\mathbf{0},\mathcal{O}_{\mathcal{W}}^M) - r_a\right),\label{epsilon_h}
\end{equation}
then, for every $\mathbf{h}\in\mathcal{J}_0^{\mathcal{W}}$ with $d(\mathbf{h}, \mathcal{O}_{i, \alpha}^M) = \beta\in[r_a, r_a +\gamma]$, and any location $\mathbf{p}\in\mathcal{PJ}(\mathbf{0}, \partial\mathcal{D}_{\beta}(\mathcal{O}_{i, \alpha}^M))$, the set $\mathcal{H}_{\mathbf{p}}:=\mathcal{B}_{\delta}(\mathbf{p})\cap\mathcal{N}_{\gamma}(\mathcal{D}_{r_a}(\mathcal{O}_{i, \alpha}^M))\subset\mathcal{ER}_{m}^{\mathbf{h}},$ for some $m\in\{-1, 1\}$ and $\delta  = \min\{\beta - r_a, d(\mathbf{0},\mathcal{O}_{\mathcal{W}}^M) - r_a\}$, where the set $\mathcal{ER}_m^{\mathbf{h}}$ is defined in \eqref{partition_rm}.
\label{lemma:bar_epsilon}
\end{lemma}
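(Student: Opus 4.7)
The plan is to verify the two defining conditions of $\mathcal{ER}_m^{\mathbf{h}}$ in turn: containment in the closure $\overline{\mathcal{R}_m\cup\mathcal{R}_a}$ for some $m\in\{-1,1\}$, and the norm inequality $\|\mathbf{h}\|-\|\mathbf{x}\|\geq\epsilon$ on $\mathcal{H}_\mathbf{p}$. Write $d_i:=d(\mathbf{0},\mathcal{O}_{i,\alpha}^M)$, and note $d_i\geq d(\mathbf{0},\mathcal{O}_{\mathcal{W}}^M)>r_a$ by Lemma \ref{lemma:properties_of_modified_workspace}. I treat the principal case $d_i>\beta$: then $\mathbf{p}$ lies on the segment from $\mathbf{0}$ to $\Pi(\mathbf{0},\mathcal{O}_{i,\alpha}^M)$ with $\|\mathbf{p}\|=d_i-\beta$, and the triangle inequality combined with $\delta\leq\beta-r_a$ gives $\|\mathbf{x}\|\leq(d_i-\beta)+\delta\leq d_i-r_a$ for every $\mathbf{x}\in\mathcal{H}_\mathbf{p}$.

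For the containment, I invoke the $1$-Lipschitz property of the distance function: for any $\mathbf{z}\in\mathcal{L}_s(\mathbf{x},\mathbf{0})$ we have $d(\mathbf{z},\mathcal{O}_{i,\alpha}^M)\geq d_i-\|\mathbf{z}\|\geq d_i-\|\mathbf{x}\|\geq r_a$, so $\mathcal{L}_s(\mathbf{x},\mathbf{0})\cap(\mathcal{D}_{r_a}(\mathcal{O}_{i,\alpha}^M))^\circ=\emptyset$. Next, Lemma \ref{lemma:pathwise_connected} together with the standing bound $\gamma<\alpha-r_a$ implies that the $(r_a+\gamma)$-neighbourhoods of distinct connected modified obstacles are pairwise disjoint, so $\mathbf{x}\in\mathcal{N}_\gamma(\mathcal{D}_{r_a}(\mathcal{O}_{i,\alpha}^M))$ already forces $\mathbf{x}\notin\mathcal{R}_l^j$ for every $j\neq i$; combined with the segment bound above, $\mathbf{x}\notin\mathcal{R}_l$, hence $\mathbf{x}\in\mathcal{R}_e$ and $\mathbf{x}\in\mathcal{R}_a^i\subset\mathcal{R}_a$. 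The desired containment $\mathcal{H}_\mathbf{p}\subset\overline{\mathcal{R}_m\cup\mathcal{R}_a}$ therefore holds for either choice of $m\in\{-1,1\}$.

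For the norm inequality, the crux is the lower bound $\|\mathbf{h}\|>\sqrt{d_i^2-r_a^2}$. Since $\mathcal{L}_s(\mathbf{h},\mathbf{0})\cap(\mathcal{D}_{r_a}(\mathcal{O}_{i,\alpha}^M))^\circ\neq\emptyset$, there exist $\mathbf{q}\in\mathcal{O}_{i,\alpha}^M$ with $\|\mathbf{q}\|\geq d_i$ and a segment point with distance strictly less than $r_a$ from $\mathbf{q}$; both endpoints of the segment lie outside $\mathcal{B}_{r_a}^\circ(\mathbf{q})$ because $\|\mathbf{0}-\mathbf{q}\|\geq d_i>r_a$ and $\|\mathbf{h}-\mathbf{q}\|\geq\beta\geq r_a$, so the foot of the perpendicular from $\mathbf{q}$ onto the line through $\mathbf{0}$ and $\mathbf{h}$ must fall strictly inside that segment. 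The perpendicular-distance identity then forces $(\mathbf{q}^\intercal\mathbf{h})^2/\|\mathbf{h}\|^2>\|\mathbf{q}\|^2-r_a^2\geq d_i^2-r_a^2$, and the foot-inside-segment condition $\mathbf{q}^\intercal\mathbf{h}\leq\|\mathbf{h}\|^2$ delivers $\|\mathbf{h}\|>\sqrt{d_i^2-r_a^2}$. The function $g(y)=\sqrt{y^2-r_a^2}-(y-r_a)$ is strictly increasing on $(r_a,\infty)$ since $g'(y)=y/\sqrt{y^2-r_a^2}-1>0$, so $g(d_i)\geq g(d_0)=\epsilon_h$; chaining the estimates yields $\|\mathbf{h}\|-\|\mathbf{x}\|\geq g(d_i)\geq\epsilon_h\geq\bar{\epsilon}\geq\epsilon$, as required.

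The main obstacle I anticipate is the perpendicular-foot estimate for $\|\mathbf{h}\|$: the careful elimination of both endpoints by their own distance bounds is the delicate step that converts the qualitative segment-intersection hypothesis into the sharp $\sqrt{d_i^2-r_a^2}$ estimate, and hence into the precise $\epsilon_h$. The complementary case $d_i\leq\beta$, in which $\mathbf{p}$ sits on the side of $\mathbf{0}$ opposite the obstacle, requires replacing the bound $\delta\leq\beta-r_a$ by $\delta\leq d_0-r_a$ and using the reach property of the modified obstacle from Lemma \ref{lemma:unique_projection} to recontrol the distance from the reversed segment $\mathcal{L}_s(\mathbf{x},\mathbf{0})$ to the whole of $\mathcal{O}_{i,\alpha}^M$; the $\|\mathbf{h}\|$ estimate itself is unchanged.
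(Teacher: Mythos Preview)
Your treatment of the principal case $d_i:=d(\mathbf{0},\mathcal{O}_{i,\alpha}^M)>\beta$ is correct and is in fact cleaner than the paper's. The paper separates two sub-cases (its Case~A, where $\mathcal{L}_s(\mathbf{h},\mathbf{0})$ meets $\mathcal{O}_{i,\alpha}^M$ itself, versus its Case~B, where it only meets the $r_a$-dilation) and handles each with a different geometric construction; your perpendicular-foot estimate covers both uniformly and lands directly on the same $\sqrt{d_i^2-r_a^2}$ bound that the paper reaches only in Case~B.

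The gap is in the complementary case $r_a<d_i\leq\beta$. There $\|\mathbf{p}\|=\beta-d_i$ (the paper confirms this via the reach of the complement, Lemmas~\ref{lemma:reach_property} and~\ref{lemma:reach_extends}), so for $\mathbf{x}\in\mathcal{H}_\mathbf{p}$ one only gets $\|\mathbf{x}\|\leq(\beta-d_i)+(d_0-r_a)$ with $d_0:=d(\mathbf{0},\mathcal{O}_\mathcal{W}^M)$. Combining this with your ``unchanged'' bound $\|\mathbf{h}\|>\sqrt{d_i^2-r_a^2}$ gives
\[
\|\mathbf{h}\|-\|\mathbf{x}\|>\sqrt{d_i^2-r_a^2}-(\beta-d_i)-(d_0-r_a),
\]
and since $\beta-d_i$ can be positive independently of $d_i-d_0$, this need not dominate $\epsilon_h$. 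The paper recovers the missing term through an extra step (its Case~B2 uses $\|\mathbf{h}_\beta\|-\|\mathbf{h}_{r_a}\|\geq\beta-r_a$), leading effectively to $\|\mathbf{h}\|\geq(\beta-r_a)+\sqrt{d_i^2-r_a^2}$.

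In your framework the same repair is essentially free. Because the foot $f$ lies strictly inside the segment you have $\|\mathbf{h}\|=\|f\|+\|\mathbf{h}-f\|$, and the two Pythagorean identities $\|f\|^2=\|\mathbf{q}\|^2-\|\mathbf{q}-f\|^2$ and $\|\mathbf{h}-f\|^2=\|\mathbf{h}-\mathbf{q}\|^2-\|\mathbf{q}-f\|^2$, together with $\|\mathbf{q}\|\geq d_i$, $\|\mathbf{h}-\mathbf{q}\|\geq\beta$, and $\|\mathbf{q}-f\|<r_a$, yield the stronger estimate
\[
\|\mathbf{h}\|>\sqrt{d_i^2-r_a^2}+\sqrt{\beta^2-r_a^2}.
\]
Since $\sqrt{\beta^2-r_a^2}\geq\beta-r_a\geq\beta-d_i$, this absorbs the extra $(\beta-d_i)$ term and the chain $\|\mathbf{h}\|-\|\mathbf{x}\|>\sqrt{d_i^2-r_a^2}-(d_0-r_a)\geq g(d_i)\geq\epsilon_h$ goes through. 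With this strengthening your approach is complete and, taken as a whole, more economical than the paper's four-case analysis.
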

\begin{proof}
See Appendix \ref{proof:bar_epsilon}.
\end{proof}

According to Lemma \ref{lemma:bar_epsilon}, if the \textit{hit point} belongs to the jump set of the \textit{move-to-target} mode associated with a connected modified obstacle $\mathcal{O}_{i, \alpha}^M$ and $\epsilon\in(0, \bar{\epsilon}]$, where $\bar{\epsilon}$ is chosen as per Lemma \ref{lemma:bar_epsilon}, then the set $\mathcal{ER}_m^{\mathbf{h}}\cap\mathcal{N}_{\gamma}(\mathcal{D}_{r_a}(\mathcal{O}_{i, \alpha}^M))$ is non-empty. Hence, we initialize the robot in the \textit{move-to-target} mode so that the \textit{hit point} will always belong to the jump set of the \textit{move-to-target} mode, as stated later in Theorem \ref{theorem:global_stability}.

}

According to \eqref{avoidance_jump_set} and \eqref{partition_rm}, while operating in the \textit{obstacle-avoidance} mode with some $m\in\{-1, 1\}$, the robot can switch to the \textit{move-to-target} mode only when its center belongs to the \textit{exit} region $\mathcal{R}_m\cup\mathcal{R}_a$ and is at least $\epsilon$ units closer to the target than the current \textit{hit point} $\mathbf{h}.$ This creates a hysteresis region and ensures  Zeno-free switching between the modes. This switching strategy is inspired by \cite{kamon1998tangentbug}, which allows us to establish convergence properties of the target location, as discussed later in Theorem \ref{theorem:global_stability}.

We then define the flow set of the \textit{obstacle-avoidance} mode $\mathcal{F}_m^{\mathcal{W}}$ for the state $\mathbf{x}$ as follows:
\begin{equation}
    \mathcal{F}_m^{\mathcal{W}}:= \mathcal{R}_l \cup\overline{\mathcal{R}_{-m}}\cup \overline{\mathcal{R}_{m}\cup\mathcal{R}_a\setminus\mathcal{ER}_m^{\mathbf{h}}} ,\label{avoidance_flow_set}
\end{equation}
where $m\in\{-1, 1\}$. Notice that the union of the jump set \eqref{avoidance_jump_set} and the flow set \eqref{avoidance_flow_set} exactly covers the modified robot-centred free workspace $\mathcal{V}_{r_a}$ \eqref{y_modified_free_workspace}. Refer to Fig. \ref{diagram:flwoandjumpset} for the representation of the flow and jump sets related to the modified obstacle-occupied workspace $\mathcal{O}_{\mathcal{W}}^M$.

Finally, the flow set $\mathcal{F}$ and the jump set $\mathcal{J}$, used in \eqref{hybrid_control_input}, are defined as
\begin{equation}
    \mathcal{F}:= \bigcup_{m\in\mathbb{M}}\mathcal{F}_m, \quad\mathcal{J}:= \bigcup_{m\in\mathbb{M}}\mathcal{J}_m,\label{bothmodes_flowjumpset}
\end{equation}
where for $m\in\mathbb{M},$ the sets $\mathcal{F}_m$ and $\mathcal{J}_m$ are given by
\begin{equation}
    \mathcal{F}_m := \mathcal{F}_{m}^{\mathcal{W}}\times\mathcal{V}_{r_a}\times\{m\}, \quad \mathcal{J}_{m} := \mathcal{J}_{m}^{\mathcal{W}}\times\mathcal{V}_{r_a}\times\{m\},\label{bothmodes_flowjumpset1}
\end{equation}
with $\mathcal{F}_m^{\mathcal{W}}, \mathcal{J}_m^{\mathcal{W}}$ defined in \eqref{stabilization_flow_set}, \eqref{stabilization_jump_set} for $m = 0$ and in \eqref{avoidance_flow_set}, \eqref{avoidance_jump_set} for $m\in\{-1, 1\}.$ 

\begin{figure}
    \centering
    \includegraphics[width = 0.9\linewidth]{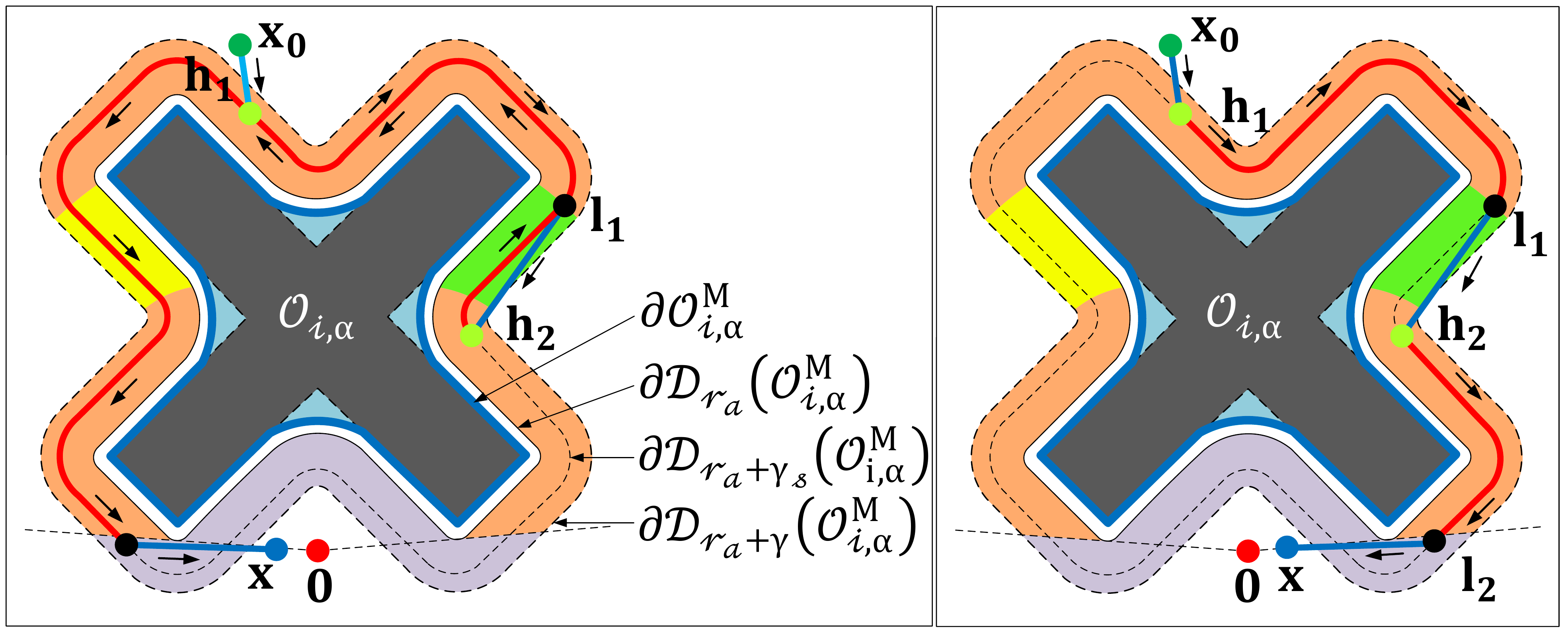}
        \caption{The figures show two possible paths that the robot, initialized at $\mathbf{x}_0$, takes before converging to the origin. The left figure illustrates a case in which the robot, operating in the \textit{obstacle-avoidance} mode, moves initially in the clockwise direction from $\mathbf{h}_1$ to $\mathbf{l}_1$, and then in the counter-clockwise direction from $\mathbf{h}_2$ to $\mathbf{l}_2$ with respect to the nearest point on the modified obstacle $\mathcal{O}_{i, \alpha}^M$. The right figure illustrates a case in which the robot, operating in the \textit{obstacle-avoidance} mode, moves in the clockwise direction with respect to the nearest point on the modified obstacle $\mathcal{O}_{i, \alpha}^M$.}
    \label{diagram:one_direction}
\end{figure}

\begin{remark}
\label{remark:one_direction}
Let us look at the case where the robot is moving in the $\gamma-$neighbourhood of a connected modified obstacle $\mathcal{O}_{i, \alpha}^M$, for some $i\in\mathbb{I}$. If the robot needs to switch between the modes of operation multiple times before leaving the $\gamma-$neighbourhood of the modified obstacle $\mathcal{O}_{i, \alpha}^M$, then it should move in the same direction in the \textit{obstacle-avoidance} mode \textit{i.e.}, either in the clockwise direction or in the counter-clockwise direction, to avoid retracing the previously travelled path, as shown in Fig. \ref{diagram:one_direction}b. In fact, if the robot does not maintain the same direction of motion in the \textit{obstacle-avoidance} mode, while operating in the $\gamma-$neighbourhood of the connected modified obstacle $\mathcal{O}_{i, \alpha}^M$, then it will retraces the previously travelled path, as shown in Fig. \ref{diagram:one_direction}a. 
\end{remark}

Next, we provide the update law $\mathbf{L}(\mathbf{x}, \mathbf{h}, m)$, used in \eqref{hybrid_control_input_2}.

\subsection{Update law $\mathbf{L}(\mathbf{x}, \mathbf{h}, m)$}\label{section:update_law}
The update law $\mathbf{L}(\xi)$, used in \eqref{hybrid_control_input_2}, updates the value of the \textit{hit point} $\mathbf{h}$ and the mode indicator $m$ when the state $(\mathbf{x}, \mathbf{h}, m)$ belongs to the jump set $\mathcal{J}$, which is defined in Section \ref{section:flowset_jumpset_construction}. When the robot, operating in the \textit{move-to-target} mode, enters in the jump set $\mathcal{J}_0$, defined in \eqref{stabilization_jump_set} and \eqref{bothmodes_flowjumpset1}, the update law $\mathbf{L}(\mathbf{x}, \mathbf{h}, 0)$ is given as
\begin{equation}
    \mathbf{L}(\mathbf{x}, \mathbf{h}, 0) = \left\{\begin{bmatrix}\mathbf{x}\\ z\end{bmatrix}\bigg|z\in\{-1, 1\}\right\} .\label{updatelaw_part1}
\end{equation}
Notice that, when the robot switches from the \textit{move-to-target} mode to the \textit{obstacle-avoidance} mode, the coordinates of the \textit{hit point} gets updated to the current value of $\mathbf{x}$. 

On the other hand, when the robot operating in the \textit{obstacle-avoidance} mode, enters in the jump set $\mathcal{J}_m,m\in\{-1, 1\}$, defined in \eqref{avoidance_jump_set} and \eqref{bothmodes_flowjumpset1}, the update law $\mathbf{L}(\mathbf{x}, \mathbf{h}, m), m\in\{-1, 1\}$, is given by
\begin{equation}
    \mathbf{L}(\mathbf{x}, \mathbf{h}, m) = \begin{bmatrix}\mathbf{h}\\ 0\end{bmatrix}.\label{updatelaw_part2}
\end{equation}
When the robot switches from the \textit{obstacle-avoidance} mode to the \textit{move-to-target mode}, the value of the \textit{hit point} remains unchanged. 

\textbf{Control design summary}: The proposed hybrid feedback control law can be summarized as follows:
\begin{itemize}
    \item \textbf{Parameters selection}: the target location is set at the origin with $\mathbf{0}\in\mathcal{W}_{r_a}^{\circ}$. The parameter $\alpha$ is set such that $\alpha > r_a$ and it satisfies the conditions in Lemma \ref{lemma:alpha_existence}.
    The gain parameters $\kappa_s$ and $\kappa_r$ are set to positive values. The parameter $\bar{\epsilon}$, used in \eqref{partition_rm}, is chosen as per Lemma \ref{lemma:bar_epsilon}. The scalar parameter $\gamma$, used in the construction of the flow set $\mathcal{F}$ and the jump set $\mathcal{J}$, is selected such that $\gamma\in(0, \alpha - r_a)$, and the parameter $\gamma_s$ is set to satisfy $0 < \gamma_s < \gamma.$

    \item \textbf{Obstacle modification}: the obstacle reshaping operator \eqref{obstacle_modification_step} is used to obtain the modified obstacle-occupied workspace $\mathcal{O}_{\mathcal{W}}^M$. {The state vector is initialized in the set $\mathcal{K}$ \textit{i.e.}, $\xi(0,0)\in\mathcal{K}.$} 
    
    \item \textbf{\textit{Move-to-target} mode $m = 0$}: this mode is activated when $\xi \in \mathcal{F}_0$. As per \eqref{hybrid_control_input_1}, the control input is given by $\mathbf{u}(\xi) = -\kappa_s\mathbf{x}$, causing $\mathbf{x}$ to evolve along the line segment $\mathcal{L}_s(\mathbf{0}, \mathbf{x})$ towards the origin. If, at some instance of time, $\xi$ enters in the jump set $\mathcal{J}_0$ of the \textit{move-to-target} mode, the state variables $(\mathbf{h}, m)$ are updated using \eqref{updatelaw_part1}, and the control input switches to the \textit{obstacle-avoidance} mode.
    
    \item\textbf{\textit{Obstacle-avoidance} mode $m \in\{-1, 1\}$}: this mode is activated when $\xi \in \mathcal{F}_m$ for some $m\in\{-1, 1\}$. As per \eqref{hybrid_control_input_1}, the control input is given by $\mathbf{u}(\xi) = \kappa_r\mathbf{v}(\mathbf{x},m)$, causing $\mathbf{x}$ to evolve in the $\gamma-$neighborhood of the nearest modified obstacle until the state $\xi$ enters in the jump set $\mathcal{J}_m$, $m\in\{-1, 1\}$. When $\xi\in\mathcal{J}_m$, $m\in\{-1, 1\}$, the control input switches to the \textit{move-to-target} mode by setting $m = 0$, as per \eqref{updatelaw_part2}.
\end{itemize}

This concludes the design of the proposed hybrid feedback controller \eqref{hybrid_control_input}.

\section{Stability Analysis}\label{sec:stabilitY_analysis}
The hybrid closed-loop system resulting from the hybrid control law \eqref{hybrid_control_input} is given by
\begin{equation}
    \underbrace{\begin{matrix}
    \mathbf{\dot{x}}\\\mathbf{\dot{h}}\\\dot{m}
    \end{matrix}\begin{matrix*}[l]=\mathbf{u}(\xi)\\=\mathbf{0}\\=0\end{matrix*}}_{\dot{\xi} = \mathbf{F}(\xi)}, {\xi\in\mathcal{F}},\quad\underbrace{\begin{matrix}\mathbf{x}^+\\\begin{bmatrix}\mathbf{h}^+\\m^+\end{bmatrix}\end{matrix} \begin{matrix*}[l]=\mathbf{x}\vspace{0.16cm}\\\vspace{0.3cm}\in\mathbf{L}(\xi)\end{matrix*}}_{\xi^+ \in\mathbf{J}(\xi)}, {\xi\in\mathcal{J}},\label{hybrid_closed_loop_system}
\end{equation}
where $\mathbf{u}(\xi)$ is defined in \eqref{hybrid_control_input_1}, and the update law $\mathbf{L}(\xi)$ is provided in \eqref{updatelaw_part1}, \eqref{updatelaw_part2}. The definitions of the flow set $\mathcal{F}$ and the jump set $\mathcal{J}$ are provided in \eqref{bothmodes_flowjumpset}, \eqref{bothmodes_flowjumpset1}. Next, we analyze the hybrid closed-loop system \eqref{hybrid_closed_loop_system} in terms of the forward invariance of the obstacle-free state space $\mathcal{K}:=\mathcal{V}_{r_a}\times\mathcal{V}_{r_a}\times\mathbb{M}$ along with the stability properties of the target set $\mathcal{A}$, which is defined as
\begin{equation}
    \mathcal{A}:= \{\mathbf{0}\}\times\mathcal{V}_{r_a}\times\mathbb{M}\label{target_set}.
\end{equation}
First, we analyze the forward invariance of the modified obstacle-free workspace, which then will be followed by the convergence analysis.

For safe autonomous navigation, the state $\mathbf{x}$ must always evolve within the set $\mathcal{V}_{r_a}$ \eqref{y_modified_free_workspace}.
This is equivalent to having the set $\mathcal{K}:=\mathcal{V}_{r_a}\times\mathcal{V}_{r_a}\times\mathbb{M}$  forward invariant for the hybrid closed-loop system \eqref{hybrid_closed_loop_system}. This is stated in the next Lemma.

\begin{lemma}
Under Assumptions \ref{assumption:connected_interior} and \ref{Assumption:reach}, for the hybrid closed-loop system \eqref{hybrid_closed_loop_system}, the obstacle-free set $\mathcal{K}:= \mathcal{V}_{r_a}\times\mathcal{V}_{r_a}\times\mathbb{M}$ is forward invariant.\label{lemma:forward_invariance}
\end{lemma}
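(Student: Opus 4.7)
The plan is to split forward invariance into a jump check and a flow check. The jump check is immediate from the update law \eqref{updatelaw_part1}--\eqref{updatelaw_part2}: the $\mathbf{x}$-component is never modified, $\mathbf{h}^{+}$ is either the current $\mathbf{x}\in\mathcal{V}_{r_a}$ (when $m=0$) or the unchanged previous $\mathbf{h}\in\mathcal{V}_{r_a}$ (when $m\in\{-1,1\}$), and $m^{+}\in\mathbb{M}$, so $\mathbf{J}(\xi)\subset\mathcal{K}$ whenever $\xi\in\mathcal{J}\cap\mathcal{K}$.

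For the continuous flow I would invoke Nagumo's theorem \cite[Theorem 4.7]{blanchini2008set}. Because $\dot{\mathbf{h}}=\mathbf{0}$ and $\dot{m}=0$ along solutions, only the $\mathbf{x}$-dynamics can push the state out of $\mathcal{K}$, and it therefore suffices to verify that $\mathbf{u}(\xi)\in\mathbf{T}_{\mathcal{V}_{r_a}}(\mathbf{x})$ at every $\xi\in\mathcal{F}$ with $\mathbf{x}\in\partial\mathcal{V}_{r_a}$. At such an $\mathbf{x}$ we have $d(\mathbf{x},\mathcal{O}_{\mathcal{W}}^{M})=r_a<\alpha$, so Lemma \ref{lemma:unique_projection} yields uniqueness of $\Pi(\mathbf{x},\mathcal{O}_{\mathcal{W}}^{M})$, and Lemma \ref{lemma:reach_extends} applied to $\mathcal{O}_{\mathcal{W}}^{M}$ gives $\mathbf{reach}(\mathcal{D}_{r_a}(\mathcal{O}_{\mathcal{W}}^{M}))\geq\alpha-r_a>0$. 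Together these reduce the tangent cone to the half-space $\mathbf{T}_{\mathcal{V}_{r_a}}(\mathbf{x})=\{\mathbf{w}\in\mathbb{R}^{2}:\mathbf{w}^{\intercal}(\mathbf{x}-\Pi(\mathbf{x},\mathcal{O}_{\mathcal{W}}^{M}))\geq 0\}$.

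I would then check the tangent condition mode by mode. For $m\in\{-1,1\}$, the formula \eqref{definition:vxm} makes $\mathbf{u}=\kappa_r\mathbf{v}(\mathbf{x},m)$ a $\pm\pi/2$ rotation of the unit vector $(\mathbf{x}-\Pi)/\|\mathbf{x}-\Pi\|$, hence orthogonal to $\mathbf{x}-\Pi$, and the condition $\mathbf{u}^{\intercal}(\mathbf{x}-\Pi)=0$ holds trivially. For $m=0$, $\mathbf{u}=-\kappa_s\mathbf{x}$; since $d(\mathbf{x},\mathcal{O}_{\mathcal{W}}^{M})=r_a<r_a+\gamma_s$, the definition \eqref{stabilization_flow_set} of $\mathcal{F}_0^{\mathcal{W}}$ forces $\mathbf{x}\in\mathcal{R}_e$. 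The central geometric claim I would establish is that every $\mathbf{x}\in\partial\mathcal{V}_{r_a}\cap\mathcal{R}_e$ satisfies $\mathbf{x}^{\intercal}(\mathbf{x}-\Pi(\mathbf{x},\mathcal{O}_{\mathcal{W}}^{M}))\leq 0$. If this inner product were strictly positive, the derivative calculation $\tfrac{d}{d\tau}\|(1-\tau)\mathbf{x}-\Pi\|^{2}\big|_{\tau=0}=-2\mathbf{x}^{\intercal}(\mathbf{x}-\Pi)<0$ shows that moving infinitesimally along $-\mathbf{x}$ enters $\mathcal{D}_{r_a}(\mathcal{O}_{\mathcal{W}}^{M})^{\circ}$, so $\mathcal{L}_s(\mathbf{x},\mathbf{0})\cap\mathcal{D}_{r_a}(\mathcal{O}_{i,\alpha}^{M})^{\circ}\neq\emptyset$ for the closest modified obstacle $\mathcal{O}_{i,\alpha}^{M}$; combined with $\mathbf{x}^{\intercal}(\mathbf{x}-\Pi)>0$ this places $\mathbf{x}$ in the interior of the landing region $\mathcal{R}_l^{i}$, contradicting $\mathbf{x}\in\mathcal{R}_e$. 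Hence $\mathbf{u}^{\intercal}(\mathbf{x}-\Pi)=-\kappa_s\mathbf{x}^{\intercal}(\mathbf{x}-\Pi)\geq 0$, closing the Nagumo argument.

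The step I expect to be the main obstacle is the last geometric argument for the \textit{move-to-target} mode. The subtlety is that $\mathcal{R}_e$ is defined in \eqref{definition:exit_region} as a closure, so an a priori boundary point of $\mathcal{V}_{r_a}$ in $\mathcal{R}_e$ could conceivably be a limit of $\mathcal{R}_l$-interior points, and the contradiction must be pushed to show that the strict inequality $\mathbf{x}^{\intercal}(\mathbf{x}-\Pi)>0$ actually opens up a whole neighbourhood of $\mathbf{x}$ inside $\mathcal{R}_l^{i}$ and hence cannot lie on the closure of its complement. Making this rigorous relies on the continuity of $\mathbf{x}\mapsto\Pi(\mathbf{x},\mathcal{O}_{\mathcal{W}}^{M})$ on the $\alpha$-neighbourhood of $\mathcal{O}_{\mathcal{W}}^{M}$ (afforded by Lemma \ref{lemma:unique_projection} and the positive reach of the modified obstacle set) together with the openness of $\mathcal{D}_{r_a}(\mathcal{O}_{\mathcal{W}}^{M})^{\circ}$.
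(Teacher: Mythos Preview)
Your jump check and the mode-by-mode tangent calculations are exactly what the paper does: orthogonality of $\mathbf{v}(\mathbf{x},m)$ to $\mathbf{x}-\Pi(\mathbf{x},\mathcal{O}_{\mathcal{W}}^{M})$ for $m\in\{-1,1\}$, and for $m=0$ the observation that $\partial\mathcal{V}_{r_a}\cap\mathcal{F}_0^{\mathcal{W}}\subset\mathcal{R}_e$ forces $-\kappa_s\mathbf{x}$ into the correct half-space. Your contradiction argument for the $m=0$ case is more explicit than the paper's one-line citation of \eqref{definition:exit_region}, and your worry about the closure in $\mathcal{R}_e$ is well placed and correctly resolved via continuity of $\Pi$.

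The gap is in the framing, not the geometry. In the hybrid setting of \cite{goebel2012hybrid,sanfelice2021hybrid}, forward invariance of $\mathcal{K}$ means that every maximal solution starting in $\mathcal{K}$ is \emph{complete} and remains in $\mathcal{K}$. Your Nagumo check on $\mathcal{V}_{r_a}$ shows that flow cannot push $\mathbf{x}$ across $\partial\mathcal{V}_{r_a}$, but since flow is already constrained to the (smaller) set $\mathcal{F}\subset\mathcal{K}$, that containment is almost automatic; what is \emph{not} automatic is that solutions do not die prematurely. The paper therefore argues differently: it first records $\mathcal{F}\cup\mathcal{J}=\mathcal{K}$, then verifies the viability condition $\mathbf{F}(\xi)\cap\mathbf{T}_{\mathcal{F}}(\xi)\neq\emptyset$ on $\mathcal{F}\setminus\mathcal{J}$ (not on $\mathcal{V}_{r_a}$) so that \cite[Proposition 6.10]{goebel2012hybrid} applies, and finally rules out finite escape time explicitly (boundedness of $\mathcal{F}_{\pm1}^{\mathcal{W}}$ and $\tfrac{d}{dt}\|\mathbf{x}\|^2\le 0$ in the \textit{move-to-target} mode) together with $\mathbf{J}(\mathcal{J})\subset\mathcal{K}$. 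Your tangent computations slot directly into that viability check, since $\partial\mathcal{F}_m^{\mathcal{W}}\setminus\mathcal{J}_m^{\mathcal{W}}\subset\partial\mathcal{D}_{r_a}(\mathcal{O}_{\mathcal{W}}^{M})$; but as written you are missing the $\mathcal{F}\cup\mathcal{J}=\mathcal{K}$ step and the no-finite-escape/completeness step, without which the lemma (under the definition being used) is not established.
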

\begin{proof}
See Appendix \ref{proof:forward_invariance}.
\end{proof}

Next, we show that if the robot is initialized in the \textit{move-to-target} mode, at any location in $\mathcal{V}_{r_a}$ and the parameter $\bar{\epsilon}$, used in \eqref{partition_rm}, is chosen as per Lemma \ref{lemma:bar_epsilon}, then it will safely and asymptotically converge to the target location at the origin.

\begin{theorem}\label{theorem:global_stability}
Consider the hybrid closed-loop system \eqref{hybrid_closed_loop_system} and let Assumption \ref{assumption:connected_interior} holds true. Also, let Assumption \ref{Assumption:reach} hold true for the parameter $\alpha$ chosen as per Lemma \ref{lemma:alpha_existence}. If the parameter $\bar{\epsilon}$, used in \eqref{partition_rm}, is chosen as per Lemma \ref{lemma:bar_epsilon}, then
\begin{itemize}
\item[i)] the obstacle-free set $\mathcal{K}$ is forward invariant,
\item[ii)] the set $\mathcal{A}$ is stable and attractive from all initial conditions $\xi(0,0) \in \mathcal{V}_{r_a}\times\mathcal{V}_{r_a}\times\{0\},$
\item[iii)] the number of jumps is finite.
\end{itemize}
\end{theorem}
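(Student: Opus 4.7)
The plan is to establish the three claims in sequence, since each depends on the previous one. Claim (i) is already contained in Lemma~\ref{lemma:forward_invariance}; its invocation disposes of forward invariance once all hypotheses are in place.

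For claim (iii), the driving idea is a monotone decrease along the sequence of \textit{hit points}. By the initialization $m(0,0) = 0$ and by \eqref{updatelaw_part1}, every \textit{hit point} $\mathbf{h}_k$ is the robot's location at a jump from the \textit{move-to-target} mode, hence $\mathbf{h}_k\in\mathcal{J}_0^{\mathcal{W}}\subset\mathcal{N}_{\gamma_s}(\mathcal{D}_{r_a}(\mathcal{O}_{\mathcal{W}}^M))$, so Lemma~\ref{lemma:bar_epsilon} is applicable. Fix such a $\mathbf{h}_k$, let $\mathcal{O}_{i,\alpha}^M$ be the connected modified obstacle containing $\Pi(\mathbf{h}_k,\mathcal{O}_{\mathcal{W}}^M)$, and set $\beta = d(\mathbf{h}_k,\mathcal{O}_{\mathcal{W}}^M)\in[r_a,r_a+\gamma_s]$. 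I will verify a three-step cycle. First, during the \textit{obstacle-avoidance} flow with $m\in\{-1,1\}$, the vector $\mathbf{v}(\mathbf{x},m)$ in \eqref{definition:vxm} is orthogonal to $\mathbf{x}-\Pi(\mathbf{x},\mathcal{O}_{\mathcal{W}}^M)$, so
\begin{equation*}
\tfrac{d}{dt}\,d(\mathbf{x},\mathcal{O}_{\mathcal{W}}^M) = \tfrac{(\mathbf{x}-\Pi)^{\intercal}}{\|\mathbf{x}-\Pi\|}\,\kappa_r\mathbf{v}(\mathbf{x},m) = 0,
\end{equation*}
provided the unique projection $\Pi(\mathbf{x},\mathcal{O}_{\mathcal{W}}^M)$ remains well-defined, which is ensured throughout the $\alpha$-neighborhood of $\mathcal{O}_{\mathcal{W}}^M$ by Lemma~\ref{lemma:unique_projection}. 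Consequently, the trajectory stays on the level curve $\partial\mathcal{D}_{\beta}(\mathcal{O}_{i,\alpha}^M)$, a compact connected simply-closed curve by Lemmas~\ref{lemma:reach_property} and \ref{lemma:pathwise_connected}. Second, Lemma~\ref{lemma:bar_epsilon} guarantees that this level curve meets $\mathcal{ER}_m^{\mathbf{h}_k}$ near the point $\mathbf{p}\in\mathcal{PJ}(\mathbf{0},\partial\mathcal{D}_\beta(\mathcal{O}_{i,\alpha}^M))$; since the robot traverses the curve at constant tangential speed $\kappa_r$ in either direction, it enters $\mathcal{ER}_m^{\mathbf{h}_k}$ in finite time at some exit location $\mathbf{l}_k$. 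By \eqref{partition_rm}, $\|\mathbf{l}_k\| \leq \|\mathbf{h}_k\|-\epsilon$. Third, from $\mathbf{l}_k$ the update \eqref{updatelaw_part2} resets $m$ to $0$ and the flow becomes $\dot{\mathbf{x}}=-\kappa_s\mathbf{x}$, so $\|\mathbf{x}(t)\|$ is strictly decreasing; if this segment produces a subsequent hit point $\mathbf{h}_{k+1}$, then $\|\mathbf{h}_{k+1}\|\leq \|\mathbf{l}_k\|\leq \|\mathbf{h}_k\|-\epsilon$. Iterating and using $\|\mathbf{h}_k\|\geq 0$ yields at most $\lceil \|\mathbf{h}_1\|/\epsilon\rceil+1$ hit points, which bounds the total number of jumps and proves~(iii).

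For claim (ii), attractivity is an immediate consequence of (iii): after the final hit point $\mathbf{h}_K$ and the associated exit $\mathbf{l}_K$, the closed-loop dynamics reduces to $\dot{\mathbf{x}}=-\kappa_s\mathbf{x}$ for all subsequent times, so $\mathbf{x}(t,j)\to\mathbf{0}$ and the $(\mathbf{h},m)$-components remain constant, i.e.\ $\xi(t,j)\to\mathcal{A}$. Stability of $\mathcal{A}$ is obtained from the geometry near the target: by Lemma~\ref{lemma:properties_of_modified_workspace}, $\mathbf{0}\in\mathcal{V}_{r_a}^{\circ}$, and by the choice $\delta<d(\mathbf{0},\mathcal{O}_{\mathcal{W}}^M)-r_a$, the ball $\mathcal{B}_\delta(\mathbf{0})$ is contained both in the \textit{move-to-target} flow set \eqref{stabilization_flow_set} and, by \eqref{avoidance_jump_set}, in every \textit{obstacle-avoidance} jump set. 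Thus any trajectory entering $\mathcal{B}_\delta(\mathbf{0})$ either flows under $\dot{\mathbf{x}}=-\kappa_s\mathbf{x}$, so $\|\mathbf{x}\|$ decreases, or is immediately reset to $m=0$ and then flows with decreasing $\|\mathbf{x}\|$; a standard $\varepsilon$--$\eta$ argument then yields stability: for any neighborhood $\mathcal{N}_\eta(\mathcal{A})$ of the target set, any initial condition with $\|\mathbf{x}(0,0)\|<\min(\eta,\delta)$ satisfies $\|\mathbf{x}(t,j)\|\leq \|\mathbf{x}(0,0)\|$ for all $(t,j)\in\mathrm{dom}\,\xi$.

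The main obstacle will be the obstacle-avoidance step within the cycle, namely verifying rigorously that (a) the projection $\Pi(\mathbf{x},\mathcal{O}_{\mathcal{W}}^M)$ remains on a single connected component $\mathcal{O}_{i,\alpha}^M$ throughout the flow (which follows from continuity of the single-valued projection on the $\alpha$-neighborhood, Lemma~\ref{lemma:unique_projection}, and the fact that distinct components of $\mathcal{O}_{\mathcal{W}}^M$ are separated by more than $2\alpha$ per Lemma~\ref{lemma:pathwise_connected}), and (b) that regardless of the rotation direction $m\in\{-1,1\}$ the robot reaches an exit location on the closed level curve in finite time, using Lemma~\ref{lemma:bar_epsilon} to localize a non-empty piece of $\mathcal{ER}_m^{\mathbf{h}_k}$ on that curve. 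Once these geometric checks are in place, the remainder is bookkeeping across the hybrid time domain.
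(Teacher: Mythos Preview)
Your proposal is correct and follows essentially the same route as the paper: forward invariance via Lemma~\ref{lemma:forward_invariance}; the constant-distance property of the \textit{obstacle-avoidance} flow (which the paper packages as Fact~3 inside the proof of Lemma~\ref{lemma:always_enters_in_move_to_target}) together with Lemma~\ref{lemma:bar_epsilon} to force the robot into $\mathcal{ER}_m^{\mathbf{h}_k}$, yielding the $\epsilon$-decrease $\|\mathbf{h}_{k+1}\|\le\|\mathbf{h}_k\|-\epsilon$ and hence a finite jump count; and stability via a small ball around the origin that is trapped in the \textit{move-to-target} regime. One small correction in your stability step: the bound $\delta<d(\mathbf{0},\mathcal{O}_{\mathcal{W}}^M)-r_a$ does not by itself ensure $\mathcal{B}_\delta(\mathbf{0})\subset\mathcal{F}_0^{\mathcal{W}}$ and $\mathcal{B}_\delta(\mathbf{0})\cap\mathcal{J}_0^{\mathcal{W}}=\emptyset$, since $\mathcal{J}_0^{\mathcal{W}}\subset\mathcal{N}_{\gamma_s}(\mathcal{D}_{r_a}(\mathcal{O}_{\mathcal{W}}^M))$ can reach within $d(\mathbf{0},\mathcal{O}_{\mathcal{W}}^M)-r_a-\gamma_s$ of the origin. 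The paper handles this by taking the radius $\mu=\min\{\mu_1,\mu_2,\mu_3\}$ with $\mu_2$ chosen so that $\mathcal{B}_{\mu_2}(\mathbf{0})\cap\mathcal{J}_0^{\mathcal{W}}=\emptyset$; you should do the same in your $\varepsilon$--$\eta$ argument rather than using $\delta$ directly.
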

\begin{proof}
See Appendix \ref{proof:global_stability}.
\end{proof}

According to Theorem \ref{theorem:global_stability}, we initialize the robot in the \textit{move-to-target} mode to ensure that when it switches to the \textit{obstacle-avoidance} mode, the \textit{hit point} $\mathbf{h}$ belongs to the set $\mathcal{J}_{0}^{\mathcal{W}}.$ This allows us to establish an upper bound on the value of the parameter $\bar{\epsilon}$ as given in Lemma \ref{lemma:bar_epsilon}, which is crucial to ensure that the robot, when operating in the \textit{obstacle-avoidance} mode, will certainly enter in the \textit{move-to-target} mode.

{
\begin{remark}
Theorem \ref{theorem:global_stability} guarantees global asymptotic stability of the target location in the modified set $\mathcal{V}_{r_a}$ and not in the original set $\mathcal{W}_{r_a}$. Since the obstacle reshaping operator $\mathbf{M}$ is \textit{extensive} \cite[Table 1]{serra1986introduction} \textit{i.e.}, $\mathcal{O}_{\mathcal{W}}\subseteq\mathcal{O}_{\mathcal{W}}^M$, the set $\mathcal{V}_{r_a}$ is a subset of the set $\mathcal{W}_{r_a}$ \textit{i.e.}, $\mathcal{V}_{r_a}\subseteq\mathcal{W}_{r_a}.$ Interestingly, if one chooses the value of the parameter $\alpha$ close to $r_a$, then the region occupied by the set $\mathcal{V}_{r_a}$ approaches the original set $\mathcal{W}_{r_a}$. {In other words, if $\mathcal{V}_{r_a}^1$ and $\mathcal{V}_{r_a}^2$ are two modified obstacle-free workspaces obtained for two different values of the parameter $\alpha_1$ and $\alpha_2$, respectively, using \eqref{obstacle_modification_step} and \eqref{y_modified_free_workspace}, where $\alpha_1, \alpha_2\in(r_a, \bar{\alpha}]$, $ \alpha_1 > \alpha_2$ and $\bar{\alpha}$ defined as per Lemma \ref{lemma:alpha_existence}, then the set $\mathcal{V}_{r_a}^1\subseteq\mathcal{V}_{r_a}^2\subseteq\mathcal{W}_{r_a}$.} Hence, by selecting a smaller value of the parameter $\alpha$ one can implement the proposed hybrid feedback controller \eqref{hybrid_control_input} in a larger area.

\label{remark:larger_area}
\end{remark}

Unlike \cite[Assumprtion 1]{arslan2019sensor}, \cite[Assumprtion 2]{verginis2021adaptive}, and \cite[Section V-C3]{berkane2021obstacle}, Assumptions \ref{assumption:connected_interior} and \ref{Assumption:reach} do not impose restrictions on the minimum separation between any pair of obstacles and allow obstacles to be non-convex. In particular, Assumptions \ref{assumption:connected_interior} and \ref{Assumption:reach} are satisfied
in the case of environments with convex obstacles where the minimum separation between any pair of obstacles is greater than $2r_a$, as discussed next.
\begin{proposition}
Let the workspace $\mathcal{W}$ be a compact, convex subset of $\mathbb{R}^2$. Let the obstacles $\mathcal{O}_i, i\in\mathbb{I}\setminus\{0\}$, be compact and convex, such that $d(\mathcal{O}_i, \mathcal{O}_j) > 2r_a, \forall i,j\in\mathbb{I}, i\ne j$, and $\mathbf{0}\in\mathcal{W}_{r_a}^{\circ}$. Then Assumptions \ref{assumption:connected_interior} and \ref{Assumption:reach} hold true for all $\alpha\in(r_a, \bar{\alpha}]$, where the parameter $\bar{\alpha}$ is defined as
\begin{equation}
    \bar{\alpha} = \underset{i, j\in\mathbb{I}, i\ne j}{\min}d(\mathcal{O}_i, \mathcal{O}_j)/2.\label{bar_alpha_special_formula}
\end{equation}
\label{lemma:assumption_equivalence}
\end{proposition}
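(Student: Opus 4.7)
The plan is to verify Assumption \ref{assumption:connected_interior} and Assumption \ref{Assumption:reach} in turn, exploiting three geometric facts: (i) the $\rho$-dilation of a convex compact set is convex compact; (ii) the separation $d(\mathcal{O}_i,\mathcal{O}_j)>2r_a$ forces pairwise disjointness of the $r_a$-dilations, and the choice $\alpha\leq\bar{\alpha}$ gives $2\alpha\leq d(\mathcal{O}_i,\mathcal{O}_j)$ for every $i\neq j$ in $\mathbb{I}$; and (iii) any convex closed set admits a unique nearest-point projection from outside.

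For Assumption \ref{assumption:connected_interior}, the convexity of $\mathcal{W}$ makes $U:=\{\mathbf{x}\in\mathbb{R}^2:d(\mathbf{x},\mathcal{W}^c)>r_a\}$ an open convex set, and unwinding the definitions in Section \ref{Section:problem_formulation} yields
\begin{equation*}
\mathcal{W}_{r_a}^{\circ}=U\setminus\bigcup_{i=1}^{b}\mathcal{D}_{r_a}(\mathcal{O}_i).
\end{equation*}
The sets $\mathcal{D}_{r_a}(\mathcal{O}_i)$, $i\geq 1$, are pairwise disjoint convex compact subsets of $U$ (disjointness by the triangle inequality from $d(\mathcal{O}_i,\mathcal{O}_j)>2r_a$; inclusion in $U$ from $d(\mathcal{O}_i,\mathcal{O}_0)>2r_a$). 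Removing finitely many disjoint convex compact ``holes'' from an open convex planar region preserves pathwise connectedness, since any two points in the remainder can be joined by a polyline that detours around each hole. Combined with the hypothesis $\mathbf{0}\in\mathcal{W}_{r_a}^{\circ}$, Assumption \ref{assumption:connected_interior} follows.

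For Assumption \ref{Assumption:reach}, I fix $\alpha\in(r_a,\bar{\alpha}]$, pick $\mathbf{x}\in\partial\mathcal{W}_{\alpha}$ and a unit $\mathbf{n}\in\mathbf{N}_{\mathcal{W}_{\alpha}}(\mathbf{x})$, and let $\mathbb{I}_{\mathbf{x}}:=\{i\in\mathbb{I}:d(\mathbf{x},\mathcal{O}_i)=\alpha\}$. By convexity, for each $i\in\mathbb{I}_{\mathbf{x}}$ there is a unique nearest point $\mathbf{p}_i\in\mathcal{O}_i$ from $\mathbf{x}$ (for $i=0$, $\mathbf{p}_0\in\partial\mathcal{W}\subset\overline{\mathcal{O}_0^{\circ}}$, since $\mathcal{O}_0^{\circ}=\mathbb{R}^2\setminus\mathcal{W}$). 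Using that each obstacle has nonempty interior (so that $\mathbf{p}_i\in\overline{\mathcal{O}_i^{\circ}}$), the core claim is $\mathcal{B}_{\alpha}^{\circ}(\mathbf{p}_i)\cap\mathcal{W}_{\alpha}=\emptyset$ for every $i\in\mathbb{I}_{\mathbf{x}}$: given $\mathbf{q}\in\mathcal{B}_{\alpha}^{\circ}(\mathbf{p}_i)$, I choose $\mathbf{a}\in\mathcal{O}_i^{\circ}$ with $\|\mathbf{p}_i-\mathbf{a}\|$ small enough that $\|\mathbf{q}-\mathbf{a}\|<\alpha$; then $\mathbf{q}\in\mathcal{D}_{\alpha}(\mathcal{O}_i^{\circ})\subset\mathcal{D}_{\alpha}(\mathcal{O}_{\mathcal{W}}^{\circ})$, i.e., $\mathbf{q}\notin\mathcal{W}_{\alpha}$. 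It then suffices to identify every admissible unit normal $\mathbf{n}$ with some $\mathbf{n}_i:=(\mathbf{p}_i-\mathbf{x})/\alpha$, $i\in\mathbb{I}_{\mathbf{x}}$, so that $\mathbf{x}+\alpha\mathbf{n}=\mathbf{p}_i$.

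The main obstacle is this normal-cone identification. When $|\mathbb{I}_{\mathbf{x}}|=1$, the $\alpha$-dilation $\mathcal{D}_{\alpha}(\mathcal{O}_i)$ is convex compact with a $C^{1}$ boundary through $\mathbf{x}$ (Minkowski summation with a ball smooths the boundary), so its outward normal there is the unique $-\mathbf{n}_i$ and the sole unit normal to $\mathcal{W}_{\alpha}$ at $\mathbf{x}$ is $\mathbf{n}_i$. The delicate case $|\mathbb{I}_{\mathbf{x}}|\geq 2$ is squeezed by the separation bound: for any $i\neq j$ in $\mathbb{I}_{\mathbf{x}}$,
\begin{equation*}
2\bar{\alpha}\leq d(\mathcal{O}_i,\mathcal{O}_j)\leq\|\mathbf{p}_i-\mathbf{p}_j\|\leq\|\mathbf{p}_i-\mathbf{x}\|+\|\mathbf{x}-\mathbf{p}_j\|=2\alpha\leq 2\bar{\alpha},
\end{equation*}
forcing equality throughout, whence $\alpha=\bar{\alpha}$ and $\mathbf{p}_i,\mathbf{x},\mathbf{p}_j$ are collinear with $\mathbf{x}$ the midpoint of $\mathbf{p}_i\mathbf{p}_j$. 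Consequently $\mathbf{n}_i=-\mathbf{n}_j$ and $|\mathbb{I}_{\mathbf{x}}|=2$; the two $\alpha$-dilations are internally tangent at $\mathbf{x}$ with a common tangent line, so $\mathbf{N}_{\mathcal{W}_{\alpha}}(\mathbf{x})$ is one-dimensional with only $\mathbf{n}_i$ and $\mathbf{n}_j$ as unit vectors. Since the core claim already delivers the ball condition for both, Assumption \ref{Assumption:reach} is verified in all cases.
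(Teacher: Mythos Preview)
Your verification of Assumption \ref{assumption:connected_interior} is fine and matches the paper's argument. For Assumption \ref{Assumption:reach} you take a different route from the paper: the paper argues that $\mathbf{reach}(\mathcal{W}_\alpha)\geq\alpha$ and then invokes \cite[Lemma 4.5]{rataj2019curvature} to extract the ball condition, whereas you attempt to verify the ball condition directly by showing that every unit $\mathbf{n}\in\mathbf{N}_{\mathcal{W}_\alpha}(\mathbf{x})$ equals $(\mathbf{p}_i-\mathbf{x})/\alpha$ for some nearest point $\mathbf{p}_i\in\mathcal{O}_i$.

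That normal-cone identification has a genuine gap at the workspace obstacle $\mathcal{O}_0=(\mathcal{W}^\circ)^c$. Your sentence ``the $\alpha$-dilation $\mathcal{D}_\alpha(\mathcal{O}_i)$ is convex compact with a $C^1$ boundary'' is valid only for the internal obstacles $i\geq 1$; for $i=0$ the set $\mathcal{O}_0$ is neither compact nor convex, and $\mathcal{D}_\alpha(\mathcal{O}_0)$ is the complement of the convex inner parallel body $\mathcal{W}\ominus\mathcal{B}_\alpha$, whose boundary need not be $C^1$. Concretely, take $\mathcal{W}$ a square and $\mathbf{x}$ a corner of $\mathcal{W}\ominus\mathcal{B}_\alpha$ (far from all internal obstacles, so $\mathbb{I}_\mathbf{x}=\{0\}$): the normal cone $\mathbf{N}_{\mathcal{W}_\alpha}(\mathbf{x})$ is a two-dimensional wedge, and for a unit $\mathbf{n}$ in its relative interior the point $\mathbf{x}+\alpha\mathbf{n}$ lies strictly inside $\mathcal{W}$, hence is \emph{not} in $\mathcal{O}_0$, so $\mathbf{n}$ is not of the form $(\mathbf{p}_0-\mathbf{x})/\alpha$ for any nearest point $\mathbf{p}_0$. (For the same reason your uniqueness claim for $\mathbf{p}_0$ also fails at such corners.) Assumption \ref{Assumption:reach} is nevertheless true there, because $\mathcal{W}\ominus\mathcal{B}_\alpha$ is convex and hence $\mathcal{B}_\alpha^\circ(\mathbf{x}+\alpha\mathbf{n})\subset\{\mathbf{y}:\mathbf{n}^\top(\mathbf{y}-\mathbf{x})>0\}$ is disjoint from it; but that convexity argument is exactly the missing ingredient. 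Either add a separate treatment of $\mathcal{O}_0$ via this supporting-halfspace observation, or follow the paper's shortcut through the reach.
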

\begin{proof}
See Appendix \ref{proof:proposition}.
\end{proof}

The workspace that satisfies the conditions (commonly used in the literature) in  Proposition \ref{lemma:assumption_equivalence}, also satisfies Assumptions \ref{assumption:connected_interior} and \ref{Assumption:reach}. 
Notice that, since the internal obstacles are convex, if one chooses $\alpha\in(0, \bar{\alpha}]$, which is used in \eqref{obstacle_modification_step}, where $\bar{\alpha}$ is defined in \eqref{bar_alpha_special_formula}, then the shapes of the internal convex obstacles remains unchanged in the modified obstacle-occupied workspace. Hence, for any $\alpha\in(r_a, \bar{\alpha}]$, the set of locations that do not belong to the modified set $\mathcal{V}_{r_a}$ always belongs inside the $\alpha-$neighbourhood of the workspace obstacle $\mathcal{O}_0$ \textit{i.e.}, the set $\mathcal{W}_{r_a}\setminus\mathcal{V}_{r_a}\subset\mathcal{D}_{\alpha}(\mathcal{O}_0).$ 

However, as the workspace is convex, if the robot is initialized in the \textit{move-to-target} mode, in the set $\mathcal{W}_{r_a}\setminus\mathcal{V}_{r_a}$, it will initially move straight towards the target and enter the set $\mathcal{V}_{r_a}$. Then, according to Lemma \ref{lemma:forward_invariance}, the robot will continue to move inside the set $\mathcal{V}_{r_a}$ and according to Theorem \ref{theorem:global_stability}, will asymptotically converge to the target location.


Next, we provide procedural steps to implement the proposed hybrid feedback controller \eqref{hybrid_control_input} for safe autonomous navigation in \textit{a priori} known and \textit{a priori} unknown environments.}

\section{Sensor-based Implementation Procedure}
\label{section:sensor-based_implementation}
We choose the origin as the target location. 
We initialize the center of the robot in the interior of the set $\mathcal{W}_{r_a}$ and assume that the value of parameter $\alpha$, defined in Lemma \ref{lemma:alpha_existence}, is \textit{a priori} known. The robot is initialized in the \textit{move-to-target} mode \textit{i.e.}, $m(0, 0) = 0$, as stated in Theorem \ref{theorem:global_stability}, and the \textit{hit point} is initialized at the initial location of the robot. We choose $\epsilon\in(0, \bar{\epsilon}],$ where $\bar{\epsilon}$ is selected as per Lemma \ref{lemma:bar_epsilon}. The obstacles can have arbitrary shapes and can be in close proximity with each other as long as Assumptions \ref{assumption:connected_interior} and \ref{Assumption:reach} are satisfied.

Notice that the robot can have multiple closest points in the proximity of non-convex obstacles, in which case, the obstacle avoidance term $\mathbf{v}(\mathbf{x}, m)$, defined in \eqref{definition:vxm}, is not viable, since it requires a unique closest point. 
Moreover, in an unknown environment, the modified obstacle-occupied workspace cannot be obtained in advance. Therefore, motivated by the method described in Remark \ref{remark:virtual_ring}, a virtual ring $\partial\mathcal{B}_{v_r}(\mathbf{c})$ is constructed whenever the robot enters the \textit{obstacle-avoidance} mode, as described in Section \ref{section:switching_to_obstacle_avoidance}. One should ensure that the robot's body is always enclosed by the ring, that the ring does not intersect with the interior of the obstacle-occupied workspace $\mathcal{O}_{\mathcal{W}}$, and that the ring moves along with the robot in the \textit{obstacle-avoidance} mode. Using this ring, the robot can then anticipate the possibility of multiple projections of its center onto the nearest obstacle and locally modify it to ensure that the projection of its center onto the modified obstacle is always unique, as discussed later in Section \ref{section:moving_in_the_obstacle_avoidance_mode}.

Note that if the robot is initialized on the boundary of the set $\mathcal{O}_{\mathcal{W}}$ such that its center has multiple closest points on $\mathcal{O}_{\mathcal{W}}$, then one cannot construct a virtual ring with a radius greater than $r_a$ that not only encloses the robot's body but also does not intersect with the interior of the set $\mathcal{O}_{\mathcal{W}}$. Consequently, one should initialize the robot in the interior of the obstacle-free workspace.

For safe navigation in \textit{a priori} unknown environments, we assume that the robot is equipped with a range-bearing sensor with angular scanning range of $360^{\circ}$ and sensing radius $R_s>2\alpha.$ Similar to \cite{sawant2021hybrid} and \cite{berkane2021navigation}, the range-bearing sensor is modeled using a polar curve $r_g(\mathbf{x}, \theta):\mathcal{W}_{r_a}\times[-\pi, \pi]\to[0, R_s],$ which is defined as
\begin{equation}
    r_g(\mathbf{x}, \theta) = \min\left\{R_s, \underset{\begin{matrix}\mathbf{y}\in\partial\mathcal{O}_{\mathcal{W}}\\\text{atan2v}(\mathbf{y} - \mathbf{x}) = \theta\end{matrix}}{\min}{\norm{\mathbf{x} - \mathbf{y}}}\right\}.\label{range_evalution}
\end{equation}
where $\text{atan2v}(\mathbf{q}) = \text{ atan2}(q_2, q_1), \mathbf{q} = [q_1, q_2]^\intercal$. The notation $r_g(\mathbf{x}, \theta)$ represents the distance between the center of the robot $\mathbf{x}$ and the boundary of the unsafe region $\partial\mathcal{O}_{\mathcal{W}}$, measured by the sensor, in the direction defined by the angle $\theta.$ Given the location $\mathbf{x},$ along with the bearing angle $\theta$, the mapping $\lambda(\mathbf{x}, \theta):\mathcal{W}_{r_a}\times[-\pi, \pi]\to\mathcal{W}_{0},$ which is given by
\begin{equation}
    \lambda(\mathbf{x}, \theta) = \mathbf{x} + r_g(\mathbf{x}, \theta)[\cos(\theta), \sin(\theta)]^{\intercal},\label{coordinate_evalution}
\end{equation}
evaluates the Cartesian coordinates of the detected point.

{Using \eqref{range_evalution} and \eqref{coordinate_evalution}, the distance between the center of the robot $\mathbf{x}\in\mathcal{W}_{r_a}$ and the unsafe region $\mathcal{O}_{\mathcal{W}}$ \textit{i.e.,} $d(\mathbf{x}, \mathcal{O}_{\mathcal{W}})$ is calculated as follows: 
\begin{equation}
    d(\mathbf{x}, \mathcal{O}_{\mathcal{W}}) = r_g(\mathbf{x}, \theta),\label{sensor:closest_distance}
\end{equation}
where $\theta\in\Theta$. The set $\Theta$, which is defined as
\begin{equation}
    \Theta = \left\{\theta_p\in[-\pi, \pi]\bigg|\theta_p =\underset{\theta\in[-\pi, \pi]}{\text{arg min }}r_g(\mathbf{x}, \theta)\right\},
\end{equation}
contains bearing angles such that the range measurement \eqref{range_evalution} in the directions defined by these bearing angles gives the smallest value when compared to the values obtained in any other directions. Then, the set of projections of the location $\mathbf{x}$ onto the unsafe regions \textit{i.e.,} $\mathcal{PJ}(\mathbf{x}, \mathcal{O}_{\mathcal{W}})$ is given by
\begin{equation}
    \mathcal{PJ}(\mathbf{x}, \mathcal{O}_{\mathcal{W}}) = \left\{\lambda(\mathbf{x}, \theta)|\forall\theta\in\Theta\right\}.\label{sensor:closest_point}
\end{equation}

For a given location of the robot $\mathbf{x}$, the set $\partial\mathcal{O}$, which is defined as
\begin{equation}
    \partial\mathcal{O} = \{\lambda(\mathbf{x}, \theta), \theta\in[-\pi, \pi]|r_g(\mathbf{x}, \theta) < R_s\},
\end{equation}
contains the locations in the sensing region that belong to the boundary of the unsafe region. The robot moving straight towards the target will collide with the obstacles if the following condition holds true:
\begin{equation}
    \partial\mathcal{O} \cap\square(\mathbf{x}, \mathbf{0}) \ne \emptyset,\label{sensor:observed_boundary}
\end{equation}
where the notation $\square(\mathbf{x}, \mathbf{0})$ represents a rectangle, as shown in Fig. \ref{diagram:square_landing}, with its vertices located at $\mathbf{x}_1, \mathbf{x}_{-1}, \mathbf{0}_1$ and $\mathbf{0}_{-1}$, which are evaluated as
\begin{equation}
    \begin{aligned}
    \begin{bmatrix}
    \mathbf{x}_z\\
    \mathbf{0}_z
    \end{bmatrix} = \begin{bmatrix}
    \mathbf{x}\\
    \mathbf{0}
    \end{bmatrix} + zr_a\begin{bmatrix}\mathbf{I}\\\mathbf{I}\end{bmatrix}\begin{bmatrix}\cos\theta_d\\\sin\theta_d\end{bmatrix}, z\in\{-1, 1\},
    \end{aligned}\label{vertices_of_rectangle}
\end{equation}
where $\mathbf{I}$ is a $2\times2 $ identity matrix, and $\theta_d = \pi/2 + \text{atan2v}(\mathbf{x})$ The robot moving in the \textit{move-to-target} mode can infer the possibility of a collision with the unsafe region by verifying the condition in \eqref{sensor:observed_boundary}. Next, we provide a procedure, summarized in Algorithm 2, which allows robot to identify whether the state $(\mathbf{x}, \mathbf{h}, m)$ belongs to the jump set or not.

\begin{figure}
    \centering
    \includegraphics[width = 0.6\linewidth]{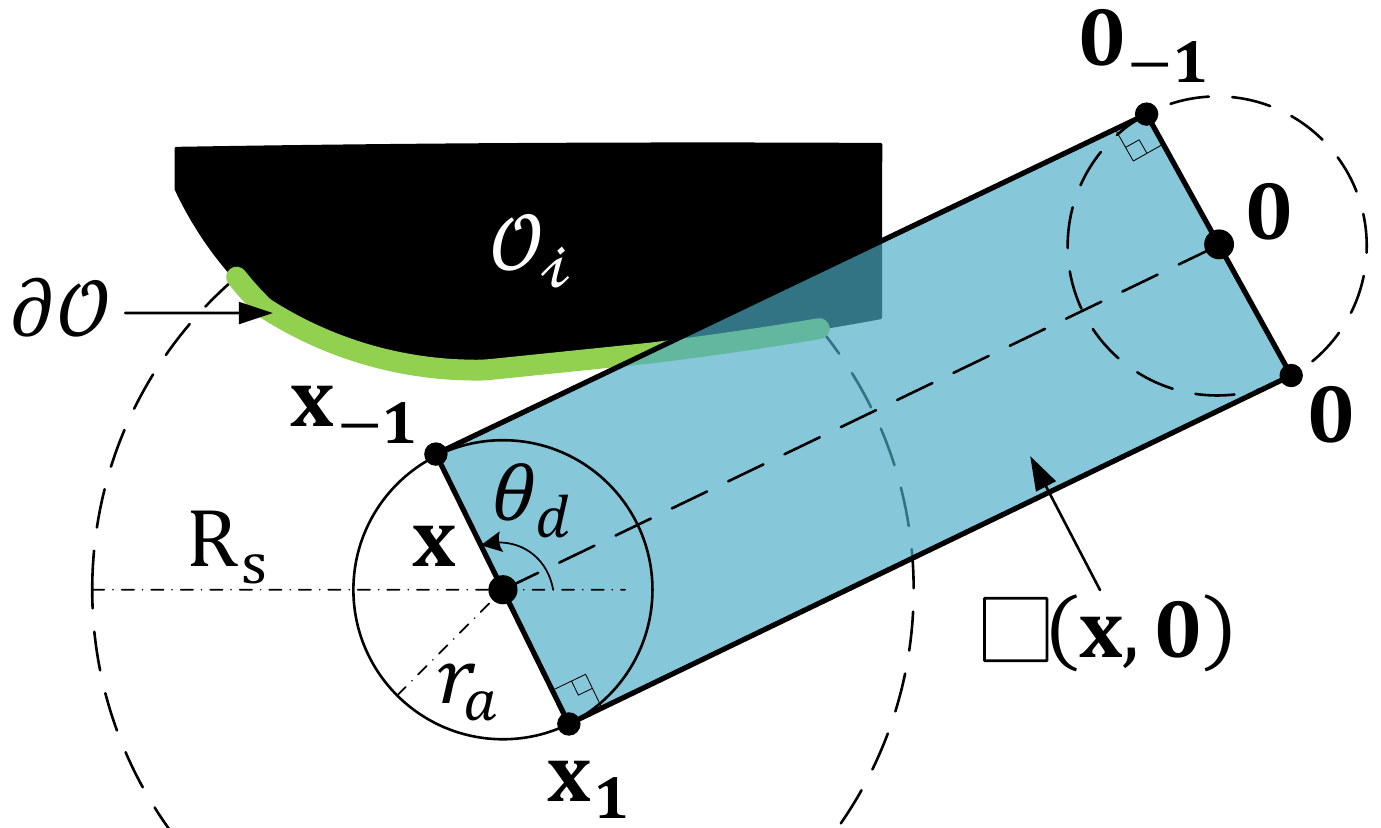}
    \caption{Construction of the rectangle $\square(\mathbf{x}, \mathbf{0})$ based on the location of the robot and the target location at the origin.}
    \label{diagram:square_landing}
\end{figure}

\subsection{Switching to the \textit{obstacle-avoidance} mode}
\label{section:switching_to_obstacle_avoidance}
Since the robot is initialized in the \textit{move-to-target} mode, it will initially move towards the target, under the influence of the stabilizing control vector $-\kappa_s\mathbf{x}, \kappa_s>0.$ Suppose, there exists an obstacle such that the line segment $\mathcal{L}_{s}(\mathbf{x}, \mathbf{0})$ intersects with the \textit{landing region} \textit{i.e.}, $\mathcal{L}_{s}(\mathbf{x}, \mathbf{0})\cap\mathcal{R}_{l} \ne\emptyset. $
 This can be identified by evaluating the inner product between the vectors $\mathbf{x}$ and $\mathbf{x} - \Pi(\mathbf{x}, \mathcal{O}_{\mathcal{W}})$, according to \eqref{definition:individual_landing_region} and \eqref{sensor:closest_point}, and by verifying the condition given in \eqref{sensor:observed_boundary}. Eventually, the robot moving straight towards the target will enter the $\beta-$neighbourhood of the obstacle-occupied workspace, where $\beta\in(r_a, \alpha)$, \textit{i.e.,} $d(\mathbf{x}, \mathcal{O}_{\mathcal{W}}) = \beta$ such that one of the following two cases holds:

\textbf{Case A: }there is a unique projection of the robot's center onto the obstacle-occupied workspace \textit{i.e.}, $\bf{card}(\mathcal{PJ}(\mathbf{x}, \mathcal{O}_{\mathcal{W}})) = 1.$

\textbf{Case B: }there are more than one projections of the robot's center onto the obstacle-occupied workspace \textit{i.e.}, $\bf{card}(\mathcal{PJ}(\mathbf{x}, \mathcal{O}_{\mathcal{W}})) > 1$, and $-\mathbf{x}\in\mathcal{CH}(\mathbf{x}, \mathcal{PJ}(\mathbf{x}, \mathcal{O}_{\mathcal{W}}))$.

First, we consider case A. Since $\mathcal{L}_{s}(\mathbf{x}, \mathbf{0})\cap\mathcal{R}_{l} \ne\emptyset$, the robot has to switch to the \textit{obstacle-avoidance} mode. However, before that, it needs to construct a virtual ring $\partial\mathcal{B}_{v_r}(\mathbf{c})$ to locally modify the set $\mathcal{O}_{\mathcal{W}}$ to ensure the uniqueness of the projection of its center onto the unsafe region. We locate the center of the virtual ring $\partial\mathcal{B}_{v_r}(\mathbf{c})$ \textit{i.e.}, $\mathbf{c}$ using the following formula:
\begin{equation}
    \mathbf{c} = \Pi(\mathbf{x}, \mathcal{O}_{\mathcal{W}}) + (r_a + \gamma)\frac{\mathbf{x} - \Pi(\mathbf{x}, \mathcal{O}_{\mathcal{W}})}{\norm{\mathbf{x} - \Pi(\mathbf{x}, \mathcal{O}_{\mathcal{W}})}},\label{sensor_c_location}
\end{equation}
where $\gamma\in(\beta - r_a, \alpha - r_a)$ such that $\mathbf{card}(\mathcal{PJ}(\mathbf{c}, \mathcal{O}_{\mathcal{W}})) = 1$ and $\Pi(\mathbf{c}, \mathcal{O}_{\mathcal{W}}) = \Pi(\mathbf{x}, \mathcal{O}_{\mathcal{W}})$. Then, the radius $v_r$ of the virtual ring $\partial\mathcal{B}_{v_r}(\mathbf{c})$ is set to $r_a + \gamma$. The robot then sets $\gamma_s=\beta - r_a$,  and enters in the \textit{obstacle-avoidance} mode \textit{i.e.}, switches $m$ to $+1$ or $-1$. At this instance, we assign the current location as the \textit{hit point} $\mathbf{h}$, as per \eqref{updatelaw_part1}. Case A is illustrated in Fig. \ref{stabilization_to_avoidance}a.

Now, we consider case B. We set $\mathbf{c}$ to be the current location of the robot's center \textit{i.e.}, $\mathbf{c} = \mathbf{x}$, and $v_r = \beta.$
Since the robot has multiple projections on the set $\mathcal{O}_{\mathcal{W}}$, it indicates the presence of a non-convex obstacle in its immediate neighbourhood, as shown in Fig. \ref{stabilization_to_avoidance}b. Hence, to ensure the uniqueness of the projection of the center of the robot onto the unsafe region, we augment the boundary of the obstacle-occupied workspace with a curve $\mathcal{Y}$, which is defined as
\begin{equation}
    \mathcal{Y} = \partial\mathcal{B}_{v_r}(\mathbf{c})\cap\mathcal{CH}(\mathbf{c}, \mathcal{PJ}(\mathbf{c}, \mathcal{O}_{\mathcal{W}})).\label{partial_ring}
\end{equation}
The curve $\mathcal{Y}$ is the section of the virtual ring $\partial\mathcal{B}_{v_r}(\mathbf{c})$ that belongs to the conic hull $\mathcal{CH}(\mathbf{c}, \mathcal{PJ}(\mathbf{c}, \mathcal{O}_{\mathcal{W}})).$ Notice that the curve $\mathcal{Y}$ belongs to the boundary of the modified obstacle $\mathbf{M}(\mathcal{O}_{\mathcal{W}}, v_r)$, as per Remark \ref{remark:virtual_ring}. We treat this curve as a part of the boundary of the unsafe region \textit{i.e.}, $\partial\mathcal{O}_{\mathcal{W}} \leftarrow \partial\mathcal{O}_{\mathcal{W}}\cup\mathcal{Y}.$

The robot has not yet switched to the \textit{obstacle-avoidance} mode, and is moving straight towards the target inside the previously constructed virtual ring $\partial\mathcal{B}_{v_r}(\mathbf{c})$, along the line segment $\mathcal{L}_s(\mathbf{c}, \mathbf{0})$. Since $\mathcal{B}_{r_a}(\mathbf{x})\subset\mathcal{B}_{v_r}(\mathbf{c}),$ after moving straight towards the target, the robot will have unique projection on the curve $\mathcal{Y}$ and hence on the unsafe region $\mathcal{O}_{\mathcal{W}}$. Then the robot will switch to the \textit{obstacle-avoidance} mode, according to case A.
}

\begin{figure}
    \centering
    \includegraphics[width = 0.9\linewidth]{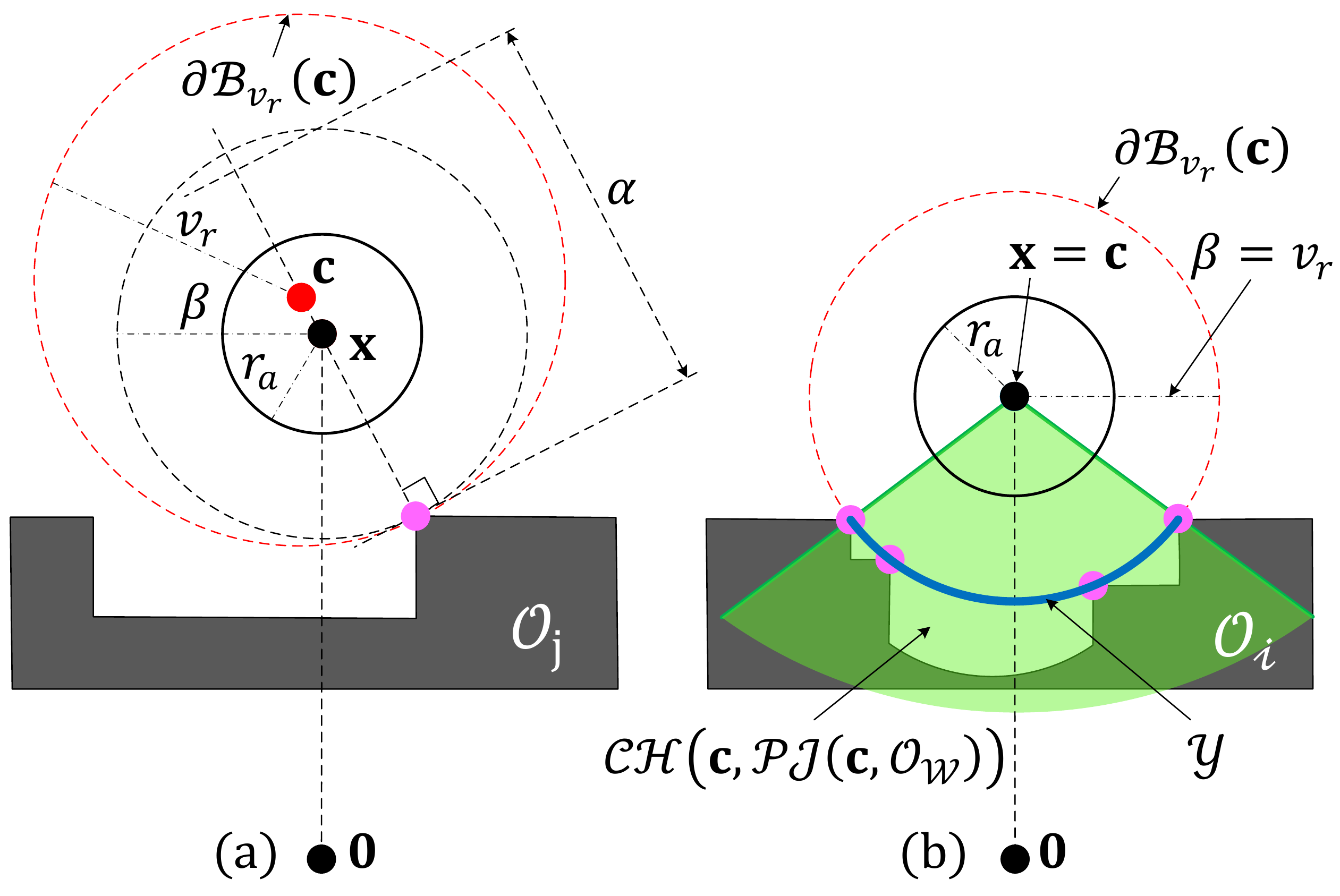}
    \caption{Illustration of two possible situations that can occur when the robot, operating in the \textit{move-to-target} mode, enters in the $\beta-$neighborhood of the obstacle-occupied workspace $\mathcal{O}_{\mathcal{W}}$, where $\beta \in (r_a, \alpha)$. (a) Situation when $\mathbf{card}(\mathcal{PJ}(\mathbf{x}, \mathcal{O}_{\mathcal{W}})) = 1$. (b) Situation when $\mathbf{card}(\mathcal{PJ}(\mathbf{x}, \mathcal{O}_{\mathcal{W}})) > 1.$
}
    \label{stabilization_to_avoidance}
\end{figure}

\subsection{Moving in the \textit{obstacle-avoidance} mode}
\label{section:moving_in_the_obstacle_avoidance_mode}
{We use a virtual ring $\partial\mathcal{B}_{v_r}(\mathbf{c})$ to ensure a unique projection in the obstacle-avoidance mode. This ring anticipates multiple projections and enables local modification of the obstacle-occupied workspace to maintain uniqueness of the projection of the robot's center.


Note that when the robot switches from the \textit{move-to-target} mode to the \textit{obstacle-avoidance} mode, it is enclosed by the virtual ring \textit{i.e.}, $\mathcal{B}_{r_a}(\mathbf{x})\subset\mathcal{B}_{v_r}(\mathbf{c}).$ Hence, if the virtual ring $\partial\mathcal{B}_{v_r}(\mathbf{c})$ touches the obstacle-occupied workspace $\mathcal{O}_{\mathcal{W}}$ at only one location \textit{i.e.}, $\mathbf{card}(\mathcal{PJ}(\mathbf{c}, \mathcal{O}_{\mathcal{W}})) = 1$, then $\Pi(\mathbf{x}, \mathcal{O}_{\mathcal{W}}) = \Pi(\mathbf{c}, \mathcal{O}_{\mathcal{W}}).$ Then, the robot can successfully implement the rotational control vector $\mathbf{v}(\mathbf{x}, m)$. To ensure that the robot's body is always enclosed by the virtual ring, we update the location of the center $\mathbf{c}$ as follows:
\begin{equation}
    \mathbf{c} = \Pi(\mathbf{x}, \mathcal{O}_{\mathcal{W}}) + v_r\frac{\mathbf{x} - \Pi(\mathbf{x}, \mathcal{O}_{\mathcal{W}})}{\norm{\mathbf{x} - \Pi(\mathbf{x}, \mathcal{O}_{\mathcal{W}})}},\label{sensor:c_location}
\end{equation}
where $v_r$ is defined when the robot switches from the \textit{move-to-target} mode to the current \textit{obstacle-avoidance} mode, as discussed in Section \ref{section:switching_to_obstacle_avoidance}.

When the virtual ring touches obstacles at multiple locations, it indicates the presence of a non-convex obstacle in the immediate neighbourhood of the robot, as shown in Fig. \ref{motion_avoidance}. In this case, the robot should use the projection of its center onto the part of the ring that intersects with the conic hull $\mathcal{CH}(\mathbf{c},\mathcal{PJ}(\mathbf{c}, \mathcal{O}_{\mathcal{W}}))$ \textit{i.e.,} onto the set $\mathcal{Y}$, defined in \eqref{partial_ring}, as the closest point. Note that since $\mathcal{B}_{r_a}(\mathbf{x})\subset\mathcal{B}_{v_r}(\mathbf{c}),$ the projection $\Pi(\mathbf{x}, \mathcal{Y})$, which is used to implement the rotational control vector $\mathbf{v}(\mathbf{x}, m)$, is unique.


\begin{figure}[ht]
    \centering
    \includegraphics[width = 0.43\linewidth]{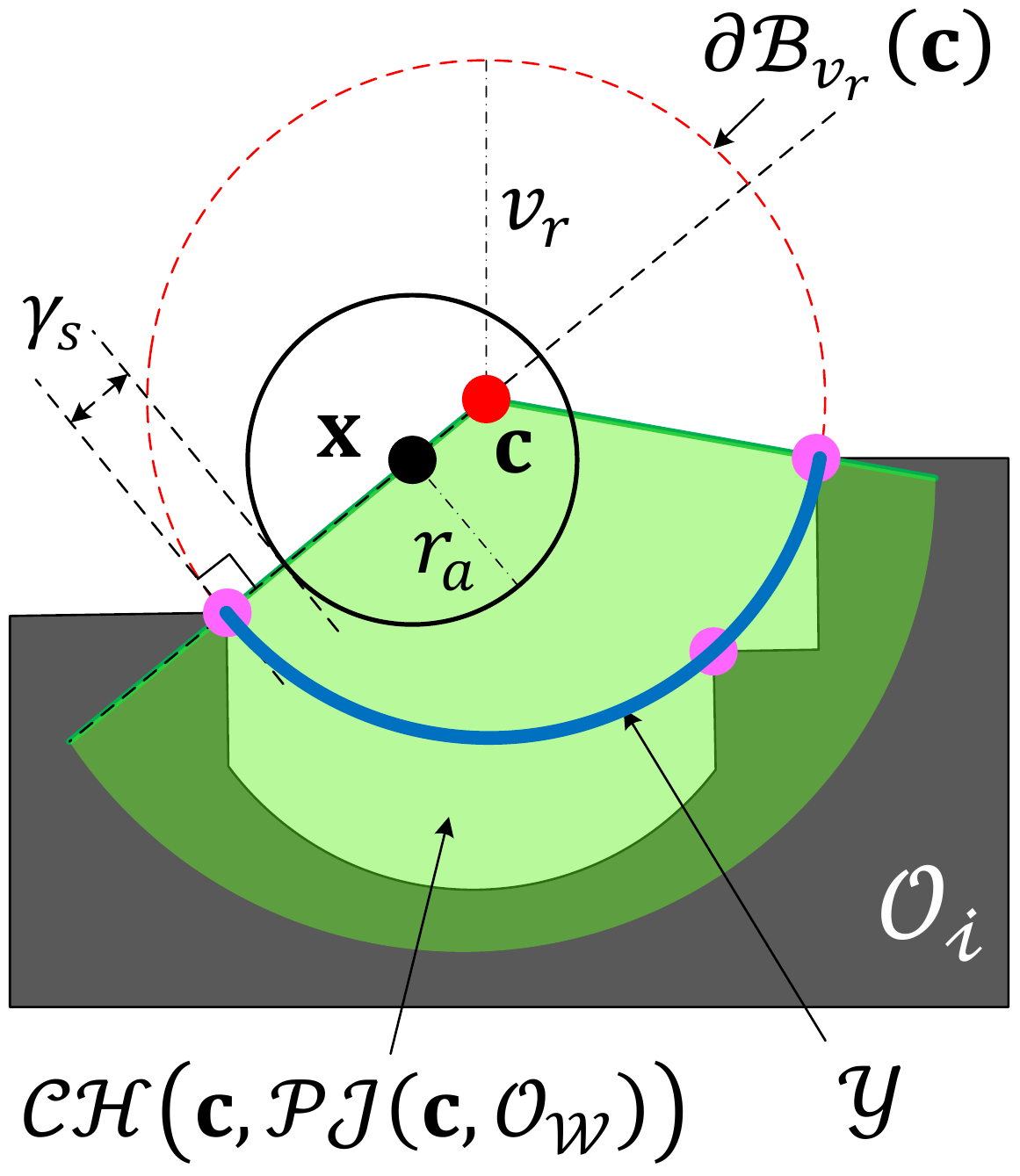}
    \caption{A scenario in which, for the robot operating in the \textit{obstacle-avoidance} mode, the virtual ring $\partial\mathcal{B}_{v_r}(\mathbf{c})$ encounters more than one intersection point with the obstacle-occupied workspace $\mathcal{O}_{\mathcal{W}}$ \textit{i.e.}, $\mathbf{card}(\mathcal{PJ}(\mathbf{c}, \mathcal{O}_{\mathcal{W}})) > 1.$}
    \label{motion_avoidance}
\end{figure}

\subsection{Switching to the \textit{move-to-target} mode}
}

When the robot, operating in the \textit{obstacle-avoidance} mode, with some $m\in\{-1, 1\}$, reaches the location $\mathbf{x}$, which is $\epsilon$ units closer to the target location than the current \textit{hit point} $\mathbf{h}$, and belongs to the \textit{exit} region $\mathcal{R}_m\cup\mathcal{R}_a$, defined in \eqref{always_exit_region}, \eqref{clockwise_exit_region} and \eqref{counter_clockwise_exit_region}, it switches to the \textit{move-to-target} mode by setting $m = 0$.

\begin{algorithm}
\caption{General implementation of the proposed hybrid control law \eqref{hybrid_control_input}.}

\begin{algorithmic}[1] 
\STATE \textbf{Set} target location at the origin $\mathbf{0}.$
\STATE \textbf{Initialize} $\mathbf{x}(0, 0)\in\mathcal{W}_{r_a}$, $\mathbf{h}(0, 0) = \mathbf{x}(0, 0)$ and $m(0, 0) = 0$. Choose $\bar{\epsilon}$ according to Lemma \ref{lemma:bar_epsilon}, and initialize $\epsilon\in(0, \bar{\epsilon}].$ Select ${\alpha} > 0$ according to Lemma \ref{lemma:alpha_existence}, and choose $\beta\in(r_a, \alpha)$.
\STATE\textbf{Measure} $\mathbf{x}$.
\IF{$m = 0$,}
{\color{blue}\STATE\textbf{Implement} Algorithm \ref{alg:jumpset_participation}.}
\IF{$\xi\in\mathcal{J}_0$,}
\STATE\textbf{Update }$(\mathbf{h}, m)\leftarrow\mathbf{L}(\mathbf{x}, \mathbf{h}, m)$ using \eqref{updatelaw_part1}.
\ENDIF
\ENDIF
\IF{$m\in\{-1, 1\}$,}
{\color{blue}\STATE\textbf{Implement} Algorithm \ref{alg:jumpset_participation}.}
\IF{$\xi\in\mathcal{J}_m$,}
\STATE\textbf{Update }$(\mathbf{h}, m)\leftarrow\mathbf{L}(\mathbf{x}, \mathbf{h}, m)$ using \eqref{updatelaw_part2}.
\ENDIF
\ENDIF

\IF{$m\in\{-1, 1\}$,}
\STATE\textbf{Measure} $\Pi(\mathbf{x}, \mathcal{O}_{\mathcal{W}})$ using \eqref{sensor:closest_point}.
\STATE\textbf{Locate }$\mathbf{c}$ using \eqref{sensor:c_location}.
\IF{$\mathbf{card}(\mathcal{PJ}(\mathbf{c}, \mathcal{O}_{\mathcal{W}}))>1$,}
\STATE \textbf{Construct} $\mathcal{Y}$ using \eqref{partial_ring}.
\STATE \textbf{Assign} $\partial\mathcal{O}_{\mathcal{W}}\leftarrow\partial\mathcal{O}_{\mathcal{W}}\cup\mathcal{Y}.$
\ENDIF
\ENDIF
\STATE\textbf{Execute }$\mathbf{u}(\mathbf{x}, \mathbf{h}, m)$ \eqref{hybrid_control_input}, used in \eqref{hybrid_closed_loop_system}.
\STATE\textbf{Go to} step 3.
\end{algorithmic}

\label{alg:general_implemenration}
\end{algorithm}

\begin{algorithm}
\caption{Sensor-based identification of the jump set.}

\begin{algorithmic}[1]
\STATE\textbf{Measure} $d(\mathbf{x}, \mathcal{O}_{\mathcal{W}})$ \eqref{sensor:closest_distance}, and $\mathcal{PJ}(\mathbf{x}, \mathcal{O}_{\mathcal{W}})$ \eqref{sensor:closest_point}.
\IF{$m = 0$,}
\IF{$d(\mathbf{x}, \mathcal{O}_{\mathcal{W}}) \leq \beta$}
\IF{$\mathbf{card}(\mathcal{PJ}(\mathbf{x}, \mathcal{O}_{\mathcal{W}})) = 1$,}
\IF{$\mathbf{x}^{\intercal}(\mathbf{x} - \Pi(\mathbf{x}, \mathcal{O}_{\mathcal{W}})) \geq 0$,}
\STATE\textbf{Identify} $\partial\mathcal{O}$ using \eqref{sensor:observed_boundary}.
\STATE\textbf{Construct} $\square(\mathbf{x}, \mathbf{0})$ using \eqref{vertices_of_rectangle}.
\IF{$\partial\mathcal{O} \cap\square(\mathbf{x}, \mathbf{0}) \ne \emptyset$}
\STATE $\xi\in\mathcal{J}_{0}.$

\STATE\textbf{Set} $\gamma_s = \beta - r_a.$
\STATE\textbf{Choose} $\gamma\in(\gamma_s, \alpha - r_a)$.
\STATE\textbf{Set} $v_r = r_a + \gamma.$
\STATE\textbf{Locate} $\mathbf{c}$ using \eqref{sensor_c_location}.
\ENDIF
\ENDIF
\ELSE
\IF{$-\mathbf{x} \in\mathcal{CH}(\mathbf{x}, \mathcal{PJ}(\mathbf{x}, \mathcal{O}_{\mathcal{W}}))$,}
\STATE \textbf{Set} $\mathbf{c} = \mathbf{x}$ and $v_r = \beta$.
\STATE \textbf{Construct} $\mathcal{Y}$ using \eqref{partial_ring}.
\STATE \textbf{Assign} $\partial\mathcal{O}_{\mathcal{W}}\leftarrow\partial\mathcal{O}_{\mathcal{W}}\cup\mathcal{Y}.$
\ENDIF
\ENDIF
\ENDIF
\ENDIF
\IF{$m\in\{-1,1\}$}
\IF{$d(\mathbf{x}, \mathcal{O}_{\mathcal{W}})<r_a + {\alpha},$}
\IF{$\mathbf{x}\in\mathcal{R}_m\cup\mathcal{R}_a$, defined in \eqref{always_exit_region}, \eqref{clockwise_exit_region} and \eqref{counter_clockwise_exit_region},}
\IF{$\norm{\mathbf{x}} \leq \norm{\mathbf{h}} - \epsilon,$}
\STATE$\xi \in\mathcal{J}_m$.
\ENDIF
\ENDIF
\ELSE
\STATE$\xi \in\mathcal{J}_m$.
\ENDIF
\ENDIF
\end{algorithmic}

\label{alg:jumpset_participation}
\end{algorithm}

\section{Simulation Results}
\label{sec:simulation_results}
In this section, we present simulation results for a robot navigating in \textit{a priori} unknown environments. In simulations discussed below, the robot is assumed to be equipped with a range-bearing sensor (\textit{e.g.} LiDAR) with an angular scanning range of $360^{\circ}$ and sensing radius $R_s = 3m$. The angular resolution of the sensor is chosen to be $1^{\circ}$. The simulations are performed in MATLAB 2020a.

In the first simulation scenario, we consider an unbounded workspace \textit{i.e.,} $\mathcal{O}_0 = \emptyset$, with $3$ non-convex obstacles, as shown in Fig. \ref{diagram:result1}. The robot with radius $r = 0.3m$ is initialized at $[-16, 4]^\intercal$. The target is located at the origin. The minimum safety distance $r_s  =0.1m$. The parameter $\alpha = 0.8m$ is known \textit{a priori}, as per Lemma \ref{lemma:alpha_existence}. We set the gain values $\kappa_s$ and $\kappa_r$, used in \eqref{hybrid_control_input}, to be $0.5$ and $2$, respectively. The parameter $\epsilon$, which is essential for the design of the jump set of the \textit{obstacle-avoidance} mode as given in \eqref{avoidance_jump_set} and \eqref{partition_rm}, is set to be $0.1m.$

The robot's motion in the \textit{move-to-target} mode is represented by the blue-coloured curves, whereas the red-coloured curves depict its motion in the \textit{obstacle-avoidance} mode. The locations $\mathbf{h}_1$ to $\mathbf{h}_6$ are the \textit{hit points} where the robot switches from the \textit{move-to-target} mode to the \textit{obstacle-avoidance} mode. Notice that the location of each \textit{hit point} is closer to the target location than the previous one, which ensures global convergence of the robot to the target location, as stated in Theorem \ref{theorem:global_stability}. Since the robot moves parallel to the boundary of the unsafe region in the \textit{obstacle-avoidance} mode, it maintains a safe distance from the unsafe region, as shown in Fig. \ref{diagram:distance_profile}. To avoid multiple projections onto the unsafe region, while operating in the \textit{obstacle-avoidance} mode, the robot constructs a virtual ring, as explained in Section \ref{section:sensor-based_implementation}, and moves along its boundaries around the obstacles $\mathcal {O}_1$ and $\mathcal{O}_2$. The complete simulation video can be found at \url{https://youtu.be/tRRUQNjLtGU}.

\begin{figure}[h]
    \centering
    \includegraphics[width = 1\linewidth]{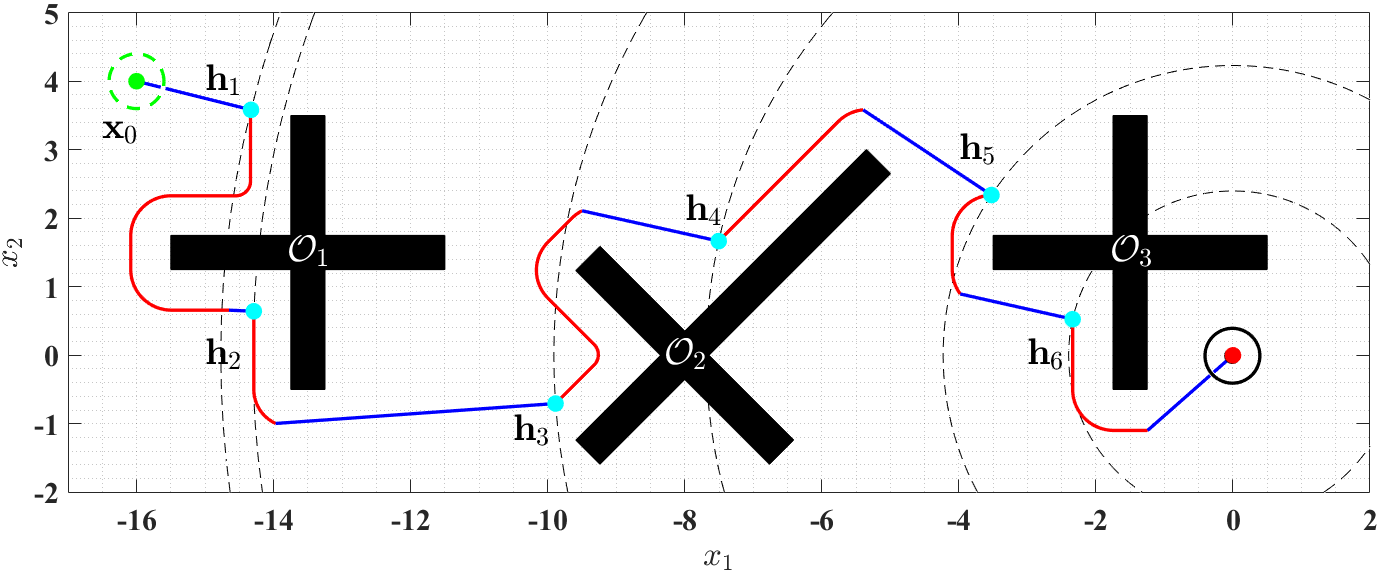}
    \caption{Trajectory of a robot, initialized at $\mathbf{x}_0$, safely converging to the target location at the origin.}
    \label{diagram:result1}
\end{figure}

\begin{figure}[h]
    \centering
    \includegraphics[width = 0.8\linewidth]{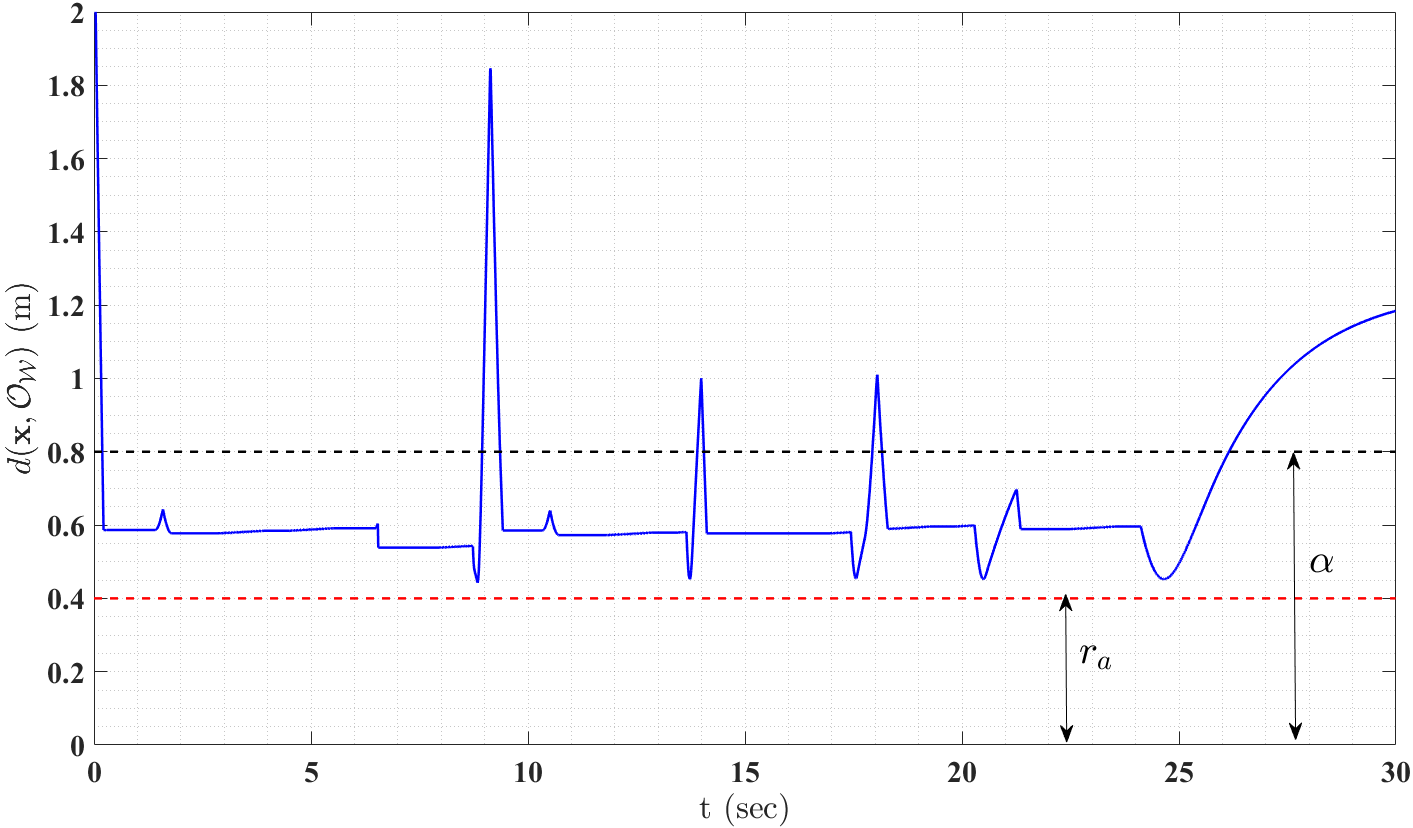}
    \caption{Distance of the robot's center from the boundary of the obstacle-occupied workspace for the simulation results shown in Fig. \ref{diagram:result1}.}
    \label{diagram:distance_profile}
\end{figure}

In the next simulation scenario, as shown in Fig. \ref{diagram:multiple_initials}, we consider an environment consisting of arbitrarily-shaped obstacles (possibly non-convex), and apply the proposed hybrid controller \eqref{hybrid_control_input} for a point robot navigation initialized at 10 different locations in the obstacle-free workspace. The target is located at the origin. The minimum safety distance $r_s = 0.1m$ and the parameter $\alpha = 0.5m$. We set the gains $\kappa_s$ and $\kappa_r$, used in \eqref{hybrid_control_input_1}, to be 0.25 and 2, respectively. The parameter $\epsilon$ is set to be $0.05m$.

Since the environment is \textit{a priori} unknown and contains non-convex obstacles, the robot maintains the same direction of motion when it moves in the \textit{obstacle-avoidance} mode to avoid retracing the previously travelled path, as discussed in remark \ref{remark:one_direction}. This does not necessarily result in the robot trajectories with the shortest lengths. The complete simulation video can be found at \url{https://youtu.be/OtHt-oQPg68}.

\begin{figure}[h]
    \centering
    \includegraphics[width = 0.7\linewidth]{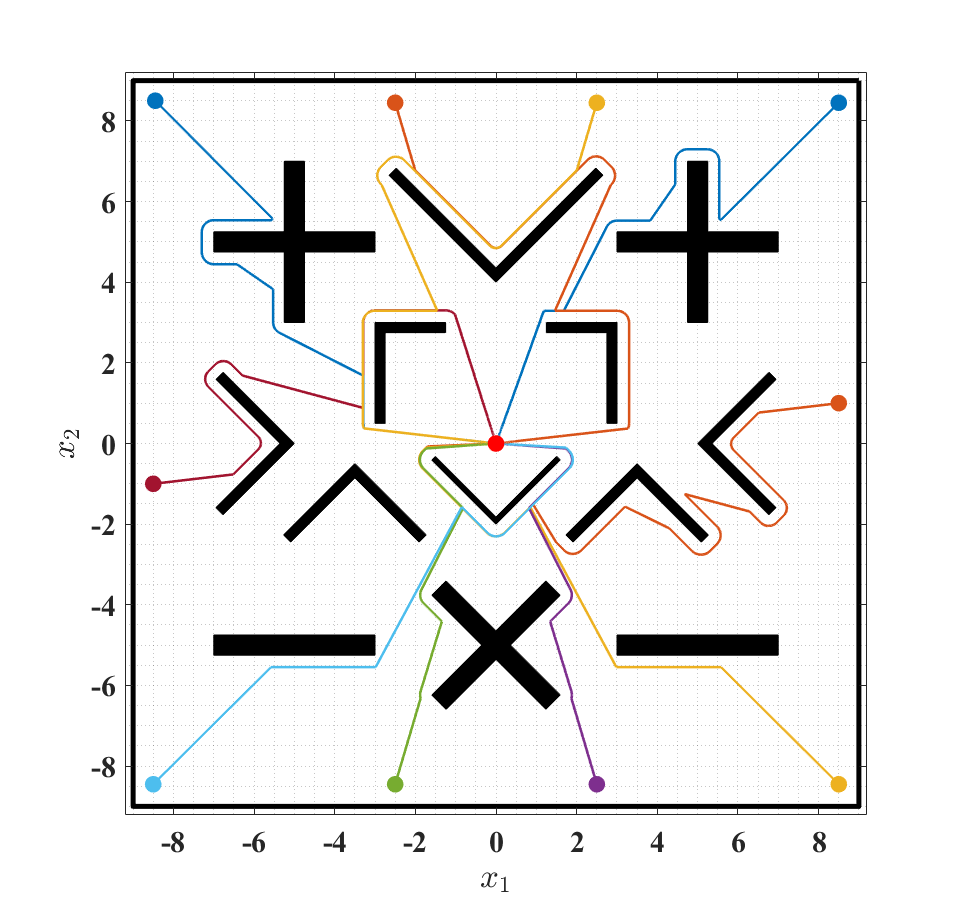}
    \caption{Robot trajectories starting from different locations.}
    \label{diagram:multiple_initials}
\end{figure}

Next, we consider an environment with unsafe region $\mathcal{O}_{\mathcal{W}}$, consisting of $2$ non-convex obstacles $\mathcal{O}_1$ and $\mathcal{O}_2$, that does not satisfy Assumption \ref{Assumption:reach}, as shown in Fig. \ref{diagram:assumption_2_not}. The robot with radius $r = 0.3m$ is initialized at $[-6, 3]^\intercal$. The target is located at the origin. The minimum safety distance $r_s = 0.1m$. The parameter $\alpha = 1m$ is known \textit{a priori}, as per Lemma \ref{lemma:alpha_existence}. We set the gain values $\kappa_s$ and $\kappa_r$, used in \eqref{hybrid_control_input}, to be 0.5 and 2, respectively. The parameter $\epsilon$, used in \eqref{partition_rm}, is set to be $0.1m$. 

Notice that even though the distance between the obstacles $\mathcal{O}_1$ and $\mathcal{O}_2$ is less than $2\alpha$, the modified set $\mathcal{O}_{\mathcal{W}}^M$, obtained using \eqref{obstacle_modification_step}, is not connected. However, using the virtual ring construction mentioned in Section \ref{section:moving_in_the_obstacle_avoidance_mode}, the robot maintains the uniqueness of its projection and moves safely across the gap in the \textit{obstacle-avoidance} mode, as shown in Fig. \ref{diagram:assumption_2_not}. The complete simulation video can be found at \url{https://youtu.be/T4xzo01_mkc}.

\begin{figure}[h]
    \centering
    \includegraphics[width = 0.9\linewidth]{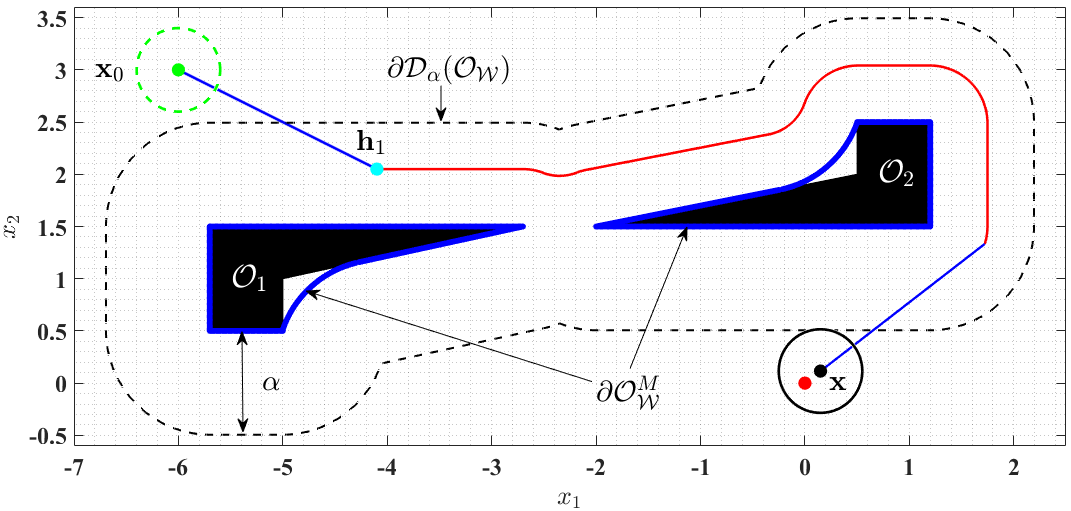}
    \caption{Robot safely navigating towards the target (red dot) in an environment that does not satisfy Assumption \ref{Assumption:reach}.}
    \label{diagram:assumption_2_not}
\end{figure}

\subsection{Gazebo simulation}
{The next simulation is performed using the Turtlebot3 Burger model in Gazebo. The simulation runs on a computer, equipped with 4 GB RAM, running Ubuntu 20.04 with the ROS Noetic distribution installed, which we refer to as \textit{Computer 1}. The proposed hybrid controller is run in Matlab R2020a on another computer running Windows 10, equipped with an Intel(R) i5-5200U CPU with a clock speed of 2.20 GHz and 12 GB RAM, referred to as \textit{Computer 2}.

The Turtlebot is equipped with a two-dimensional LiDAR scanner with a sensing range of 1m. The angular scanning range and angular resolution are set to 360 degrees and 1 degree, respectively. The sensor measurements are assumed to be affected by Gaussian noise with zero mean and a standard deviation of 0.01 m. The maximum bounds on the linear velocity and angular velocity, denoted as $\nu_{\max}$ and $\omega_{\max}$ respectively, are set to 0.15 m/s and 2.84 rad/s. The LiDAR scanning rate is set to 5 Hz.

At each iteration, the LiDAR measurements and pose information are sent from \textit{Computer 1} to \textit{Computer 2}. The control commands are then sent from \textit{Computer 2} to \textit{Computer 1}. The execution time of the proposed hybrid controller is approximately 5 ms. The sensor-based implementation of the proposed hybrid navigation algorithm only requires the measurements acquired via a range-bearing sensor. Since the size of the acquired sensor data is independent of the surrounding environment, the code execution time will remain approximately the same regardless of changes in the environment.

The Turtlebot can be represented with the following nonholonomic model: 
\begin{equation}
\begin{aligned}
    \dot{x}_1 &= \nu\cos\theta,\\
    \dot{x}_2 &= \nu\sin\theta,\\
    \dot{\theta} &= \omega,
\end{aligned}
\end{equation}
where $\mathbf{x} = [x_1, x_2]^\intercal$ is the location of the center of the robot and $\theta\in[-\pi, \pi)$ is the heading direction. The scalar control variables $\nu$ and $\omega$ represent the linear and angular velocities, respectively.

In practical applications, due to the discrete implementation of the control law designed for a point-mass robot, the nonholonomic Turtlebot (when operating in the \textit{obstacle-avoidance} mode) may get very close to the obstacle or exit the $\alpha$-neighborhood of the obstacle before getting closer to the target location than the current \textit{hit point}. To avoid this situation, we introduce some minor modifications to the proposed hybrid control law to ensure that the Turtlebot stays inside the $\alpha$-neighborhood of the nearest obstacle when operating in the \textit{obstacle-avoidance} mode. We replace the vector $\mathbf{v}(\mathbf{x}, m)$, used in \eqref{hybrid_control_input_1}, with a modified vector $\mathbf{v}_{mod}(\mathbf{x}, m),$ which is defined as
\begin{equation}
\mathbf{v}_{d}(\mathbf{x}, m) = \begin{bmatrix}\lambda(\varrho(\mathbf{x}))& m(1-\lambda(\varrho(\mathbf{x}))^2)\\-m(1-\lambda(\varrho(\mathbf{x}))^2)& \lambda(\varrho(\mathbf{x}))\end{bmatrix}\mathbf{f}(\mathbf{x}, m),
\end{equation}
where the vector-valued function $\mathbf{f}(\mathbf{x}, m)$ is given by
\begin{equation}
    \mathbf{f}(\mathbf{x}, m) = \frac{\mathbf{x} - \Pi(\mathbf{x}, \mathcal{O}_{\mathcal{W}}^M)}{\norm{\mathbf{x} - \Pi(\mathbf{x}, \mathcal{O}_{\mathcal{W}}^M)}}.
\end{equation}
The scalar-valued function $\lambda(\varrho(\mathbf{x}))$ is evaluated as
\begin{equation}
    \lambda(\varrho(\mathbf{x})) = \begin{cases}
        \frac{0.25\eta - \varrho(\mathbf{x})}{0.25\eta}, & 0 \leq \varrho(\mathbf{x}) \leq 0.25\eta,\\
        0, & 0.25\eta \leq \varrho(\mathbf{x}) \leq 0.75\eta,\\
        \frac{0.75\eta - \varrho(\mathbf{x})}{0.25\eta}, & 0.75\eta \leq \varrho(\mathbf{x}) \leq \eta,
    \end{cases}
\end{equation}
where $\eta = \alpha - r_a$ and $\varrho(\mathbf{x}) = d(\mathbf{x}, \mathcal{O}_{\mathcal{W}}^M) - r_a$. The continuous scalar-valued function $\lambda(\varrho(\mathbf{x}))\in[-1, 1]$, for all $\varrho(\mathbf{x})\in[0, \eta]$. 

Notice that when $\lambda(\varrho(\mathbf{x})) = 0$, the vector $\mathbf{v}_d(\mathbf{x}, m)$ equals to the vector $\mathbf{v}(\mathbf{x}, m)$, used in \eqref{hybrid_control_input_1}. When the Turtlebot, while operating in the \textit{obstacle-avoidance} mode, moves closer to the boundary of the modified obstacle-occupied workspace \textit{i.e.,} $\varrho(\mathbf{x})\to 0$, $\lambda(\varrho(\mathbf{x}))\to1$. As a result, the vector $\mathbf{v}_d(\mathbf{x}, m)\to\mathbf{f}(\mathbf{x}, m)$ and the Turtlebot is steered away from the unsafe region. On the other hand, if the center of the Turtlebot moves closer to the boundary of the $\alpha-$neighbourhood of the modified obstacles \textit{i.e.,} $\varrho(\mathbf{x})\to\eta,$ $\lambda(\varrho(\mathbf{x}))\to-1$. Due to this, the vector $\mathbf{v}_d(\mathbf{x}, m) \to-\mathbf{f}(\mathbf{x}, m)$ and the Turtlebot is steered back inside the $\alpha-$neighbourhood of the modified obstacles.

Finally, given the modified hybrid control law $\mathbf{u}_{mod}$, obtained by replacing $\mathbf{v}(\mathbf{x}, m)$ by $\mathbf{v}_{mod}(\mathbf{x}, m)$ in \eqref{hybrid_control_input_1}, the linear velocity $\nu$ and the angular velocity $\omega$ to be applied to the Turtlebot are obtained as follows:
\begin{equation}
    \nu = \kappa_{\nu}\min\bigg\{\|{\mathbf{u}_{mod}}\|\cos^{2n}\left(\frac{\theta - \theta_d}{2}\right), \nu_{\max}\bigg\},\label{linear_velocity}
\end{equation}
\begin{equation}
    \omega = -\kappa_{\omega}\omega_{\max}\sin(\theta - \theta_d),\label{angular_velocity}
\end{equation}
where $n > 1$, $\kappa_{\nu} > 0$ and $ \kappa_{\omega} > 0$. The angle $\theta$ represents the heading direction of the robot. The desired heading direction is denoted by $\theta_d$, which is evaluated as $\theta_d = \text{atan2v}(\mathbf{u}_{\text{mod}}).$ The expression $\frac{1 + \cos(\theta - \theta_d)}{2}$ in \eqref{linear_velocity} reduces the linear velocity of the Turtlebot based on the disparity between its current heading direction $\theta$ and the desired heading direction $\theta_d$. 

We set the gains $\kappa_s$, $\kappa_r$, used in \eqref{hybrid_control_input_1}, and $\kappa_{\nu}$, $\kappa_{\omega}$, used in \eqref{linear_velocity} and \eqref{angular_velocity}, to 1. The minimum safety distance $r_s = 0.03$ m and the parameter $\alpha = 0.3$ m. Additionally, the parameter $\epsilon$, used in \eqref{partition_rm}, is set to 0.15 m. The target location is set to the origin, represented by the light green dot, as shown in Fig. \ref{fig:gazebo_display}. In Fig. \ref{fig:gazebo_display}a, the left figure shows the workspace setup in Gazebo with the initial location of the Turtlebot, and the right figure shows the LiDAR sensor measurements. The desired heading direction is denoted by the blue arrow, while the red arrow represents the current heading direction of the Turtlebot. The Turtlebot and the target location are connected via a dotted red line. When the Turtlebot moves straight towards the target location, it eventually enters the $\alpha-$neighbourhood of the unsafe region and switches to the \textit{obstacle-avoidance} mode. In the \textit{obstacle-avoidance} mode, it constructs a virtual ring represented using the red dotted circle, as shown in Fig. \ref{fig:gazebo_display}b. When the nearby obstacle workspace is convex in nature, the projection of the center of the Turtlebot onto this workspace matches the intersection point between the virtual ring and the nearby obstacle, which is represented by the magenta-coloured dot. In Fig. \ref{fig:gazebo_display}b, the blue-dotted curve is a partial boundary of the circle with its center at the origin and a radius of $\norm{\mathbf{h}} - \epsilon$, where $\mathbf{h}$ is the current location of the \textit{hit point}. According to \eqref{avoidance_jump_set}, \eqref{partition_rm} and \eqref{bothmodes_flowjumpset1}, the Turtlebot can switch back to the \textit{move-to-target} mode only if it is inside this circle. When the nearby unsafe region is non-convex in nature, the virtual ring, which is larger in size compared to the robot's body, will have multiple intersections with the obstacles, as shown in Fig. \ref{fig:gazebo_display}c. This prompts the Turtlebot to project on the boundary of the virtual ring instead of the obstacle-occupied workspace to maintain the uniqueness of the projection. In other words, boundary of the virtual ring acts as the boundary of the modified obstacle, as discussed in Remark \ref{remark:virtual_ring}. Finally, in Fig. \ref{fig:gazebo_display}d we can see the Turtlebot approaching towards the target location at the origin. The complete simulation video can be found at \url{https://youtu.be/ZNeiS5qE00k}.

\begin{figure}
    \centering
    \includegraphics[width = 1\linewidth]{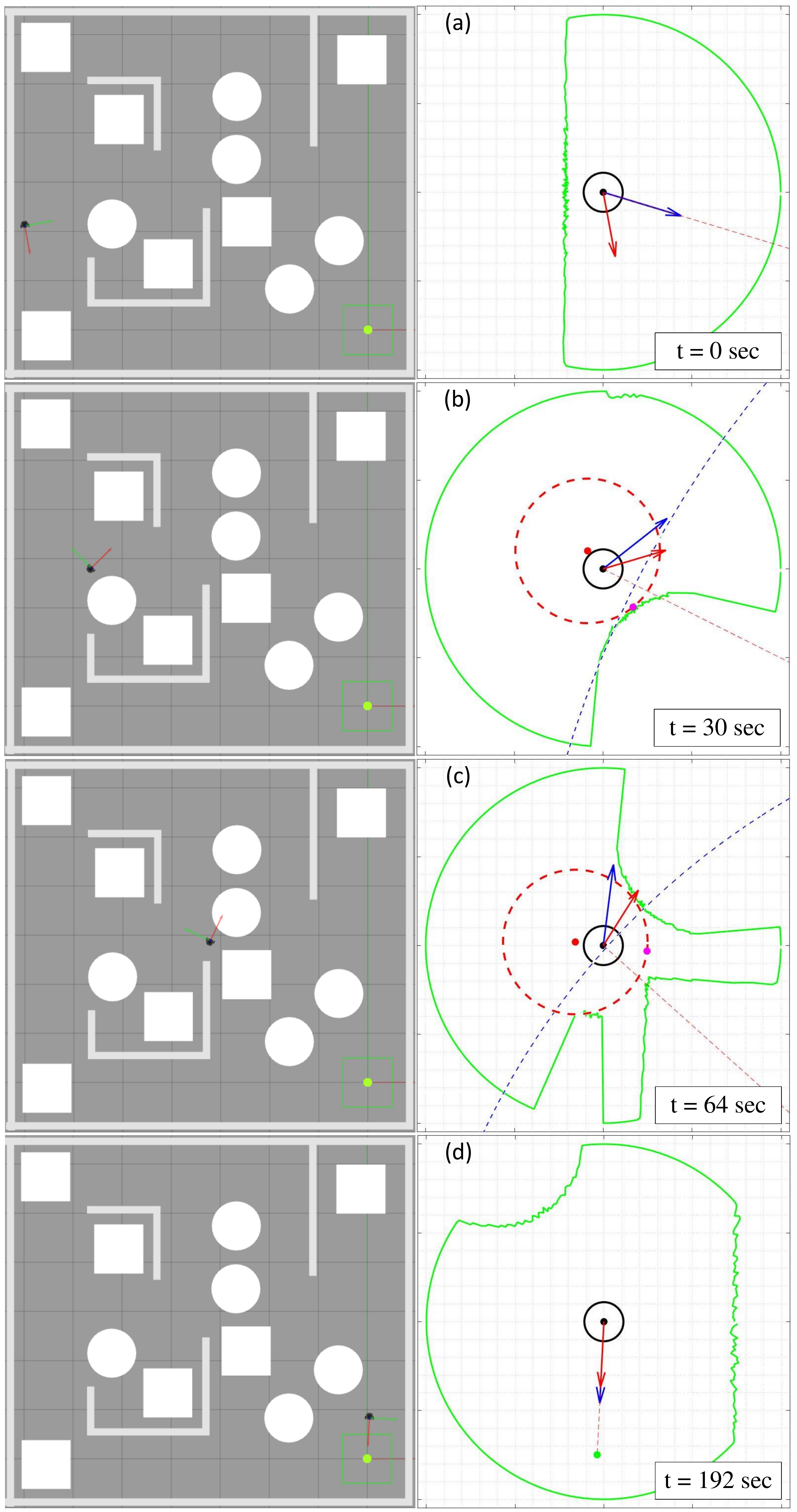}
    \caption{Autonomous navigation of the Turtlebot in non-convex environment.}
    \label{fig:gazebo_display}
\end{figure}

}

\section{Conclusion}\label{sec:conclusion}
We proposed a hybrid feedback controller for safe autonomous navigation in two-dimensional environments with arbitrary-shaped obstacles (possibly non-convex). The obstacles can have non-smooth boundaries and large sizes and can be placed arbitrarily close to each other provided the feasibility requirements stated in Assumptions \ref{assumption:connected_interior} and \ref{Assumption:reach} are satisfied. The proposed hybrid controller, endowed with global asymptotic stability guarantees, relies on an instrumental transformation that virtually modifies the obstacles' shapes such that the uniqueness of the projection of the robot's center onto the closest obstacle is guaranteed - a feature that helps in the design of our \textit{obstacle-avoidance} strategy. 
The obstacle avoidance component of the control law \eqref{definition:vxm} utilizes the projection of the robot's center onto the nearest obstacle. Hence, it is possible to apply the proposed hybrid control scheme in \textit{a priori} unknown environments, as discussed in Section \ref{section:sensor-based_implementation}. 
It should be noted that the trajectories generated by our algorithm may not necessarily correspond to the shortest paths between the initial and final configurations, as shown in Fig. \ref{diagram:multiple_initials}.
Moreover, when the robot switches between the modes, the control input vector \eqref{hybrid_control_input_1} changes value discontinuously. 
Designing a smooth feedback control law that generates robot trajectories with the shortest lengths would be an interesting extension of the present work.
Other interesting extensions consist in considering robots with second-order dynamics and three-dimensional environments with non-convex obstacles.

\begin{appendix}
\subsection{Hybrid basic conditions}
\begin{lemma}
The hybrid closed-loop system \eqref{hybrid_closed_loop_system} with the data $(\mathcal{F}, \mathbf{F}, \mathcal{J}, \mathbf{J})$ satisfies the hybrid basic conditions stated in \cite[Assumption 6.5]{goebel2012hybrid}.\label{nonconvex_hybrid_basic}
\end{lemma}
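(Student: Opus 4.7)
The plan is to verify the three conditions of \cite[Assumption 6.5]{goebel2012hybrid}: (i) $\mathcal{F}$ and $\mathcal{J}$ are closed subsets of the ambient state space; (ii) the flow map $\mathbf{F}$ is outer semicontinuous, locally bounded, and convex-valued on $\mathcal{F}$; and (iii) the jump map $\mathbf{J}$ is outer semicontinuous and locally bounded on $\mathcal{J}$. Because $\mathbb{M}=\{-1,0,1\}$ is finite, it suffices to verify closedness of each component $\mathcal{F}_m$ and $\mathcal{J}_m$ separately, and then take finite unions as in \eqref{bothmodes_flowjumpset}--\eqref{bothmodes_flowjumpset1}.

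First, I would establish closedness of $\mathcal{F}$ and $\mathcal{J}$. For this, I would rely on the fact that the modified obstacle-occupied workspace $\mathcal{O}_{\mathcal{W}}^M$ is closed (as the erosion of a closed set by an open ball) and that the distance function $d(\cdot,\mathcal{O}_{\mathcal{W}}^M)$ is continuous. The set $\mathcal{V}_{r_a}$ is closed by \eqref{y_modified_free_workspace}, and each of $\mathcal{D}_{r_a+\gamma_s}(\mathcal{O}_{\mathcal{W}}^M)$, $\mathcal{D}_{r_a+\gamma}(\mathcal{O}_{\mathcal{W}}^M)$ is closed. Lemma \ref{lemma:unique_projection} ensures the projection map $\mathbf{x}\mapsto\Pi(\mathbf{x},\mathcal{O}_{\mathcal{W}}^M)$ is well-defined and continuous on the $\alpha$-neighbourhood, so the functions $\mathbf{x}\mapsto\mathbf{x}^\intercal(\mathbf{x}-\Pi(\mathbf{x},\mathcal{O}_{\mathcal{W}}^M))$ and $\mathbf{x}\mapsto\measuredangle(\mathbf{x},\mathbf{x}-\Pi(\mathbf{x},\mathcal{O}_{\mathcal{W}}^M))$ are continuous on that neighbourhood. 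The condition $\mathcal{L}_s(\mathbf{x},\mathbf{0})\cap(\mathcal{D}_{r_a}(\mathcal{O}_{i,\alpha}^M))^\circ\ne\emptyset$ defines an open set in $\mathbf{x}$, so its complement (used in $\mathcal{R}_a$) is closed. The landing set $\mathcal{R}_l$ defined in \eqref{definition:individual_landing_region} is not a priori closed due to the open intersection condition, which is why the exit region $\mathcal{R}_e$ is defined as the closure in \eqref{definition:exit_region}; by construction $\mathcal{F}_0^{\mathcal{W}}$ and $\mathcal{F}_m^{\mathcal{W}}$ for $m\in\{-1,1\}$ are finite unions of closed sets (the closures are baked into \eqref{avoidance_flow_set}), and I would verify that $\mathcal{J}_0^{\mathcal{W}}=\mathcal{N}_{\gamma_s}(\mathcal{D}_{r_a}(\mathcal{O}_{\mathcal{W}}^M))\cap\mathcal{R}_l$ and $\mathcal{J}_m^{\mathcal{W}}$ in \eqref{avoidance_jump_set} are closed by rewriting them as intersections/unions of closed sublevel sets of continuous maps.

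Second, to handle the flow map $\mathbf{F}(\xi)=(\mathbf{u}(\xi),\mathbf{0},0)$, I observe it is single-valued, hence convex-valued automatically. For continuity (which implies outer semicontinuity and local boundedness when combined with boundedness of $\mathbf{v}$), I split by modes. For $m=0$, $\mathbf{u}(\xi)=-\kappa_s\mathbf{x}$ is linear. For $m\in\{-1,1\}$, $\mathcal{F}_m^{\mathcal{W}}$ lies in the $\gamma$-neighbourhood of $\mathcal{O}_{\mathcal{W}}^M$ with $\gamma<\alpha-r_a$, so Lemma \ref{lemma:unique_projection} guarantees that $\Pi(\mathbf{x},\mathcal{O}_{\mathcal{W}}^M)$ is unique and depends continuously on $\mathbf{x}$, and $\|\mathbf{x}-\Pi(\mathbf{x},\mathcal{O}_{\mathcal{W}}^M)\|\ge r_a>0$, so \eqref{definition:vxm} is well-defined and continuous on $\mathcal{F}_m^{\mathcal{W}}$. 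Since $\mathbb{M}$ is a discrete finite set, continuity of $\mathbf{F}$ on each $\mathcal{F}_m$ yields outer semicontinuity and local boundedness on $\mathcal{F}$.

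Third, for the jump map $\mathbf{J}$ defined via \eqref{updatelaw_part1}--\eqref{updatelaw_part2}, on $\mathcal{J}_0$ we have $\mathbf{J}(\xi)=\{\mathbf{x}\}\times\{(\mathbf{x},1),(\mathbf{x},-1)\}$, a finite set that depends continuously (as a set) on $\mathbf{x}$; on $\mathcal{J}_m$ for $m\in\{-1,1\}$, $\mathbf{J}(\xi)=(\mathbf{x},\mathbf{h},0)$ is single-valued and continuous. In both cases, the graph of $\mathbf{J}$ restricted to each $\mathcal{J}_m$ is closed and the image of any bounded set is bounded, giving outer semicontinuity and local boundedness; taking the finite union over $m\in\mathbb{M}$ preserves these properties. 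The main obstacle I anticipate is the careful verification of closedness of the landing/exit decomposition, since the sets $\mathcal{R}_l^i$, $\mathcal{R}_a$, $\mathcal{R}_{\pm1}$ are defined via a mix of strict and non-strict line-of-sight and angular conditions; the rest of the argument is routine given continuity of the projection inside the $\alpha$-neighbourhood.
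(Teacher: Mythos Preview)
Your proposal is correct and follows essentially the same approach as the paper: verify closedness of $\mathcal{F}$ and $\mathcal{J}$, use Lemma~\ref{lemma:unique_projection} to get continuity of $\Pi(\cdot,\mathcal{O}_{\mathcal{W}}^M)$ on the relevant neighbourhood so that $\mathbf{F}$ is continuous (hence outer semicontinuous, locally bounded, convex-valued since single-valued), and check that $\mathbf{J}$ has closed graph on each $\mathcal{J}_m$ to conclude outer semicontinuity and local boundedness. You are in fact more careful than the paper, which simply asserts that $\mathcal{F}$ and $\mathcal{J}$ are closed; your flagging of the open line-of-sight condition in $\mathcal{R}_l$ as the delicate point is accurate and is precisely what the paper glosses over.
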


{
\begin{proof}
The flow set $\mathcal{F}$ and the jump set $\mathcal{J}$, defined in \eqref{bothmodes_flowjumpset} and \eqref{bothmodes_flowjumpset1} are closed subsets of $\mathbb{R}^2\times\mathbb{R}^2\times\mathbb{R}$. The flow map $\mathbf{F}$, given in \eqref{hybrid_closed_loop_system}, is continuous on $\mathcal{F}_0$.
For each $\mathbf{x}\in\mathcal{N}_{\beta}(\mathcal{D}_{r_a}(\mathcal{O}_{\mathcal{W}}^M)), \beta\in[r_a, \alpha),$ according to Lemma \ref{lemma:unique_projection}, the set $\mathcal{PJ}(\mathbf{x}, \mathcal{O}_{\mathcal{W}}^M)$ is a singleton. 
Then, for $\mathbf{x}\in\mathcal{N}_{\beta}(\mathcal{D}_{r_a}(\mathcal{O}_{\mathcal{W}}^M)), \beta\in[r_a, \alpha)$, $\Pi(\mathbf{x}, \mathcal{O}_{\mathcal{W}}^M)$ is continuous with respect to $\mathbf{x}.$
As a result, $\mathbf{F}$ is continuous on $\mathcal{F}_m, m\in\{-1, 1\}$. 
Hence $\mathbf{F}$ is continuous on $\mathcal{F}.$ The jump map $\mathbf{J}$, defined in \eqref{hybrid_closed_loop_system}, is single-valued on $\mathcal{J}_m, m\in\{-1 ,1\}$ \eqref{updatelaw_part2}. Also, $\mathbf{J}$ has a closed graph relative to $\mathcal{J}_0$ \eqref{bothmodes_flowjumpset1}, as it is allowed to be set-valued whenever $\mathbf{x}\in\mathcal{J}_0.$ Hence, according to \cite[Lemma 5.10]{goebel2012hybrid}, $\mathbf{J}$ is outer semi-continuous and locally bounded relative to $\mathcal{J}$.
\end{proof}

\subsection{Proof of Lemma \ref{lemma:reach_property}}
\label{proof:reach_property}

This proof is by contradiction. Let us assume that there exists $\mathbf{x}\in\mathbb{R}^n\setminus\mathcal{A}$ such that $d(\mathbf{x}, \mathcal{A}) = \beta$, where $\beta\in(0, \alpha)$. We further assume that $d(\mathbf{x}, \mathcal{G}) = \eta$ and the set $\mathcal{PJ}(\mathbf{x}, \mathcal{G})$ is not a singleton, where the closed set $\mathcal{G} = (\mathcal{A}\oplus\mathcal{B}_{\alpha}^{\circ}(\mathbf{0}))^{c}$. 

Since we have $\mathbf{reach}(\mathcal{A}) \geq  \alpha$, the set $\mathcal{PJ}(\mathbf{x}, \mathcal{A})$ is a singleton. As a result, according to \cite[Lemma 4.5]{rataj2019curvature}, it follows that $\frac{\mathbf{x} - \Pi(\mathbf{x}, \mathcal{A})}{\|{\mathbf{x} - \Pi(\mathbf{x}, \mathcal{A})}\|}\in\mathbf{N}_{\mathcal{A}}(\Pi(\mathbf{x}, \mathcal{A}))$, where  $\mathbf{N}_{\mathcal{A}}(\Pi(\mathbf{x}, \mathcal{A}))$ denotes the normal cone to the set $\mathcal{A}$ at the point $\Pi(\mathbf{x}, \mathcal{A}).$ 
Furthermore, according to \cite[Lemma 4.5]{rataj2019curvature}, for $\mathbf{q}= \Pi(\mathbf{x}, \mathcal{A}) + \alpha\frac{\mathbf{x} - \Pi(\mathbf{x}, \mathcal{A})}{\|{\mathbf{x} - \Pi(\mathbf{x}, \mathcal{A})}\|}$, the open Euclidean ball $\mathcal{B}_{\alpha}^{\circ}(\mathbf{q})$ does not intersect with the set $\mathcal{A}$ \textit{i.e.},  $\mathcal{B}_{\alpha}^{\circ}(\mathbf{q})\cap\mathcal{A} = \emptyset.$ Hence, as the set $\partial\mathcal{G}$ contains all points in $\mathbb{R}^n$ that are exactly at $\alpha$ distance away from the set $\mathcal{A}$, one can conclude that $\mathbf{q}\in\partial\mathcal{G}\cap\mathcal{L}(\mathbf{x}, \Pi(\mathbf{x}, \mathcal{A}))$. 
Now, we consider two cases depending on the location of the point $\mathbf{q}$ on the line $\mathcal{L}(\mathbf{x}, \Pi(\mathbf{x}, \mathcal{A}))$ as follows:

\textbf{Case 1: } when $d(\mathbf{q}, \Pi(\mathbf{x}, \mathcal{A})) = \alpha \geq \beta + \eta$. Since $\alpha \geq \beta + \eta$, one has $\mathcal{B}_{\eta}(\mathbf{x})\subset\mathcal{B}_{\alpha}(\Pi(\mathbf{x}, \mathcal{A}))\subset\mathcal{D}_{\alpha}(\mathcal{A})$. Additionally, as $\mathbf{card}(\mathcal{PJ}(\mathbf{x},\partial\mathcal{G})) > 1$, there exist $\mathbf{p}_1\in\mathcal{G}$ and $\mathbf{p}_2\in\mathcal{G}$ such that $\{\mathbf{p}_1, \mathbf{p}_2\}\subset\partial\mathcal{B}_{\eta}(\mathbf{x})\cap\partial\mathcal{G}$. Hence, there should be at least two points of contact between the sets $\partial\mathcal{B}_{\eta}(\mathbf{x})$ and $\partial\mathcal{B}_{\alpha}(\Pi(\mathbf{x}, \mathcal{A}))$. Since $\alpha \geq\beta + \eta$, the Euclidean ball $\mathcal{B}_{\eta}(\mathbf{x})$ can only touch the boundary of the Euclidean ball $\mathcal{B}_{\alpha}(\Pi(\mathbf{x}, \mathcal{A}))$ at no more that one point, resulting in a contradiction.

\textbf{Case 2: } when $d(\mathbf{q}, \Pi(\mathbf{x}, \mathcal{A})) = \alpha \in(\beta, \beta+\eta)$. Since $\mathbf{q}\in\partial\mathcal{G}\cap\mathcal{L}(\mathbf{x}, \Pi(\mathbf{x}, \mathcal{A}))$ and $\alpha\in(\beta, \beta + \eta)$, one has $\mathbf{q}\in\mathcal{B}_{\eta}^{\circ}(\mathbf{x})$. This implies that $d(\mathbf{x},\mathcal{G}) < \eta,$ which is a contradiction.

\subsection{Proof of Lemma \ref{lemma:reach_extends}}
\label{proof:reach_extends}
The cases where $\beta = 0$ and $\beta = \alpha$ are trivial. We analyze the case where $\beta \in(0, \alpha)$. This proof is by contradiction. Let us assume that there exists $\mathbf{x}$ such that $d(\mathbf{x}, \mathcal{A}) = \eta$, where $\eta\in(\beta, \alpha)$, and the set $\mathcal{PJ}(\mathbf{x},\mathcal{D}_{\beta}( \mathcal{A}))$ is not a singleton. Therefore, there exist at least two distinct points $\mathbf{p}_1$ and $\mathbf{p}_2$ such that $\mathbf{p}_1, \mathbf{p}_2\subset\partial\mathcal{D}_{\beta}(\mathcal{A})\cap\mathcal{PJ}(\mathbf{x}, \mathcal{D}_{\beta}(\mathcal{A})).$

Since $\mathbf{reach}(\mathcal{A}) \geq \alpha$, the points $\mathbf{p}_1$ and $\mathbf{p}_2$, which belong to the set $\partial\mathcal{D}_{\beta}(\mathcal{A})$, have unique projection on the set $\mathcal{A}$ and $d(\mathbf{p}_1, \Pi(\mathbf{p}_1, \mathcal{A})) = d(\mathbf{p}_2, \Pi(\mathbf{p}_2, \mathcal{A})) = \beta$. 
Moreover, as $\mathbf{x}\in\partial\mathcal{D}_{\eta}(\mathcal{A})$, one has $d(\mathbf{x}, \mathbf{p}_1) = d(\mathbf{x}, \mathbf{p}_2) = \eta - \beta$, and $d(\mathbf{x}, \mathcal{A}) = \eta$. Therefore, one has $d(\mathbf{x}, \Pi(\mathbf{p}_1, \mathcal{A})) = d(\mathbf{x}, \Pi(\mathbf{p}_2, \mathcal{A})) = \eta.$ As a result, $\Pi(\mathbf{p}_1, \mathcal{A})$ and $\Pi(\mathbf{p}_2, \mathcal{A})$ belong to the set $\mathcal{PJ}(\mathbf{x}, \mathcal{A}).$ Since $\eta\in(0, \alpha)$ and $\mathbf{reach}(\mathcal{A}) \geq \alpha$, it is clear that $\mathbf{card}(\mathcal{PJ}(\mathbf{x}, \mathcal{A})) = 1$. Hence, $\Pi(\mathbf{p}_1, \mathcal{A}) = \Pi(\mathbf{p}_2, \mathcal{A}) = \Pi(\mathbf{x}, \mathcal{A}).$ This, by the application of triangular inequality, implies that $\mathbf{p}_1 = \mathbf{p}_2$, which is a contradiction.

\subsection{Proof of Lemma \ref{lemma:unique_projection}}
\label{proof:unique_projection}
Note that the following statement: ``for all locations $\mathbf{x}$ with $d(\mathbf{x}, \mathcal{O}_{\mathcal{W}}^M) <\alpha$, where $\alpha \geq 0$, the set $\mathcal{PJ}(\mathbf{x}, \mathcal{O}_{\mathcal{W}}^M)$ is singleton", is equivalent to having $\mathbf{reach}(\mathcal{O}_{\mathcal{W}}^M) \geq \alpha,$ as defined in Section \ref{section:set_with_positive_reach}.
According to Remark \ref{remark:alternate_modified_obstacle}, $\mathcal{O}_{\mathcal{W}}^M = \mathbf{M}(\mathcal{O}_{\mathcal{W}}, \alpha) = (\mathcal{W}_{\alpha}\oplus\mathcal{B}_{\alpha}^{\circ}(\mathbf{0}))^c$, where the closed set $\mathcal{W}_{\alpha}$ is defined as per \eqref{y_free_workspace} for some $\alpha\geq 0$. If one proves that $\mathbf{reach}(\mathcal{W}_{\alpha}) \geq \alpha$, then, according to Lemma \ref{lemma:reach_property}, one has $\mathbf{reach}(\mathcal{O}_{\mathcal{W}}^M)\geq\alpha$. To that end, we make use of \cite[Proposition 4.14]{rataj2019curvature} to show that Assumption \ref{Assumption:reach} implies that $\mathbf{reach}(\mathcal{W}_{\alpha}) \geq \alpha$.

We know that for $\mathbf{x}\in\mathcal{W}_{\alpha}^{\circ}$,  $\mathbf{T}_{\mathcal{W}_{\alpha}}(\mathbf{x}) = \mathbb{R}^2$, where $\mathbf{T}_{\mathcal{W}_{\alpha}}(\mathbf{x})$ denotes the tangent cone to the set $\mathcal{W}_{\alpha}$ at $\mathbf{x}$. Therefore, for all $\mathbf{p}\in\mathcal{W}_{\alpha}$, one has $\mathbf{p} - \mathbf{x}\in\mathbf{T}_{\mathcal{W}_{\alpha}}(\mathbf{x})$. This implies that for all $\mathbf{x}\in\mathcal{W}_{\alpha}^{\circ}$ and for all $\mathbf{p}\in\mathcal{W}_{\alpha}$, $d(\mathbf{p} - \mathbf{x}, \mathbf{T}_{\mathcal{W}_{\alpha}}(\mathbf{x})) = 0.$ 

Next, we consider the case where $\mathbf{x}\in\partial\mathcal{W}_{\alpha}$ and $\mathbf{p}\in\mathcal{W}_{\alpha}$. If $\mathbf{p}-\mathbf{x}\in\mathbf{T}_{\mathcal{W}_{\alpha}}(\mathbf{x})$, then $d(\mathbf{p} - \mathbf{x}, \mathbf{T}_{\mathcal{W}_{\alpha}}(\mathbf{x})) = 0.$ 
On the other hand, when $\mathbf{p}-\mathbf{x}\notin\mathbf{T}_{\mathcal{W}_{\alpha}}(\mathbf{x})$, we define $\mathcal{T}(\mathbf{x}, \mathbf{p}) := \{\mathbf{t}\in\mathbf{T}_{\mathcal{W}_{\alpha}}(\mathbf{x})|\mathbf{t} = \underset{\mathbf{q}\in\mathbf{T}_{\mathcal{W}_{\alpha}}(\mathbf{x})\setminus\{\mathbf{0}\}}{\arg \min}|\psi(\mathbf{p}-\mathbf{x}, \mathbf{q})|\}$ as the set of all non-zero tangent vectors $\mathbf{t}\in\mathbf{T}_{\mathcal{W}_{\alpha}}(\mathbf{x})$ such that the absolute value of the angle measured from $\mathbf{p}-\mathbf{x}$ to $\mathbf{t}$ is the smallest. 
Since $\mathcal{W}_{\alpha} = \mathcal{W}\setminus\mathcal{D}_{\alpha}(\mathcal{O}_{\mathcal{W}}^{\circ})$, for all $\mathbf{x}\in\partial\mathcal{W}_{\alpha}$, the set $\mathbf{N}_{\mathcal{W}_{\alpha}}(\mathbf{x})\setminus\{\mathbf{0}\}$ is not empty, where $\mathbf{N}_{\mathcal{W}_{\alpha}}(\mathbf{x})$ represents the normal cone to the set $\mathcal{W}_{\alpha}$ at $\mathbf{x}$. Therefore, there exists $\mathbf{n}\in\mathbf{N}_{\mathcal{W}_{\alpha}}(\mathbf{x})$ such that $\|\mathbf{n}\| = 1$ and $\mathbf{n}^\top\mathbf{t} = 0$ for some $\mathbf{t}\in\mathcal{T}(\mathbf{x}, \mathbf{p}).$

Now, one can construct the ball $\mathcal{B}_{\beta}(\mathbf{x} + \beta\mathbf{n})$ for some $\beta > 0$ such that $\mathbf{p}\in\partial\mathcal{B}_{\beta}(\mathbf{x} + \beta\mathbf{n})$, as shown in Fig. \ref{proof_figure_assumption_reach}. It is clear that $\beta\geq \alpha$, otherwise, it will imply that 
$\mathbf{p}\in\mathcal{B}_{\alpha}^{\circ}(\mathbf{x} + \alpha\mathbf{n})$, which does not satisfy Assumption \ref{Assumption:reach}. Therefore, it can be shown that $d(\mathbf{p}-\mathbf{x}, \mathbf{T}_{\mathcal{W}_{\alpha}}(\mathbf{x})) = \|\mathbf{p}- \mathbf{x}\|^2/2\beta \leq \|\mathbf{p}- \mathbf{x}\|^2/2\alpha$ for any $\mathbf{x}, \mathbf{p}\in\mathcal{W}_{\alpha}$. Hence, according to \cite[Proposition 4.14]{rataj2019curvature}, one can conclude that $\mathbf{reach}(\mathcal{W}_{\alpha})\geq\alpha.$

\begin{figure}[ht]
    \centering
    \includegraphics[width = 0.45\linewidth]{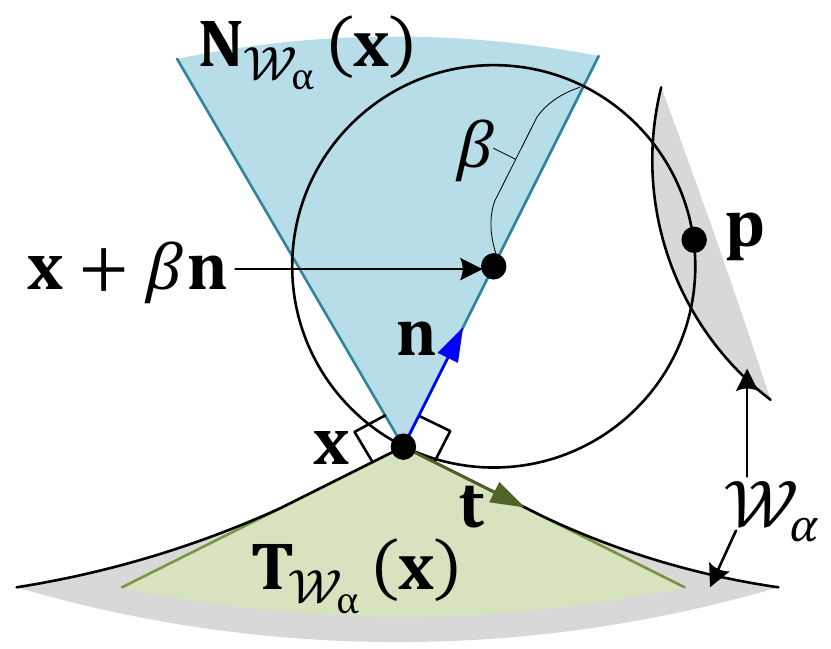}
    \caption{Illustration of the case where $\mathbf{x}\in\partial\mathcal{W}_{\alpha}$ and $\mathbf{p}\in\mathcal{W}_{\alpha}$ such that $\mathbf{p}-\mathbf{x}\notin\mathbf{T}_{\mathcal{W}_{\alpha}}(\mathbf{x}).$}
    \label{proof_figure_assumption_reach}
\end{figure}

\subsection{Proof of Lemma \ref{lemma:pathwise_connected}}
\label{proof:pathwise_connected}
Let us assume that the modified obstacle $\mathcal{O}_{i, \alpha}^M$ is not a connected set, then it implies that there exists two disjoint subsets $\mathcal{M}_1\subset\mathcal{O}_{i, \alpha}^M$ and $\mathcal{M}_2\subset\mathcal{O}_{i, \alpha}^M$ such that $\mathcal{M}_1\cup\mathcal{M}_2 = \mathcal{O}_{i, \alpha}^M$ and $\mathcal{M}_1\cap\mathcal{M}_2 = \emptyset,$ \cite[Definition 16.1]{willard2012general}. By construction in \eqref{obstacle_modification_step}, there exists two non-empty set $\mathcal{O}_{\mathbb{C}}\subset\mathcal{O}_{i, \alpha}$ and $\mathcal{O}_{\mathbb{D}}\subset\mathcal{O}_{i, \alpha}$ such that $\mathcal{O}_{\mathbb{C}}\subset\mathcal{M}_1$, $\mathcal{O}_{\mathbb{D}}\subset\mathcal{M}_2$ and $\mathcal{O}_{\mathbb{C}}\cup\mathcal{O}_{\mathbb{D}} = \mathcal{O}_{i, \alpha}$. Note that, the distance $d(\mathcal{M}_1, \mathcal{M}_2)$ can not be greater than or equal to $2\alpha.$ Otherwise, as the operator $\mathbf{M}$ is extensive, see Remark \ref{remark:features}, it implies that $d(\mathcal{O}_{\mathbb{C}}, \mathcal{O}_{\mathbb{D}}) \geq2\alpha.$ However, according to \eqref{obstacle_alpha_chain}, one has $d(\mathcal{O}_{\mathbb{C}}, \mathcal{O}_{\mathbb{D}})<2\alpha$. Hence, one can conclude that $0<d(\mathcal{M}_1, \mathcal{M}_2)<2\alpha.$ Therefore, there exists $\mathbf{q}_1\in\mathcal{M}_1$ and $\mathbf{q}_2\in\mathcal{M}_2$ such that $0<d(\mathbf{q}_1, \mathbf{q}_2) = d(\mathcal{M}_2, \mathcal{M}_2)<2\alpha$. 
This implies that for $\mathbf{q} = 0.5\mathbf{q}_1 + 0.5\mathbf{q}_2 \in\mathcal{D}_{\alpha}(\mathcal{O}_{i, \alpha}^M)$, one has $\mathbf{card}(\mathcal{PJ}(\mathbf{q}, \mathcal{O}_{i, \alpha}^M)) > 1$, which cannot be the case as per Lemma \ref{lemma:unique_projection}. Hence, the modified obstacle $\mathcal{O}_{i, \alpha}^M$ is a connected set.
}

\subsection{Proof of Lemma \ref{lemma:alpha_existence}}
\label{proof:alpha_existence}
According to Assumption \ref{assumption:connected_interior}, the set $\mathcal{W}_{r_a}^{\circ}$ is a pathwise connected set. Hence, it is evident that there exist a scalar $\bar{\delta}_1 > r_a$ such that for all $\delta\in(r_a, \bar{\delta}_1]$ the $\delta$ eroded obstacle-free workspace $\mathcal{W}_{\delta}$ is pathwise connected. Assumption \ref{assumption:connected_interior}, which assumes that the target location at the origin is in the set $\mathcal{W}_{r_a}^{\circ}$, implies that $d(\mathbf{0}, \mathcal{W}_{r_a}) > 0$. Therefore, it is straight forward to notice that there exists $\bar{\delta}_2>r_a$ such that for any $\delta\in(r_a, \bar{\delta}_2]$ the distance $d(\mathbf{0}, \mathcal{W}_{\delta}) < \delta - r_a.$ As a result, one can choose $\bar{\alpha} = \min\{\bar{\delta}_1, \bar{\delta}_2\}$ to satisfy Lemma \ref{lemma:alpha_existence}.

\subsection{Proof of Lemma \ref{lemma:properties_of_modified_workspace}}
\label{proof:properties_of_modified_workspace}
Since the obstacle reshaping operator $\mathbf{M}$ is \textit{idempotent}, we have $\mathbf{M}(\mathcal{O}_{\mathcal{W}}, \alpha) = \mathbf{M}(\mathcal{O}_{\mathcal{W}}^M, \alpha)$. Therefore, according to \eqref{obstacle_modification_step}, $\mathcal{O}_{\mathcal{W}}\oplus\mathcal{B}_{\alpha}^{\circ}(\mathbf{0}) = \mathcal{O}_{\mathcal{W}}^M\oplus\mathcal{B}_{\alpha}^{\circ}(\mathbf{0})$. As a result, according to \eqref{y_free_workspace} and \eqref{y_modified_free_workspace}, the $\alpha-$eroded obstacle-free workspace $\mathcal{W}_{\alpha}$ is equivalent to the $\alpha-$eroded modified obstacle-free workspace $\mathcal{V}_{\alpha}$. Hence, if one chooses $\alpha$ as per Lemma \ref{lemma:alpha_existence}, then the set $\mathcal{V}_{\alpha}$ is pathwise connected. As a result, the modified obstacle-free workspace $\mathcal{V}_{r_a}$ is also pathwise connected. Moreover, according to Lemma \ref{lemma:alpha_existence}, the distance between the origin and the set $\mathcal{V}_{\alpha}$ is less than $\alpha - r_a$. Since the distance between the sets $\partial\mathcal{V}_{\alpha}$ and $\partial\mathcal{V}_{r_a}$ is $\alpha - r_a$, it is evident that the origin belongs to the interior of the modified obstacle-free workspace $\mathcal{V}_{r_a}.$

\subsection{Proof of Lemma \ref{lemma:bar_epsilon}}
\label{proof:bar_epsilon}
We consider a connected modified obstacle $\mathcal{O}_{i, \alpha}^M, i\in\mathbb{I}$, as stated in Lemma \ref{lemma:pathwise_connected}, and proceed by proving the following two claims:

\textbf{Claim 1:} For every $\mathbf{h}\in\mathcal{J}_0^{\mathcal{W}}\cap\partial\mathcal{D}_{\beta}(\mathcal{O}_{i, \alpha}^M), \beta\in[r_a, r_a + \gamma_s]$ and any $\mathbf{p}\in\mathcal{PJ}(\mathbf{0}, \partial\mathcal{D}_{\beta}(\mathcal{O}_{i, \alpha}^M))$, one has $d(\mathbf{h},\mathcal{B}_{2\delta}(\mathbf{p}))\geq\epsilon_h$, where
\begin{equation}
\epsilon_h = \sqrt{\left(d(\mathbf{0},\mathcal{O}_{\mathcal{W}}^M)^2 - r_a^2\right)} - d(\mathbf{0},\mathcal{D}_{r_a}(\mathcal{O}_{\mathcal{W}}^M)),\nonumber
\end{equation}
and $\delta = \min\{d(\mathbf{0}, \mathcal{O}_{\mathcal{W}}^M) - r_a, \beta - r_a\}.$

\textbf{Claim 2:} For $m\in\{-1, 1\}$, the set $\mathcal{H}_{\mathbf{p}}:=\mathcal{B}_{\delta}(\mathbf{p})\cap\mathcal{N}_{\gamma}(\mathcal{D}_{r_a}(\mathcal{O}_{i, \alpha}^M))\subset\mathcal{R}_a,$ where the location $\mathbf{p}$ and the scalar parameter $\delta$ are defined in claim 1 above.

Claim 1 states that for any connected modified obstacle $\mathcal{O}_{i, \alpha}^M$, the distance between any point $\mathbf{h}$, which is located in the jump set of the \textit{move-to-target} mode associated with this modified obstacle at some distance $\beta\in[r_a, r_a + \gamma_s]$ from it \textit{i.e.,} $\mathbf{h}\in\mathcal{J}_{0}^{\mathcal{W}}\cap\partial\mathcal{D}_{\beta}(\mathcal{O}_{i, \alpha}^M), \beta\in[r_a, r_a + \gamma_s]$, and 
the Euclidean balls of radius $2\delta$ centered at the set of projections of the target onto the set $\partial\mathcal{D}_{\beta}(\mathcal{O}_{i, \alpha}^M)$ is always greater than or equal to $\epsilon_h$ \textit{i.e.}, $d(\mathbf{h}, \mathcal{B}_{2\delta}(\mathbf{p}))\geq\epsilon_h,$ where $\mathbf{p}\in\mathcal{PJ}(\mathbf{0}, \mathcal{D}_{\beta}(\mathcal{O}_{i, \alpha}^M))$ and $\delta = \min\{\beta - r_a, d(\mathbf{0}, \mathcal{O}_{i, \alpha}^M) - r_a\}.$

Claim 2 states that the set $\mathcal{H}_{\mathbf{p}}$, which represents the intersection between the $\delta$-neighbourhood of the set of projections of the target onto the set $\partial\mathcal{D}_{\beta}(\mathcal{O}_{i, \alpha}^M)$ \textit{i.e.}, $\mathcal{B}_{\delta}(\mathbf{p}), \mathbf{p}\in\mathcal{PJ}(\mathbf{0}, \mathcal{D}_{\beta}(\mathcal{O}_{i, \alpha}^M))$ and the $\gamma-$neighbourhood of the dilated modified obstacle \textit{i.e.}, $\mathcal{N}_{\gamma}(\mathcal{D}_{r_a}(\mathcal{O}_{i, \alpha}^M))$ is subset of the \textit{always exit} region $\mathcal{R}_a$ \eqref{always_exit_region}. This implies that if $\bar\epsilon \in(0, \epsilon_h]$, then $\mathcal{H}_{\mathbf{p}}\subset\mathcal{ER}_m^{\mathbf{h}}$, where the set $\mathcal{ER}_m^{\mathbf{h}}$ is defined in \eqref{partition_rm}.
\subsubsection{Proof of claim 1}We aim to obtain an expression for $\norm{\mathbf{h}_{\beta}} - \norm{\mathbf{p}_{\beta}}$, for $\beta\in[r_a, r_a + \gamma_s]$, where
\begin{equation}
    \mathbf{h}_{\beta} = \underset{\mathbf{h}\in\mathcal{J}_{0}^{\mathcal{W}}\cap\partial\mathcal{D}_{\beta}(\mathcal{O}_{i, \alpha}^M)}{\text{argmin }}\norm{\mathbf{h}},\label{location_of_hmin}
\end{equation}
and the location $\mathbf{p}_{\beta}\in\mathcal{PJ}(\mathbf{0}, \partial\mathcal{D}_{\beta}(\mathcal{O}_{i, \alpha}^M))$.

Now, depending on the shape of the obstacle there can be two possibilities as follows:

\textbf{Case A: }When $\mathbf{h}_{\beta}\in\partial\mathcal{D}_{\beta}(\mathcal{O}_{i, \alpha}^M)\cap\mathcal{CH}(\mathbf{0}, \mathcal{O}_{i, \alpha}^M),$ as shown in Fig. \ref{lemma_2_image_1}. Notice that the line segment joining the location $\mathbf{h}_{\beta}$ and the origin passes through the modified obstacle $\mathcal{O}_{i, \alpha}^M$ \textit{i.e.}, $\mathcal{L}_s(\mathbf{h}_{\beta}, \mathbf{0})\cap\mathcal{O}_{i, \alpha}^M\ne \emptyset$. Hence, a part of this line segment belongs to the modified obstacle $\mathcal{O}_{i, \alpha}^M$. In other words, there exist locations $\mathbf{b}$ and $\mathbf{e}$, where $\mathbf{b}, \mathbf{e}\in\partial\mathcal{O}_{i, \alpha}^M$, such that $\mathcal{L}_s(\mathbf{b}, \mathbf{e})\subset\mathcal{L}_s(\mathbf{h}_{\beta}, \mathbf{0})\cap\mathcal{O}_{i, \alpha}^M$, as shown in Fig. \ref{lemma_2_image_1}. We further consider two more sub-cases based on the distance between the target $\mathbf{0}$ and the modified obstacle $\mathcal{O}_{i, \alpha}^M$. Note that, as per Assumption \ref{assumption:connected_interior}, $d(\mathbf{0}, \mathcal{O}_{i, \alpha}^M) > r_a.$

\textbf{Case A1: }When $d(\mathbf{0}, \mathcal{O}_{i, \alpha}^M) \geq \beta,$ as shown in Fig. \ref{lemma_2_image_1}a, one has $ \norm{\mathbf{h}_{\beta}} - \norm{\mathbf{p}_{\beta}} \geq \norm{\mathbf{h}_{\beta}} - \norm{\mathbf{e}} + \norm{\mathbf{b}}- \norm{\mathbf{p}_{\beta}}.$

Since $\mathbf{h}_{\beta}\in\partial\mathcal{D}_{\beta}(\mathcal{O}_{i, \alpha}^M), \{\mathbf{b}, \mathbf{e}\}\subset\partial\mathcal{O}_{i, \alpha}^M$ and $d(\mathbf{0}, \mathcal{O}_{i, \alpha}^M) \geq \beta$, one has $\norm{\mathbf{h}_{\beta}} - \norm{\mathbf{e}} \geq \beta$ and $\norm{\mathbf{b}} \geq \beta + \norm{\mathbf{p}_{\beta}}.$ Hence,
\begin{gather}
\norm{\mathbf{h}_{\beta}} - \norm{\mathbf{p}_{\beta}} \geq2\beta = 2(\beta - r_a) + 2r_a,\nonumber\\
d(\mathbf{h}_{\beta}, \mathcal{B}_{2(\beta - r_a)}(\mathbf{p}_{\beta})) \geq 2r_a.
\label{bound_1}
\end{gather}

\textbf{Case A2: }When $r_a < d(\mathbf{0}, \mathcal{O}_{i, \alpha}^M) < \beta,$ as shown in Fig. \ref{lemma_2_image_1}b, one has $ \norm{\mathbf{h}_{\beta}} - \norm{\mathbf{p}_{\beta}} \geq \norm{\mathbf{h}_{\beta}} - \norm{\mathbf{e}} + \norm{\mathbf{b}}- \norm{\mathbf{p}_{\beta}}.$

Since $\mathbf{h}_{\beta}\in\partial\mathcal{D}_{\beta}(\mathcal{O}_{i, \alpha}^M), \{\mathbf{b}, \mathbf{e}\}\subset\partial\mathcal{O}_{i, \alpha}^M$, $r_a < d(\mathbf{0}, \mathcal{O}_{i, \alpha}^M) < \beta$ and by construction, as shown in Fig. \ref{lemma_2_image_1}b, one has $\norm{\mathbf{h}_{\beta}} - \norm{\mathbf{e}} \geq \beta$ and $\norm{\mathbf{b}} \geq \beta - \norm{\mathbf{p}_{\beta}} = r_a + \norm{\mathbf{p}_{r_a}}.$ Hence, as $\norm{\mathbf{p}_{r_a}} = d(\mathbf{0}, \mathcal{D}_{r_a}(\mathcal{O}_{i, \alpha}^M))$, one has
\begin{gather}
\norm{\mathbf{h}_{\beta}} - \norm{\mathbf{p}_{\beta}} \geq2\beta -2\norm{\mathbf{p}_{\beta}} = 2\norm{\mathbf{p}_{r_a}} + 2r_a,\nonumber\\
d(\mathbf{h}_{\beta}, \mathcal{B}_{2d(\mathbf{0}, \mathcal{D}_{r_a}(\mathcal{O}_{i, \alpha}^M))}(\mathbf{p}_{\beta})) > 2r_a.\label{bound_2}
\end{gather}

\textbf{Case B: }When $\mathbf{h}_{\beta}\in\mathcal{G}$, where the set $\mathcal{G}:=\left(\partial\mathcal{D}_{\beta}(\mathcal{O}_{i, \alpha}^M)\cap\mathcal{CH}(\mathbf{0}, \mathcal{D}_{r_a}(\mathcal{O}_{i, \alpha}^M))\right)\setminus\mathcal{CH}(\mathbf{0}, \mathcal{O}_{i, \alpha}^M)$. In other words, when the line segment joining the locations $\mathbf{h}_{\beta}$ and the target at the origin does not intersect with the interior of the modified obstacle $\mathcal{O}_{i, \alpha}^M$ \textit{i.e.}, $\mathcal{L}_s(\mathbf{h}_{\beta}, \mathbf{0})\cap(\mathcal{O}_{i, \alpha}^M)^{\circ} = \emptyset,$ as shown in Fig. \ref{lemma_2_image_1}. To proceed with the proof, we use the following fact which states that the projection of the target location onto the set $\mathcal{G}$ always belongs to the intersection between the boundary of the conic hull $\mathcal{CH}(\mathbf{0}, \mathcal{D}_{r_a}(\mathcal{D}_{r_a}(\mathcal{O}_{i, \alpha}^M)))$ and the set $G$.

\textbf{Fact 2:} The set $\mathcal{PJ}(\mathbf{0}, \mathcal{G})\subset\mathcal{G}\cap\partial\mathcal{CH}(\mathbf{0},\mathcal{D}_{r_a}(\mathcal{O}_{i, \alpha}^M)).$


\textit{Sketch of the proof}: The proof is by contradiction. We assume that there exists a location $\mathbf{x}$ in the set $\mathcal{PJ}(\mathbf{0}, \mathcal{G})$ that does not belong to the intersection between the set $\mathcal{G}$ and the boundary of the conic hull to the set $\mathcal{D}_{r_a}(\mathcal{O}_{i, \alpha}^M)$ \textit{i.e.}, $\mathbf{x}\in\mathcal{PJ}(\mathbf{0}, \mathcal{G})$ and $\mathbf{x}\notin\mathcal{G}\cap\partial\mathcal{CH}(\mathbf{0}, \mathcal{D}_{r_a}(\mathcal{O}_{i, \alpha}^M))$. We know that the curve $\mathcal{G}$ belongs to the boundary of the $\beta-$dilated modified obstacle \textit{i.e.}, $\mathcal{G}\subset\partial\mathcal{D}_{\beta}(\mathcal{O}_{i, \alpha}^M)$. As a result, there exists a partial section of the boundary of the modified obstacle $\mathcal{O}_{i, \alpha}^M$, let say $\mathcal{M}\subset\partial\mathcal{O}_{i, \alpha}^M$ with $\mathbf{x}\in\mathcal{D}_{\beta}(\mathcal{M})$ such that this curve $\mathcal{M}$ belongs to the relative complement of the conic hull $\mathcal{CH}(\mathbf{0}, \mathcal{O}_{i, \alpha}^M)$ with respect to the conic hull $\mathcal{CH}(\mathbf{0}, \mathcal{D}_{r_a}(\mathcal{O}_{i, \alpha}^M))$ \textit{i.e.}, $\mathcal{M}\in\mathcal{CH}(\mathbf{0}, \mathcal{D}_{r_a}(\mathcal{O}_{i, \alpha}^M))\setminus\mathcal{CH}(\mathbf{0}, \mathcal{O}_{i, \alpha}^M)$. However, by construction of the set $\mathcal{G}$, the intersection between the modified obstacle $\mathcal{O}_{i, \alpha}^M$ and the region $\mathcal{CH}(\mathbf{0}, \mathcal{D}_{r_a}(\mathcal{O}_{i, \alpha}^M))\setminus\mathcal{CH}(\mathbf{0}, \mathcal{O}_{i, \alpha}^M)$ must be an empty set. Therefore, we arrive at a contradiction.

According to Fact 2, $\mathbf{h}_{\beta}\in \mathcal{G}\cap\partial\mathcal{CH}(\mathbf{0}, \mathcal{D}_{r_a}(\mathcal{D}_{r_a}(\mathcal{O}_{i, \alpha}^M)))$, as shown in Fig. \ref{lemma_2_image_2}. Similar to case A, we consider two more sub-cases based on the distance between the target $\mathbf{0}$ and the modified obstacle $\mathcal{O}_{i, \alpha}^M$.

\textbf{Case B1: }When $d(\mathbf{0}, \mathcal{O}_{i, \alpha}^M) \geq \beta, $ as shown in Fig \ref{lemma_2_image_2}a, one has
\begin{equation}
\norm{\mathbf{h}_{\beta}} - \norm{\mathbf{p}_{\beta}}= \norm{\mathbf{h}_{\beta}} - \norm{\mathbf{h}_{r_a}} + \norm{\mathbf{h}_{r_a}} - \norm{\mathbf{p}_{\beta}}.\label{proof_lemma_2_1}
\end{equation}
Since $\mathbf{h}_{\beta}\in\partial\mathcal{D}_{\beta}(\mathcal{O}_{i, \alpha}^M)$ and  $\mathbf{h}_{r_a}\in\partial\mathcal{D}_{r_a}(\mathcal{O}_{i, \alpha}^M)$,  
one has $\norm{\mathbf{h}_{\beta}} - \norm{\mathbf{h}_{r_a}} \geq\beta - r_a$. Let $\mathbf{a}\in\partial\mathcal{O}_{i, \alpha}^M$ be the location such that the lines $\mathcal{L}(\mathbf{a}, \mathbf{h}_{r_a})$ and $\mathcal{L}(\mathbf{h}_{r_a}, \mathbf{0})$ are perpendicular
, as shown in Fig. \ref{lemma_2_image_1}a. Note that, as the line $\mathcal{L}(\mathbf{h}_{r_a}, \mathbf{0})$ is tangent to the set $\mathcal{D}_{r_a}(\mathcal{O}_{i, \alpha}^M)$ at $\mathbf{h}_{r_a}$, one has $d(\mathbf{h}_{r_a}, \mathbf{a}) = r_a.$ Hence, \begin{align}
    \norm{\mathbf{h}_{r_a}} &= \sqrt{\norm{\mathbf{a}}^2 - d(\mathbf{h}_{r_a}, \mathbf{a})^2}
    \geq\sqrt{d(\mathbf{0}, \mathcal{O}_{i, \alpha}^M)^2 - r_a^2}.\label{proof_lemma_2_2}
\end{align}
After substituting \eqref{proof_lemma_2_2} in \eqref{proof_lemma_2_1}, one gets
\begin{equation}
    \norm{\mathbf{h}_{\beta}} - \norm{\mathbf{p}_{\beta}} \geq \beta - r_a + \sqrt{d(\mathbf{0}, \mathcal{O}_{i, \alpha}^M)^2 - r_a^2} - \norm{\mathbf{p}_{\beta}}.\nonumber
\end{equation}
Since $d(\mathbf{0}, \mathcal{O}_{i, \alpha}^M)\geq\beta$, $\norm{\mathbf{p}_{r_a}} - \norm{\mathbf{p}_{\beta}} = \beta - r_a. $ Hence, one has
\begin{gather}
    \norm{\mathbf{h}_{\beta}} - \norm{\mathbf{p}_{\beta}} \geq 2(\beta - r_a) + \sqrt{d(\mathbf{0}, \mathcal{O}_{i, \alpha}^M)^2 - r_a^2} - \norm{\mathbf{p}_{r_a}},\nonumber\\
    d(\mathbf{h}_{\beta}, \mathcal{B}_{2(\beta - r_a)}(\mathbf{p}_{\beta})) \geq \sqrt{d(\mathbf{0}, \mathcal{O}_{i, \alpha}^M)^2 - r_a^2} - \norm{\mathbf{p}_{r_a}}.\label{bound_3}
\end{gather}

\textbf{Case B2: }When $r_a < d(\mathbf{0},\mathcal{O}_{i, \alpha}^M) < \beta, $ as shown in Fig. \ref{lemma_2_image_2}b. According to \eqref{proof_lemma_2_1}, \eqref{proof_lemma_2_2} and the fact that $\norm{\mathbf{h}_{\beta}} - \norm{\mathbf{h}_{r_a}} \geq \beta - r_a$, one has
\begin{equation}
    \norm{\mathbf{h}_{\beta}} - \norm{\mathbf{p}_{\beta}} \geq \beta - r_a + \sqrt{d(\mathbf{0},\mathcal{O}_{i, \alpha}^M)^2 - r_a^2} - \norm{\mathbf{p}_{\beta}}\nonumber,
\end{equation}
Since $r_a < d(\mathbf{0}, \mathcal{O}_{i, \alpha}^M)< \beta$, $\norm{\mathbf{p}_{\beta}} + \norm{\mathbf{p}_{r_a}} = \beta - r_a. $ Hence, one has
\begin{gather}
    \norm{\mathbf{h}_{\beta}} - \norm{\mathbf{p}_{\beta}} \geq 2\norm{\mathbf{p}_{r_a}} + \sqrt{d(\mathbf{0}, \mathcal{O}_{i, \alpha}^M)^2 - r_a^2} - \norm{\mathbf{p}_{r_a}},\nonumber\\
    d(\mathbf{h}_{\beta}, \mathcal{B}_{2\norm{\mathbf{p}_{r_a}}}(\mathbf{p}_{\beta})) \geq \sqrt{d(\mathbf{0}, \mathcal{O}_{i, \alpha}^M)^2 - r_a^2} - \norm{\mathbf{p}_{r_a}}.\label{bound_4}
\end{gather}

Now, considering all the obstacles \textit{i.e.}, for all $i\in\mathbb{I}$, according to \eqref{bound_1}, \eqref{bound_2}, \eqref{bound_3} and \eqref{bound_4}, one has
\begin{equation}
    d(\mathbf{h}_{\beta}, \mathcal{B}_{2\delta}(\mathbf{p}_{\beta})) \geq \epsilon_h,
\end{equation}
where $\delta  = \min\left\{\beta - r_a, \underset{i\in\mathbb{I}}{\min}\{d(\mathbf{0}, \mathcal{O}_{i, \alpha}^M) - r_a\}\right\} = \min\{\beta-r_a, d(\mathbf{0}, \mathcal{O}_{\mathcal{W}}^M)-r_a\}$ and $\epsilon_h$ is evaluated as
\begin{align}
\epsilon_h &= \min\left\{2r_a, \sqrt{d(\mathbf{0}, \mathcal{O}_{\mathcal{W}}^M)^2 - r_a^2} - d(\mathbf{0}, \mathcal{D}_{r_a}(\mathcal{O}_{\mathcal{W}}^M))\right\}.\nonumber
\end{align}
It is clear that for $n > 0$, the function $f(k) = \sqrt{k^2 - n^2} - (k - n), d\in[n, \infty),$ is monotonically increasing and $\underset{k\rightarrow\infty}{\lim}f(k) = n$. Hence, according to Assumption \ref{assumption:connected_interior}, as $d(\mathbf{0}, \mathcal{O}_{\mathcal{W}}^M) > r_a > 0$, one has
\begin{equation}
    \sqrt{\left(d(\mathbf{0},\mathcal{O}_{\mathcal{W}}^M)^2 - r_a^2\right)} - d(\mathbf{0},\mathcal{D}_{r_a}(\mathcal{O}_{\mathcal{W}}^M)) < r_a,\nonumber
\end{equation}
irrespective of the locations of obstacles relative to the target location. As a result,
\begin{align}
\epsilon_h &= \min\{2r_a, \sqrt{d(\mathbf{0}, \mathcal{O}_{\mathcal{W}}^M)^2 - r_a^2} - d(\mathbf{0}, \mathcal{D}_{r_a}(\mathcal{O}_{\mathcal{W}}^M))\}\nonumber\\
&= \sqrt{\left(d(\mathbf{0},\mathcal{O}_{\mathcal{W}}^M)^2 - r_a^2\right)} - d(\mathbf{0},\mathcal{D}_{r_a}(\mathcal{O}_{\mathcal{W}}^M)). \nonumber
\end{align}

\subsubsection{Proof of claim 2}
In this proof, our goal is to show that the set $\mathcal{H}_{\mathbf{p}} = \mathcal{B}_{\delta}(\mathbf{p})\cap\mathcal{N}_{\gamma}(\mathcal{D}_{r_a}(\mathcal{O}_{i, \alpha}^M))$ belongs to the \textit{always exit} region $\mathcal{R}_a$, for any  $\mathbf{p}\in\mathcal{PJ}(\mathbf{0}, \partial\mathcal{D}_{\beta}(\mathcal{O}_{i, \alpha}^M))$ and the parameter $\delta = \min\{d(\mathbf{0}, \mathcal{O}_{\mathcal{W}}^M) - r_a, \beta - r_a\}$, where $\beta\in[r_a, r_a + \gamma_s]$. Based on the distance of the target location from the modified obstacle $\mathcal{O}_{i, \alpha}^M$, we consider two cases as follows:

\textbf{Case 1: }When $d(\mathbf{0}, \mathcal{O}_{i, \alpha}^M) \geq \beta$, as shown in Fig. \ref{lemma_2_image_1}a and Fig. \ref{lemma_2_image_2}a. Since $d(\mathbf{0}, \mathcal{O}_{i, \alpha}^M) \geq \beta$, for any $\mathbf{p}\in\mathcal{PJ}(\mathbf{0}, \mathcal{D}_{\beta}(\mathcal{O}_{i, \alpha}^M))$, one has $\mathcal{B}_{\norm{\mathbf{p}}+ \delta}(\mathbf{0})\subset\mathcal{B}_{d(\mathbf{0},\mathcal{D}_{r_a}(\mathcal{O}_{i, \alpha}^M))}(\mathbf{0})$, where $\delta = \min\{\beta - r_a, d(\mathbf{0}, \mathcal{O}_{i, \alpha}^M) - r_a\} = \beta - r_a.$ Hence, for $\mathbf{p}\in\mathcal{PJ}(\mathbf{0}, \mathcal{D}_{\beta}(\mathcal{O}_{i, \alpha}^M))$, one has $\mathcal{B}_{\delta}(\mathbf{p})\subset\mathcal{B}_{d(\mathbf{0},\mathcal{D}_{r_a}(\mathcal{O}_{i, \alpha}^M))}(\mathbf{0})$. Moreover, $\mathcal{B}_{d(\mathbf{0},\mathcal{D}_{r_a}(\mathcal{O}_{i, \alpha}^M))}(\mathbf{0})\cap\left(\mathcal{D}_{r_a}(\mathcal{O}_{i, \alpha}^M)\right)^{\circ} = \emptyset$. Therefore, according to \eqref{always_exit_region}, one can conclude that $\mathcal{H}_{\mathbf{p}}\subset\mathcal{R}_{a}$.
 
\textbf{Case 2: }When $r_a < d(\mathbf{0}, \mathcal{O}_{i, \alpha}^M) < \beta$, as shown in Fig. \ref{lemma_2_image_1}b and Fig. \ref{lemma_2_image_2}b. According to Lemmas \ref{lemma:reach_extends} and \ref{lemma:unique_projection}, $\mathbf{card}(\mathcal{PJ}(\mathbf{0}, \mathcal{D}_{r_a}(\mathcal{O}_{i, \alpha}^M))) = 1$, \textit{i.e.}, $\Pi(\mathbf{0}, \mathcal{D}_{r_a}(\mathcal{O}_{i, \alpha}^M))$ is unique. Since, as per Lemma \ref{lemma:reach_extends}, $\mathbf{reach}(\mathcal{D}_{r_a}(\mathcal{O}_{i, \alpha}^M)) \geq \alpha - r_a$, according to \cite[Lemma 4.5]{rataj2019curvature}, one has $(\mathcal{B}_{\alpha - r_a}(\mathbf{q}))^{\circ}\cap\mathcal{D}_{r_a}(\mathcal{O}_{i, \alpha}^M)= \emptyset,$ where $\mathbf{q} = \Pi(\mathbf{0}, \mathcal{D}_{r_a}(\mathcal{O}_{i, \alpha}^M)) + (\alpha - r_a)\frac{\mathbf{0}- \Pi(\mathbf{0}, \mathcal{D}_{r_a}(\mathcal{O}_{i, \alpha}^M))}{\norm{\mathbf{0} - \Pi(\mathbf{0}, \mathcal{D}_{r_a}(\mathcal{O}_{i, \alpha}^M))}}.$ Now, notice that, for $\mathbf{p}\in\partial\mathcal{D}_{\beta}(\mathcal{O}_{i, \alpha}^M)\cap\mathcal{L}_s(\mathbf{q}, \Pi(\mathbf{0}, \mathcal{D}_{r_a}(\mathcal{O}_{i, \alpha}^M)))$, one has $\mathcal{B}_{\delta}(\mathbf{p})\subset\mathcal{B}_{\alpha - r_a}(\mathbf{q})$, where $\delta = \min\{\beta - r_a, d(\mathbf{0}, \mathcal{O}_{i, \alpha}^M)-r_a\} = d(\mathbf{0}, \mathcal{O}_{i, \alpha}^M)-r_a < \alpha - r_a$. Hence, if one shows that $\mathbf{p}\in\mathcal{PJ}(\mathbf{0}, \partial\mathcal{D}_{\beta}(\mathcal{O}_{i, \alpha}^M))$ and $\mathbf{card}(\mathcal{PJ}(\mathbf{0}, \partial\mathcal{D}_{\beta}(\mathcal{O}_{i, \alpha}^M))) = 1$, then, according to \eqref{always_exit_region}, it is straightforward to notice that $\mathcal{H}_{\mathbf{p}}\subset\mathcal{R}_a$, where the set $\mathcal{H}_{\mathbf{p}}$ is defined in claims 1 and 2.

To this end, let us assume that $d(\mathbf{0}, \mathcal{D}_{r_a}(\mathcal{O}_{i, \alpha}^M)) = \eta$, where $r_a < \eta < \beta < \alpha.$ Notice that, $\mathbf{0}\in\mathcal{L}(\Pi(\mathbf{0}, \mathcal{D}_{r_a}(\mathcal{O}_{i, \alpha}^M)), \mathbf{q}).$ Since, $\mathbf{p}\in\mathcal{L}_s(\mathbf{q}, \Pi(\mathbf{0}, \mathcal{D}_{r_a}(\mathcal{O}_{i, \alpha}^M)))\cap\partial\mathcal{D}_{\beta}(\mathcal{O}_{i, \alpha}^M)$, one has $d(\mathbf{p}, \Pi(\mathbf{0}, \mathcal{D}_{r_a}(\mathcal{O}_{i, \alpha}^M))) = \beta - r_a.$ Therefore, it is clear that $d(\mathbf{0}, \mathbf{p}) = \beta - \eta.$ Moreover, according to Lemmas \ref{lemma:reach_property} and \ref{lemma:unique_projection}, the set $\mathcal{M} := (\mathcal{O}_{i, \alpha}^M\oplus\mathcal{B}_{\alpha}(\mathbf{0}))^c$ has the reach greater than or equal to $\alpha$, \textit{i.e.}, $\mathbf{reach}(\mathcal{M})\geq\alpha.$ Now, notice that, for $\beta\in[r_a, r_a + \gamma_s], \partial\mathcal{D}_{\beta}(\mathcal{O}_{i, \alpha}^M) = \partial\mathcal{D}_{\alpha-\beta}(\mathcal{M})$. Hence, $\mathbf{0}\in\partial\mathcal{D}_{\alpha - \eta}(\mathcal{M})$ and $\mathbf{p}\in\partial\mathcal{D}_{\alpha - \beta}(\mathcal{M})$. Then, according to Lemma \ref{lemma:reach_extends}, $\mathbf{card}(\mathcal{PJ}(\mathbf{0}, \partial\mathcal{D}_{\alpha - \beta}(\mathcal{M}))) = \mathbf{card}(\mathcal{PJ}(\mathbf{0}, \partial\mathcal{D}_{\beta}(\mathcal{O}_{i, \alpha}^M))) = 1$. Moreover, as $d(\mathbf{0}, \mathbf{p}) = \beta - \eta,$ one can conclude that $\mathbf{p}\in\mathcal{PJ}(\mathbf{0}, \partial\mathcal{D}_{\beta}(\mathcal{O}_{i, \alpha}^M))$.

\begin{figure}
    \centering
    \includegraphics[width = 0.7\linewidth]{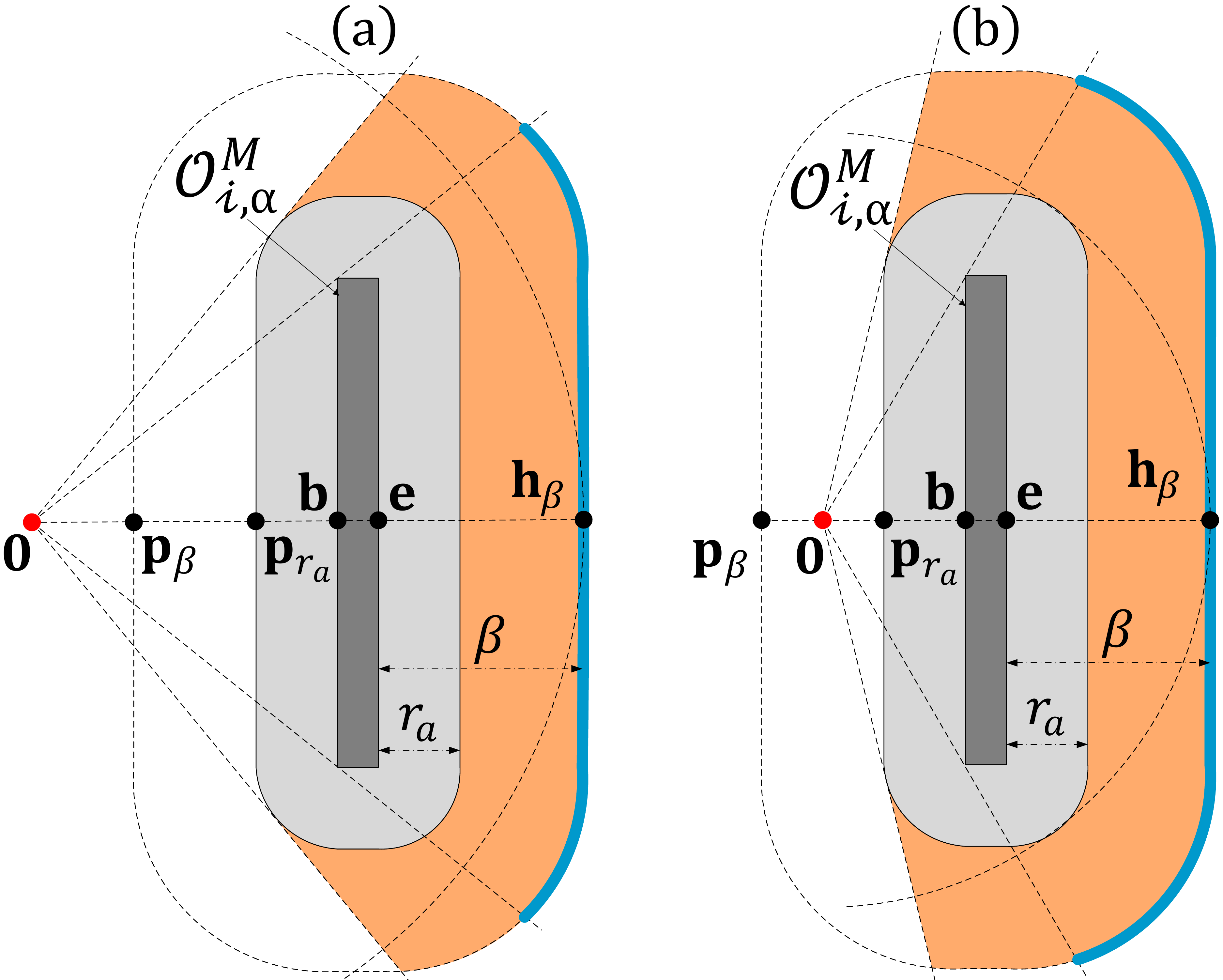}
    \caption{The diagrammatic representation for the Case A in the proof of Lemma \ref{lemma:bar_epsilon}. (a) $d(\mathbf{0}, \mathcal{O}_{i, \alpha}^M)\geq \beta$, (b) $r_a < d(\mathbf{0}, \mathcal{O}_{i, \alpha}^M) < \beta.$}
    \label{lemma_2_image_1}
\end{figure}

\begin{figure}
    \centering
    \includegraphics[width = 1\linewidth]{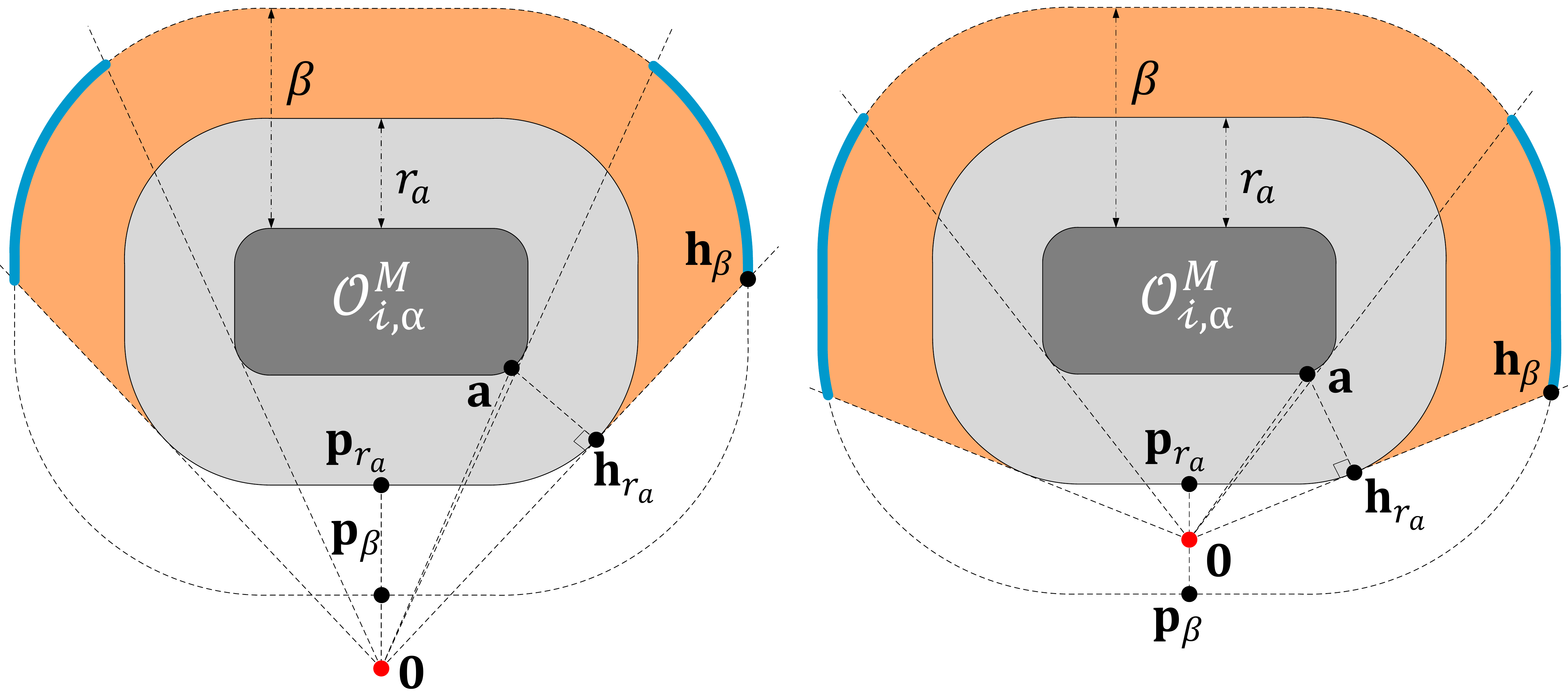}
    \caption{The diagrammatic representation for the Case B in the proof of Lemma \ref{lemma:bar_epsilon}. (a) $d(\mathbf{0}, \mathcal{O}_{i, \alpha}^M)\geq \beta$, (b) $r_a < d(\mathbf{0}, \mathcal{O}_{i, \alpha}^M) < \beta.$}
    \label{lemma_2_image_2}
\end{figure}

\subsection{Proof of Lemma \ref{lemma:forward_invariance}}
\label{proof:forward_invariance}
First we prove that the union of the flow and jump sets covers exactly the obstacle-free state space $\mathcal{K}$. For $m = 0$, according to \eqref{stabilization_flow_set} and \eqref{stabilization_jump_set}, by construction we have $\mathcal{F}_{0}^{\mathcal{W}}\cup\mathcal{J}_{0}^{\mathcal{W}} = \mathcal{V}_{r_a}.$ Similarly, for $m\in\{-1, 1\}$, according to \eqref{avoidance_flow_set} and \eqref{avoidance_jump_set}, by construction we have $\mathcal{F}_m^{\mathcal{W}}\cup\mathcal{J}_m^{\mathcal{W}} = \mathcal{V}_{r_a}$. Inspired by \cite[Appendix 11]{berkane2021arxiv}, the satisfaction of the following equation:
\begin{equation}
    \mathcal{F}_m^{\mathcal{W}}\cup\mathcal{J}_m^{\mathcal{W}}= \mathcal{V}_{r_a}, m\in\mathbb{M},
\end{equation}
along with \eqref{bothmodes_flowjumpset} and \eqref{bothmodes_flowjumpset1} implies $\mathcal{F}\cup\mathcal{J} = \mathcal{K}.$

Now, inspired by \cite[Appendix 1]{berkane2021arxiv}, for the hybrid closed-loop system \eqref{hybrid_closed_loop_system}, with data $\mathcal{H} = (\mathcal{F}, \mathbf{F}, \mathcal{J}, \mathbf{J}),$ define $\mathbf{S}_{\mathcal{H}}(\mathcal{K})$ as the set of all maximal solutions $\xi$ to $\mathcal{H}$ with $\xi(0, 0) \in\mathcal{K}$. Since $\mathcal{F}\cup\mathcal{J} = \mathcal{K}$, each $\xi\in\mathbf{S}_{\mathcal{H}}(\mathcal{K})$ has range $\text{rge } \xi\subset\mathcal{K}.$ Additionally, if every maximal solution $\xi\in\mathbf{S}_{\mathcal{H}}(\mathcal{K})$ is complete, then the set $\mathcal{K}$ will be forward invariant \cite[Definition 3.13]{sanfelice2021hybrid}. Since the hybrid closed-loop system \eqref{hybrid_closed_loop_system} satisfies hybrid basic conditions, as stated in Lemma \ref{nonconvex_hybrid_basic}, one can use \cite[Proposition 6.10]{goebel2012hybrid}, to verify the following viability condition:
\begin{equation}
    \mathbf{F}(\mathbf{x}, \mathbf{h}, m) \cap\mathbf{T}_{\mathcal{F}}(\mathbf{x}, \mathbf{h}, m) \ne \emptyset, \forall(\mathbf{x}, \mathbf{h}, m)\in\mathcal{F}\setminus\mathcal{J},\label{viability_condition}
\end{equation}
which will allow us to establish the completeness of the solution $\xi$ to the hybrid closed-loop system \eqref{hybrid_closed_loop_system}. In \eqref{viability_condition}, $\mathbf{T}_{\mathcal{F}}(\mathbf{x}, \mathbf{h}, m)$ represents the tangent cone to the set $\mathcal{F}$ at $(\mathbf{x}, \mathbf{h}, m)$, as defined in Section \ref{section:normal_and_tangent_cone}. 

Let $(\mathbf{x}, \mathbf{h}, m)\in\mathcal{F}\setminus\mathcal{J}$, which implies by \eqref{bothmodes_flowjumpset} and \eqref{bothmodes_flowjumpset1} that $(\mathbf{x}, \mathbf{h})\in(\mathcal{F}_m^{\mathcal{W}}\setminus\mathcal{J}_m^{\mathcal{W}})\times\mathcal{V}_{r_a}$ for some $m\in\mathbb{M}$. 
 For $\mathbf{x}\in(\mathcal{F}_m^{\mathcal{W}})^{\circ}\setminus\mathcal{J}_m^{\mathcal{W}}, m\in\mathbb{M},$ $\mathbf{T}_{\mathcal{F}}(\xi) = \mathbb{R}^2\times\mathcal{HP}\times\{0\},$ where the set $\mathcal{HP}$ is given by
\begin{equation}
    \mathcal{HP} = \begin{cases}
    \begin{matrix*}[l] \mathbb{R}^2, & \text{if }\mathbf{h}\in(\mathcal{V}_{r_a})^{\circ},\\
    \mathcal{P}_{\geq}(\mathbf{0}, (\mathbf{h} - \Pi(\mathbf{h}, \mathcal{O}_{\mathcal{W}}^M))),& \text{if }\mathbf{h}\in\partial\mathcal{V}_{r_a},\end{matrix*}
    \end{cases}\label{definition_hp}
\end{equation}
where, according to Lemma \ref{lemma:unique_projection}, for $\mathbf{h}\in\partial\mathcal{V}_{r_a}$, the projection $\Pi(\mathbf{h}, \mathcal{O}_{\mathcal{W}}^M)$ is unique. Since, according to \eqref{hybrid_closed_loop_system}, $\dot{\mathbf{h}} = \mathbf{0}$, $\dot{\mathbf{h}}\in\mathcal{HP}$ and \eqref{viability_condition} holds,

For $m = 0$, according to \eqref{stabilization_jump_set} and \eqref{stabilization_flow_set}, one has
\begin{equation}
    \partial\mathcal{F}_0^{\mathcal{W}}\setminus\mathcal{J}_0^{\mathcal{W}} \subset \partial\mathcal{D}_{r_a}(\mathcal{O}_{\mathcal{W}}^M)\cap\mathcal{R}_e,
\end{equation}
and according to Lemma \ref{lemma:unique_projection}, for $\mathbf{x}\in\partial\mathcal{F}_0^{\mathcal{W}}\setminus\mathcal{J}_0^{\mathcal{W}} $, the set $\mathcal{PJ}(\mathbf{x}, \mathcal{O}_{\mathcal{W}}^M)$ is a singleton \textit{i.e.,} the projection $\Pi(\mathbf{x}, \mathcal{O}_{\mathcal{W}}^M)$ is unique. Hence, $\forall\mathbf{x}\in\partial\mathcal{F}_0^{\mathcal{W}}\setminus\mathcal{J}_0^{\mathcal{W}}$,
\begin{equation}
    \mathbf{T}_{\mathcal{F}}(\mathbf{x}, \mathbf{h}, 0) = \mathcal{P}_{\geq}(\mathbf{0}, (\mathbf{x} - \Pi(\mathbf{x}, \mathcal{O}_{\mathcal{W}}^M)))\times\mathcal{HP}\times\{0\},
\end{equation}
where $\mathcal{HP}$ is defined in \eqref{definition_hp}. Also, according to \eqref{hybrid_control_input_1}, for $m =0$, $\mathbf{u}(\mathbf{x}, \mathbf{h}, 0) = -\kappa_s\mathbf{x}, \kappa_s>0$. According to \eqref{definition:exit_region}, $\mathbf{u}(\mathbf{x}, \mathbf{h}, 0)\in\mathcal{P}_{\geq}(\mathbf{0}, (\mathbf{x} - \Pi(\mathbf{x}, \mathcal{O}_{\mathcal{W}}^M)))$ and, according to \eqref{hybrid_closed_loop_system}, $\dot{\mathbf{h}} = \mathbf{0}\in\mathcal{HP}$. As a result, viability condition \eqref{viability_condition} holds for $m =0$.


For $m\in\{-1, 1\}$, according to \eqref{avoidance_flow_set} and \eqref{avoidance_jump_set} one has
\begin{equation}
    \partial\mathcal{F}_{m}^{\mathcal{W}}\setminus\mathcal{J}_m^{\mathcal{W}}\subset\partial\mathcal{D}_{r_a}(\mathcal{O}_{\mathcal{W}}^M),
\end{equation}
and according to Lemma \ref{lemma:unique_projection}, for $\mathbf{x}\in\partial\mathcal{F}_m^{\mathcal{W}}\setminus\mathcal{J}_m^{\mathcal{W}}, m\in\{-1, 1\}$, the set $\mathcal{PJ}(\mathbf{x}, \mathcal{O}_{\mathcal{W}}^M)$ is a singleton \textit{i.e.}, the projection $\Pi(\mathbf{x}, \mathcal{O}_{\mathcal{W}}^M)$ is unique and the circle $\mathcal{B}_{\norm{\mathbf{x} - \Pi(\mathbf{x}, \mathcal{O}_{\mathcal{W}}^M)}}(\mathbf{x})$ intersect with $\partial\mathcal{O}_{\mathcal{W}}^M$ only at the location $\Pi(\mathbf{x}, \mathcal{O}_{\mathcal{W}}^M).$ Hence, $\forall\mathbf{x}\in\partial\mathcal{F}_m^{\mathcal{W}}\setminus\mathcal{J}_m^{\mathcal{W}}, m\in\{-1, 1\},$
\begin{equation}
    \mathbf{T}_{\mathcal{F}}(\mathbf{x}, \mathbf{h}, 0) = \mathcal{P}_{\geq}(\mathbf{0}, (\mathbf{x} - \Pi(\mathbf{x}, \mathcal{O}_{\mathcal{W}}^M)))\times\mathcal{HP}\times\{0\},
\end{equation}
where $\mathcal{HP}$ is defined in \eqref{definition_hp}. Also for $m\in\{-1, 1\}$, according to \eqref{hybrid_control_input_1}, $\mathbf{u}(\mathbf{x}, \mathbf{h}, m) = \kappa_r\mathbf{v}(\mathbf{x}, m), \kappa_r > 0.$ Since, according to \eqref{definition:vxm}, $\mathbf{u}(\mathbf{x}, \mathbf{h}, m)^\intercal(\mathbf{x}- \Pi(\mathbf{x}, \mathcal{O}_{\mathcal{W}}^M)) = 0$ and, according to \eqref{hybrid_closed_loop_system}, $\dot{\mathbf{h}}= \mathbf{0}\in\mathcal{HP},$ the viability condition in \eqref{viability_condition} holds true for $m\in\{-1, 1\}.$

Hence, according to \cite[Proposition 6.10]{goebel2012hybrid}, since \eqref{viability_condition} holds for all $\xi\in\mathcal{F}\setminus\mathcal{J}$, there exists a nontrivial solution to $\mathcal{H}$ for each initial condition in $\mathcal{K}$. Finite escape time can only occur through flow. They can neither occur for $\mathbf{x}$ in the set $\mathcal{F}_{-1}^{\mathcal{W}}\cup\mathcal{F}_1^{\mathcal{W}}$, as this set is bounded as per definition \eqref{avoidance_flow_set}, nor for $\mathbf{x}$ in the set $\mathcal{F}_0^{\mathcal{W}}$ as this would make $\mathbf{x}^{\intercal}\mathbf{x}$ grow unbounded, and would contradict the fact that $\frac{d}{dt}(\mathbf{x}^\intercal\mathbf{x})\leq0$ in view of the definition of $\mathbf{u}(\mathbf{x}, \mathbf{h}, 0)$. Therefore, all maximal solutions do not have finite escape times. Furthermore, according to \eqref{hybrid_closed_loop_system}, $\mathbf{x}^+ = \mathbf{x},$ and from the definition of the update law in \eqref{updatelaw_part1} and \eqref{updatelaw_part2}, it follows immediately that $\mathbf{J}(\mathcal{J})\subset\mathcal{K}$. Hence, the solutions to the hybrid closed-loop system \eqref{hybrid_closed_loop_system} cannot leave $\mathcal{K}$ through jump and, as per \cite[Proposition 6.10]{goebel2012hybrid}, all maximal solutions are complete.

\subsection{Proof of Theorem \ref{theorem:global_stability}}
\label{proof:global_stability}

\textbf{Forward invariance and stability:} The forward invariance of the obstacle-free set $\mathcal{K}$, for the hybrid closed-loop system \eqref{hybrid_closed_loop_system}, is immediate from Lemma \ref{lemma:forward_invariance}. We next prove stability of $\mathcal{A}$ using \cite[Definition 7.1]{goebel2012hybrid}.


According to Lemma \ref{lemma:properties_of_modified_workspace}, the target location $\mathbf{0}\in(\mathcal{V}_{r_a})^{\circ}$. As a result, there exists $\mu_1 > 0$ such that
$\mathcal{B}_{\mu_1}(\mathbf{0})\cap(\mathcal{D}_{r_a}(\mathcal{O}_{\mathcal{W}}^M))^{\circ} = \emptyset.$ According to  \eqref{stabilization_jump_set}, there exists $\mu_2 > 0$ such that $\mathcal{B}_{\mu_2}(\mathbf{0})\cap\mathcal{J}_0^{\mathcal{W}}=\emptyset$. Additionally, as per \eqref{avoidance_jump_set}, there exists $\mu_3>0$ such that $\mathcal{B}_{\mu_3}(\mathbf{0})\subset\mathcal{J}_{m}^{\mathcal{W}}$ for $m\in\{-1, 1\}.$ We define a set $\mathcal{S}_{\mu}:= \mathcal{B}_{\mu}(\mathbf{0})\times\mathcal{V}_{r_a}\times\mathbb{M},$ where $\mu\in(0, \min\{\mu_1, \mu_2, \mu_3\}).$ 
As a result, for all initial conditions $\xi(0, 0)\in\mathcal{S}_{\mu}$, the control input, after at most one jump corresponds to the \textit{move-to-target} mode and the algorithm steers the state $\mathbf{x}$ towards the origin according to the control input vector $\mathbf{u}(\xi) = -\kappa_s\mathbf{x}, \kappa_s> 0$. Hence, for each $\mu\in(0, \min\{\mu_1, \mu_2, \mu_3\}),$ the set $\mathcal{S}_{\mu}=\mathcal{B}_{\mu}(\mathbf{0})\times\mathcal{V}_{r_a}\times\mathbb{M}$ is forward invariant for the hybrid closed-loop system \eqref{hybrid_closed_loop_system}.

Consequently, for every $\rho>0$, one can choose $\sigma\in(0, \min\{\mu_1, \mu_2, \mu_3, \rho\})$ such that for all initial conditions $\xi(0, 0)$ with $d(\xi(0, 0), \mathcal{A})\leq\sigma$, one has $d(\xi(t, j), \mathcal{A})\leq \rho$ for all $(t, j)\in\text{ dom }\xi,$ where $d(\xi, \mathcal{A})^2 = \underset{(\mathbf{0}, \bar{\mathbf{h}}, \bar{m})\in\mathcal{A}}{\inf}(\norm{\mathbf{x}}^2 + \norm{\mathbf{h} - \bar{\mathbf{h}}}^2 + \norm{m - \bar{m}}^2) = \norm{\mathbf{x}}^2.$ Additionally, the target set $\mathcal{A}$, as defined in \eqref{target_set}, is compact.
Hence, according to \cite[Definition 7.1]{goebel2012hybrid}, the target set $\mathcal{A}$ is stable for the hybrid closed-loop system \eqref{hybrid_closed_loop_system}

\textbf{Attractivity:} 
Since the target set $\mathcal{A}$ is obtained by only restricting the $\mathbf{x}$ component of the state space $\mathcal{K}$ to $\mathbf{0}$, one can prove claim 2 in Theorem \ref{theorem:global_stability} by showing that for all solutions $\xi$ to the hybrid closed-loop system \eqref{hybrid_closed_loop_system} initialized in the \textit{move-to-target} mode $(m(0, 0) = 0)$, the $\mathbf{x}$ component of the solution $\xi$ converges to the origin $\mathbf{0}.$
For the closed-loop system \eqref{hybrid_closed_loop_system}, let $\xi(t_0, j_0)\in\mathcal{F}_0$,  for some $(t_0, j_0)\in\text{dom }\xi$.
If $\xi(t, j)\notin\mathcal{J}_0,\forall(t, j)\succeq(t_0, j_0), $ then due to the stabilizing control of the form $\mathbf{u}(\mathbf{x}, \mathbf{h}, 0) = -\kappa_s\mathbf{x}, \kappa_s > 0,$ the state $\mathbf{x}$ will converge to the origin. Now, assume that there exists $(t_1, j_1)\succeq(t_0, j_0)$ such that $\xi(t_1, j_1)\in\mathcal{J}_0$, then the control law switches to the \textit{obstacle-avoidance} mode. According to \eqref{updatelaw_part1}, $\mathbf{h}(t_1, j_1 + 1) = \mathbf{x}(t_1, j_1)$ and $m(t_1, j_1 + 1) \in \{-1, 1\}$ \textit{i.e.,} the state $\mathbf{x}$ will evolve either in the clockwise direction $(m = 1)$ or in the counter-clockwise direction $(m = -1)$. To that end, we use the following lemma to show that if the solution $\xi$ to the hybrid closed-loop system \eqref{hybrid_closed_loop_system}, which has been initialized in the \textit{move-to-target} mode, evolves in the \textit{obstacle-avoidance} mode, then it will always enter in the jump set of that mode only through the set $\mathcal{ER}_m^{\mathbf{h}}\times\mathcal{V}_{r_a}\times\{-1, 1\}$, where, according to \eqref{partition_rm}, the location $\mathbf{x}$ is closer to the target location than the current \textit{hit point}.

\begin{lemma}
Let Assumptions \ref{assumption:connected_interior} and \ref{Assumption:reach} hold, and the parameter $\bar{\epsilon}$, used in \eqref{partition_rm}, is chosen as per Lemma \ref{lemma:bar_epsilon}, then for all solutions $\xi = (\mathbf{x}, \mathbf{h}, m)$ to the hybrid closed-loop system \eqref{hybrid_closed_loop_system} with $\xi(t, j -1)\in\mathcal{J}_0$, $\xi(t, j)\in\mathcal{F}_l$, for some $l\in\{-1, 1\}$ and $(t, j)\in\text{dom }\xi,$ there exists $(p, j)\in\text{dom }\xi$, $(p, j)\succ(t, j)$ such that following conditions hold true:
\begin{enumerate}
\item $\xi(p, j)\in\mathcal{ER}_l^{\mathbf{h}(t, j)}\times\mathcal{V}_{r_a}\times\{l\}\subset\mathcal{J}_l,$ 
\item $\xi(v, j)\in\mathcal{F}_l, \forall (v, j)\in([t, p)\times j)$.
\end{enumerate}
\label{lemma:always_enters_in_move_to_target}
\end{lemma}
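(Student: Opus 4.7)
The plan is to exploit that the obstacle-avoidance control preserves the distance to the closest connected modified obstacle, confining the trajectory to a compact level curve, and to combine this with Lemma \ref{lemma:bar_epsilon} to show that the curve unavoidably enters the prescribed switching region $\mathcal{ER}_l^{\mathbf{h}(t,j)}$. First, because $\mathbf{h}(t,j)\in\mathcal{J}_0^{\mathcal{W}}$, there is a unique connected modified obstacle $\mathcal{O}_{i,\alpha}^M$ realizing $\beta:=d(\mathbf{h}(t,j),\mathcal{O}_{\mathcal{W}}^M)\in[r_a,r_a+\gamma_s]$; uniqueness is guaranteed by Lemma \ref{lemma:pathwise_connected}, which makes distinct connected modified obstacles at least $2\alpha>2(r_a+\gamma_s)$ apart. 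Lemmas \ref{lemma:unique_projection} and \ref{lemma:reach_extends} then ensure that $\partial\mathcal{D}_{\beta}(\mathcal{O}_{i,\alpha}^M)$ is a smooth simple closed curve of finite length $L$ that encircles $\mathcal{O}_{i,\alpha}^M$.

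Along the flow in $\mathcal{F}_l$ the vector $\mathbf{v}(\mathbf{x},l)$ in \eqref{definition:vxm} is orthogonal to $\mathbf{x}-\Pi(\mathbf{x},\mathcal{O}_{\mathcal{W}}^M)$, which coincides, up to normalization, with $\nabla d(\mathbf{x},\mathcal{O}_{\mathcal{W}}^M)$ on $\mathcal{N}_\alpha(\mathcal{O}_{\mathcal{W}}^M)$ by Lemma \ref{lemma:unique_projection}. Hence $\tfrac{d}{dt}d(\mathbf{x}(\cdot,j),\mathcal{O}_{\mathcal{W}}^M)\equiv 0$ during flow, so $\mathbf{x}(\cdot,j)$ is confined to $\partial\mathcal{D}_\beta(\mathcal{O}_{i,\alpha}^M)$ and moves at constant speed $\kappa_r$. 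Consequently, unless it jumps earlier, the state traces the whole level curve in time at most $L/\kappa_r$ and, in particular, reaches every open arc on it.

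Fix $\mathbf{p}\in\mathcal{PJ}(\mathbf{0},\partial\mathcal{D}_\beta(\mathcal{O}_{i,\alpha}^M))$ and set $\delta_B:=\min\{\beta-r_a,d(\mathbf{0},\mathcal{O}_{\mathcal{W}}^M)-r_a\}$. By Lemma \ref{lemma:bar_epsilon}, $\mathcal{H}_\mathbf{p}:=\mathcal{B}_{\delta_B}(\mathbf{p})\cap\mathcal{N}_\gamma(\mathcal{D}_{r_a}(\mathcal{O}_{i,\alpha}^M))\subset\mathcal{ER}_l^{\mathbf{h}(t,j)}$ for either choice of $l\in\{-1,1\}$, because its proof actually places $\mathcal{H}_\mathbf{p}$ inside the always-exit region $\mathcal{R}_a\subset\overline{\mathcal{R}_l\cup\mathcal{R}_a}$, while claim~1 in that proof supplies the inequality $\|\mathbf{h}(t,j)\|-\|\mathbf{x}\|\geq\epsilon$ on $\mathcal{H}_\mathbf{p}$. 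The arc $\partial\mathcal{D}_\beta(\mathcal{O}_{i,\alpha}^M)\cap\mathcal{B}_{\delta_B}(\mathbf{p})$ is non-degenerate and open in the level curve, so the constant-speed motion in direction $l$ must enter it at some $(p,j)\succ(t,j)$, yielding $\xi(p,j)\in\mathcal{ER}_l^{\mathbf{h}(t,j)}\times\mathcal{V}_{r_a}\times\{l\}\subset\mathcal{J}_l$.

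It remains to rule out premature jumps. The component $\mathcal{W}\setminus(\mathcal{D}_{r_a+\gamma}(\mathcal{O}_{\mathcal{W}}^M))^\circ$ of $\mathcal{J}_l^{\mathcal{W}}$ is never triggered because $d(\mathbf{x},\mathcal{O}_{\mathcal{W}}^M)=\beta\leq r_a+\gamma_s<r_a+\gamma$ along the flow. For the component $\mathcal{B}_\delta(\mathbf{0})$, a triangle-inequality estimate on the closed level curve shows $\partial\mathcal{D}_\beta(\mathcal{O}_{i,\alpha}^M)\cap\mathcal{B}_\delta(\mathbf{0})\subset\mathcal{B}_{\delta_B}(\mathbf{p})$ once $\delta$ in \eqref{avoidance_jump_set} is chosen small enough relative to $\delta_B$, so any entry into $\mathcal{B}_\delta(\mathbf{0})$ coincides with entry into $\mathcal{ER}_l^{\mathbf{h}(t,j)}$. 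Combining these observations yields the desired $(p,j)$ with $\xi(v,j)\in\mathcal{F}_l$ for all $v\in[t,p)$. The main technical obstacle I anticipate is precisely this coordination between the free parameter $\delta$ in \eqref{avoidance_jump_set} and the radius $\delta_B$ supplied by Lemma \ref{lemma:bar_epsilon}; everything else follows from the fact that a unit-speed motion on a compact smooth closed curve visits every point in finite time.
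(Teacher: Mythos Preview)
Your proposal is correct and follows essentially the same approach as the paper: the paper packages your distance-preservation observation as an auxiliary ``Fact~3'' (showing $\mathbf{x}$ stays on $\partial\mathcal{D}_\beta(\mathcal{O}_{i,\alpha}^M)$ under $\dot{\mathbf{x}}=\kappa_r\mathbf{v}(\mathbf{x},m)$), then invokes compactness of the level curve together with Lemma~\ref{lemma:bar_epsilon} exactly as you do to conclude the trajectory enters $\mathcal{H}_{\mathbf{p}}\subset\mathcal{ER}_l^{\mathbf{h}(t,j)}$. Your explicit discussion of the $\mathcal{B}_\delta(\mathbf{0})$ component of $\mathcal{J}_l^{\mathcal{W}}$ is additional care that the paper's proof simply omits; the paper neither imposes nor needs the extra smallness condition on $\delta$ you anticipate, so you can drop that concern.
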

\begin{proof}
See Appendix \ref{proof:always_movetotarget}.
\end{proof}

According to Lemma \ref{lemma:always_enters_in_move_to_target}, $\exists(t_2, j_1 + 1)\succeq(t_1, j_1 + 1)$ such that $\xi(t_2, j_1 + 1)\in\mathcal{ER}_{m(t_1, j_1 + 1)}^{\mathbf{h}(t_1, j_1 +1)}\times\mathcal{V}_{r_a}\times\{-1, 1\}\subset\mathcal{J}_{m(t_1, j_1 + 1)}.$ Notice that, according to \eqref{avoidance_jump_set} and \eqref{partition_rm}, $\|{\mathbf{x}(t_2, j_1 + 1)}\|< \|{\mathbf{h}(t_1, j_1 + 1)}\|$, where, according to \eqref{updatelaw_part1}, $\mathbf{h}(t_1, j_1 + 1) = \mathbf{x}(t_1, j_1)$. In other words, according to Lemma \ref{lemma:always_enters_in_move_to_target}, the proposed navigation algorithm ensures that, at the instance where the the control switches from the \textit{obstacle-avoidance} mode to the \textit{move-to-target} mode, the state $\mathbf{x}$ is closer to the origin than from the last point where the control switched to the \textit{obstacle-avoidance} mode. 
Furthermore, when the control input corresponds to the \textit{move-to-target} mode, the algorithm steers the state $\mathbf{x}$ towards the origin under the influence of the stabilizing control vector $\mathbf{u}(\xi) = -\kappa_s\mathbf{x}, \kappa_s > 0.$ Consequently, given that the workspace $\mathcal{W}$ and the obstacles $\mathcal{O}_i, i\in\mathbb{I}\setminus\{0\}$, are  compact, it can be concluded that the solution $\xi$ will contain finite number of jumps and the state $\mathbf{x}$ will converge to the origin. 

\subsection{Proof of Lemma \ref{lemma:always_enters_in_move_to_target}}
\label{proof:always_movetotarget}

We consider a connected modified obstacle $\mathcal{O}_{i, \alpha}^M\subset\mathcal{O}_{\mathcal{W}}^M$, for some $i\in\mathbb{I}$, as stated in Lemma \ref{lemma:pathwise_connected}, where $\alpha$ is selected according to Lemma \ref{lemma:alpha_existence}. Let $\xi(t, j)\in\mathcal{F}_l^{\mathcal{W}}\cap\mathcal{N}_{\gamma}(\mathcal{D}_{r_a}(\mathcal{O}_{i, \alpha}^M))\times\mathcal{V}_{r_a}\times\{l\}$, for some $l\in\{-1, 1\}$. Since $m(t, j-1) = 0$ and $m(t, j) \in\{-1, 1\}$, according to \eqref{stabilization_jump_set}, $\mathbf{x}(t, j) = \mathbf{h}(t, j)\in\partial\mathcal{D}_{\beta}(\mathcal{O}_{i, \alpha}^M)\cap\mathcal{J}_0^{\mathcal{W}},$ where $\beta\in[r_a, r_a + \gamma_s]$. To proceed with the proof, we need the following fact:

\textbf{Fact 3:}
Under Assumptions \ref{assumption:connected_interior} and \ref{Assumption:reach}, consider the following flow-only system:
    \begin{equation}
        \mathbf{\dot{x}} = \kappa_r\mathbf{v}(\mathbf{x}, m), \mathbf{x}\in\mathcal{N}_{\gamma}(\mathcal{D}_{r_a}(\mathcal{O}_{i, \alpha}^M)),\label{flow_only_system}
    \end{equation}
    for some $m\in\{-1, 1\},$ where $\kappa_r > 0$ and $\mathbf{v}(\mathbf{x}, m)$ is given by \eqref{definition:vxm}. If $\mathbf{x}(t_0)\in\partial\mathcal{D}_{\beta_1}(\mathcal{O}_{i, \alpha}^M), $ $\beta_1\in[r_a, r_a + \gamma]$, for some $t_0\geq 0$, then $\mathbf{x}(t)\in\partial\mathcal{D}_{\beta_1}(\mathcal{O}_{i, \alpha}^M)$ for all $t\geq t_0.$\label{lemma:rotational}
    
    \begin{proof}
According to Lemma \ref{lemma:unique_projection}, for $\mathbf{x}\in\partial\mathcal{D}_{\beta_1}(\mathcal{O}_{i, \alpha}^M), \beta_1\in[r_a, r_a + \gamma]$, $\mathbf{card}(\mathcal{PJ}(\mathbf{x}, \mathcal{O}_{i, \alpha}^M)) = 1$. Hence, according to \cite[Lemma 4.5]{rataj2019curvature}, the vector $\mathbf{x} - \Pi(\mathbf{x}, \mathcal{O}_{i, \alpha}^M)$ is normal to the set $\partial\mathcal{D}_{\beta}(\mathcal{O}_{i, \alpha}^M)$ at $\mathbf{x}.$ As a result, the tangent cone to the set $\partial\mathcal{D}_{\beta_1}(\mathcal{O}_{i, \alpha}^M)$ at $\mathbf{x}\in\partial\mathcal{D}_{\beta_1}(\mathcal{O}_{i, \alpha}^M)$ is given by
\begin{equation}
    \mathbf{T}_{\partial\mathcal{D}_{\beta_1}(\mathcal{O}_{i ,\alpha}^M)}(\mathbf{x}) = \mathcal{P}(\mathbf{0}, \mathbf{x} - \Pi(\mathbf{x}, \mathcal{O}_{i, \alpha}^M)).
\end{equation}
Since $\mathbf{v}(\mathbf{x}, m)^{\intercal}(\mathbf{x} - \Pi(\mathbf{x}, \mathcal{O}_{i, \alpha}^M)) = 0$, $\mathbf{v}(\mathbf{x}(t), m)\in\mathbf{T}_{\partial\mathcal{D}_{\beta_1}(\mathcal{O}_{i, \alpha}^M)}(\mathbf{x}(t)),$ for all $t\geq t_0,$ which implies that the solution $\mathbf{x}(t)$ to the flow-only system \eqref{flow_only_system} belongs to the set $\partial\mathcal{D}_{\beta_1}(\mathcal{O}_{i, \alpha}^M)$ for all $t \geq t_0$.
    \end{proof}

According to Fact 3, if $\mathbf{x}$ belongs to the set $\mathcal{N}_{\gamma}(\mathcal{D}_{r_a}(\mathcal{O}_{i, \alpha}^M))$ at some time $t_0 \geq 0$, and is solely influenced by the obstacle-avoidance control vector $\kappa_r\mathbf{v}(\mathbf{x}, m)$, $\kappa_r > 0$, for some $m \in \{-1, 1\}$, then $\mathbf{x}(t)$ will continue to evolve within the set $\mathcal{N}_{\gamma}(\mathcal{D}_{r_a}(\mathcal{O}_{i, \alpha}^M))$ for all $t \geq t_0$, without changing its distance from the set $\mathcal{O}_{i, \alpha}^M$.

According to \eqref{hybrid_control_input_1}, for $l\in\{-1, 1\},$ $\mathbf{u}(\mathbf{x}, \mathbf{h}, l) = \kappa_r\mathbf{v}(\mathbf{x}, l), \kappa_r > 0$. Since $\xi(t, j)\in\mathcal{F}_l,$ for some $l\in\{-1, 1\}$, with $\mathbf{x}(t, j)\in\partial\mathcal{D}_{\beta}(\mathcal{O}_{i, \alpha}^M), \beta\in[r_a, r_a + \gamma_s],$ according to Fact 3, the state $\mathbf{x}$ will evolve along the curve $\partial\mathcal{D}_{\beta}(\mathcal{O}_{i, \alpha}^M)$. Since the workspace $\mathcal{W}$ and the obstacles $\mathcal{O}_i, i\in\mathbb{I}\setminus\{0\},$  $\exists(p, j)\in\text{dom } \xi$, $(p, j)\succ(t, j)$ such that at $(p, j)$, the $\mathbf{x}$ component of the solution $\xi$ will enter the set $\mathcal{H}_{\mathbf{p}} = \mathcal{B}_{\delta}(\mathbf{p})\cap\mathcal{N}_{\gamma}(\mathcal{D}_{r_a}(\mathcal{O}_{i, \alpha}^M))$, where $\delta = \min\{\beta - r_a, d(\mathbf{0}, \mathcal{O}_{\mathcal{W}}^M) - r_a\}$. Since the parameter $\bar{\epsilon}\in(0, \epsilon_h]$, where $\epsilon_h$ is defined in \eqref{epsilon_h}, according to Lemma \ref{lemma:bar_epsilon}, $\mathbf{x}(p, j)\in\mathcal{ER}_{l}^{\mathbf{h}(t, j)}$. Hence, according to \eqref{avoidance_jump_set} and \eqref{bothmodes_flowjumpset}, $\xi(p, t) \in\mathcal{ER}_l^{\mathbf{h}}\times\mathcal{V}_{r_a}\times\{l\}\subset\mathcal{J}_l,$ and condition 1 in Lemma \ref{lemma:always_enters_in_move_to_target} holds true.

Since $\xi(t, j)\in\mathcal{F}_l$, for some $l\in\{-1, 1\},$ with $\mathbf{x}(t, j)\in\partial\mathcal{D}_{\beta}(\mathcal{O}_{i, \alpha}^M), \beta\in[r_a, r_a + \gamma_s],$ according to Fact 3, the state $\mathbf{x}$ evolves in the region $\mathcal{N}_{\gamma}(\mathcal{D}_{r_a}(\mathcal{O}_{i, \alpha}^M))$, along the curve $\partial\mathcal{D}_{r_a + \beta}(\mathcal{O}_{i, \alpha}^M),$ until it enters the set $\mathcal{ER}_l^{\mathbf{h}}$, which guarantees that condition 2 in Lemma \ref{lemma:always_enters_in_move_to_target} holds true.

\subsection{Proof of Proposition \ref{lemma:assumption_equivalence}}
\label{proof:proposition}

Since the obstacles $\mathcal{O}_i, i\in\mathbb{I}\setminus\{0\}$, are convex and compact, and $d(\mathcal{O}_i, \mathcal{O}_j)>2r_a, \forall i, j\in\mathbb{I}, i\ne j$, the $r_a-$dilated versions of the obstacles do not intersect with each other \textit{i.e.}, $d(\mathcal{D}_{r_a}(\mathcal{O}_i), \mathcal{D}_{r_a}(\mathcal{O}_j)) > 0,\forall i, j\in\mathbb{I}, i\ne j$. This ensures the pathwise connectedness of the interior of the obstacle-free workspace $\mathcal{W}_{r_a}^{\circ}$. In addition, since $\mathbf{0}\in\mathcal{W}_{r_a}^{\circ},$ Assumption \ref{assumption:connected_interior} is satisfied. 

Define $\bar{\alpha}  = \underset{i, j\in\mathbb{I}, i\ne j}{\min}d(\mathcal{O}_i, \mathcal{O}_j)/2$. Since the pairwise distance between any two obstacles is greater than $2r_a$, it follows that $\bar{\alpha} > r_a$, and for any $\alpha\in(r_a, \bar{\alpha}]$, the $\alpha-$dilated versions of the obstacles do not intersect with each other \textit{i.e.}, $d(\mathcal{D}_{\alpha}(\mathcal{O}_i), \mathcal{D}_{\alpha}(\mathcal{O}_j)) > 0$. Therefore, $\mathcal{W}_{\alpha}$, defined using \eqref{y_free_workspace}, is not an empty set.
Then, it is straightforward to see that for every $\mathbf{x}\in\mathcal{D}_{\alpha}(\mathcal{O}_{\mathcal{W}})\setminus\mathcal{O}_{\mathcal{W}}^{\circ},$ there is a unique closest point on the boundary of the set $\mathcal{D}_{\alpha}(\mathcal{O}_{\mathcal{W}})$ from $\mathbf{x}$. 
Hence, $\mathbf{reach}(\mathcal{W}_{\alpha}) \geq \alpha$, where $\alpha\in(r_a, \bar{\alpha}]$. Finally, by virtue of \cite[Lemma 4.5]{rataj2019curvature}, Assumption \ref{Assumption:reach} is satisfied.

\end{appendix}
\bibliographystyle{IEEEtran}
\bibliography{reference}

\begin{thebibliography}{10}
\providecommand{\url}[1]{#1}
\csname url@samestyle\endcsname
\providecommand{\newblock}{\relax}
\providecommand{\bibinfo}[2]{#2}
\providecommand{\BIBentrySTDinterwordspacing}{\spaceskip=0pt\relax}
\providecommand{\BIBentryALTinterwordstretchfactor}{4}
\providecommand{\BIBentryALTinterwordspacing}{\spaceskip=\fontdimen2\font plus
\BIBentryALTinterwordstretchfactor\fontdimen3\font minus
  \fontdimen4\font\relax}
\providecommand{\BIBforeignlanguage}[2]{{%
\expandafter\ifx\csname l@#1\endcsname\relax
\typeout{** WARNING: IEEEtran.bst: No hyphenation pattern has been}%
\typeout{** loaded for the language `#1'. Using the pattern for}%
\typeout{** the default language instead.}%
\else
\language=\csname l@#1\endcsname
\fi
#2}}
\providecommand{\BIBdecl}{\relax}
\BIBdecl

\bibitem{khatib1986real}
O.~Khatib, ``Real-time obstacle avoidance for manipulators and mobile robots,''
  \emph{The International Journal of Robotics Research}, pp. 90--98, 1986.

\bibitem{koditschek1990robot}
D.~E. Koditschek and E.~Rimon, ``Robot navigation functions on manifolds with
  boundary,'' \emph{Advances in applied mathematics}, vol.~11, no.~4, pp.
  412--442, 1990.

\bibitem{koditschek1992exact}
D.~Koditschek and E.~Rimon, ``Exact robot navigation using artificial potential
  functions,'' \emph{IEEE Trans. Robot. Automat}, vol.~8, pp. 501--518, 1992.

\bibitem{verginis2021adaptive}
C.~K. Verginis and D.~V. Dimarogonas, ``Adaptive robot navigation with
  collision avoidance subject to 2nd-order uncertain dynamics,''
  \emph{Automatica}, vol. 123, p. 109303, 2021.

\bibitem{li2018navigation}
C.~Li and H.~G. Tanner, ``Navigation functions with time-varying destination
  manifolds in star worlds,'' \emph{IEEE Transactions on Robotics}, vol.~35,
  no.~1, pp. 35--48, 2018.

\bibitem{paternain2017navigation}
S.~Paternain, D.~E. Koditschek, and A.~Ribeiro, ``Navigation functions for
  convex potentials in a space with convex obstacles,'' \emph{IEEE Transactions
  on Automatic Control}, vol.~63, no.~9, pp. 2944--2959, 2017.

\bibitem{loizou2011navigation}
S.~G. Loizou, ``The navigation transformation: Point worlds, time abstractions
  and towards tuning-free navigation,'' in \emph{19th Mediterranean Conference
  on Control \& Automation (MED)}, 2011, pp. 303--308.

\bibitem{loizou2021correct}
S.~G. Loizou and E.~D. Rimon, ``Correct-by-construction navigation functions
  with application to sensor based robot navigation,'' \emph{arXiv preprint
  arXiv:2103.04445}, 2021.

\bibitem{berkane2021navigation}
S.~Berkane, ``Navigation in unknown environments using safety velocity cones,''
  in \emph{American Control Conference}, 2021, pp. 2336--2341.

\bibitem{blanchini2008set}
F.~Blanchini, S.~Miani \emph{et~al.}, \emph{Set-theoretic methods in
  control}.\hskip 1em plus 0.5em minus 0.4em\relax Springer, 2008, vol.~78.

\bibitem{reis2020control}
M.~F. Reis, A.~P. Aguiar, and P.~Tabuada, ``Control barrier function-based
  quadratic programs introduce undesirable asymptotically stable equilibria,''
  \emph{IEEE Control Systems Letters}, vol.~5, no.~2, pp. 731--736, 2020.

\bibitem{arslan2019sensor}
O.~Arslan and D.~E. Koditschek, ``Sensor-based reactive navigation in unknown
  convex sphere worlds,'' \emph{The International Journal of Robotics
  Research}, vol.~38, no. 2-3, pp. 196--223, 2019.

\bibitem{vasilopoulos2018reactive}
V.~Vasilopoulos and D.~E. Koditschek, ``Reactive navigation in partially known
  non-convex environments,'' in \emph{International Workshop on the Algorithmic
  Foundations of Robotics}, 2018, pp. 406--421.

\bibitem{vrohidis2018prescribed}
C.~Vrohidis, P.~Vlantis, C.~P. Bechlioulis, and K.~J. Kyriakopoulos,
  ``Prescribed time scale robot navigation,'' \emph{IEEE Robotics and
  Automation Letters}, vol.~3, no.~2, pp. 1191--1198, 2018.

\bibitem{sanfelice2006robust}
R.~G. Sanfelice, M.~J. Messina, S.~E. Tuna, and A.~R. Teel, ``Robust hybrid
  controllers for continuous-time systems with applications to obstacle
  avoidance and regulation to disconnected set of points,'' in \emph{American
  Control Conference}, 2006, pp. 3352--3357.

\bibitem{poveda2021robust}
J.~I. Poveda, M.~Benosman, A.~R. Teel, and R.~G. Sanfelice, ``Robust
  coordinated hybrid source seeking with obstacle avoidance in multivehicle
  autonomous systems,'' \emph{IEEE Transactions on Automatic Control}, vol.~67,
  no.~2, pp. 706--721, 2022.

\bibitem{braun2020explicit}
P.~Braun, C.~M. Kellett, and L.~Zaccarian, ``Explicit construction of
  stabilizing robust avoidance controllers for linear systems with drift,''
  \emph{IEEE Transactions on Automatic Control}, vol.~66, no.~2, pp. 595--610,
  2020.

\bibitem{matveev2011method}
A.~S. Matveev, H.~Teimoori, and A.~V. Savkin, ``A method for guidance and
  control of an autonomous vehicle in problems of border patrolling and
  obstacle avoidance,'' \emph{Automatica}, vol.~47, no.~3, pp. 515--524, 2011.

\bibitem{berkane2021obstacle}
S.~Berkane, A.~Bisoffi, and D.~V. Dimarogonas, ``Obstacle avoidance via hybrid
  feedback,'' \emph{IEEE Transactions on Automatic Control}, vol.~67, no.~1,
  pp. 512--519, 2021.

\bibitem{lumelsky1986dynamic}
V.~Lumelsky and A.~Stepanov, ``Dynamic path planning for a mobile automaton
  with limited information on the environment,'' \emph{IEEE transactions on
  Automatic control}, vol.~31, no.~11, pp. 1058--1063, 1986.

\bibitem{loizou2003closed}
S.~G. Loizou, H.~G. Tanner, V.~Kumar, and K.~J. Kyriakopoulos, ``Closed loop
  motion planning and control for mobile robots in uncertain environments,'' in
  \emph{IEEE Conference on Decision and Control}, vol.~3, 2003, pp. 2926--2931.

\bibitem{sawant2021hybrid}
M.~Sawant, S.~Berkane, I.~Polusin, and A.~Tayebi, ``Hybrid feedback for
  autonomous navigation in planar environments with convex obstacles,''
  \emph{IEEE Transaction on Automatic Control}, vol.~68, no.~12, pp.
  7342--7357, 2023.

\bibitem{cdc_sawant}
M.~Sawant, A.~Tayebi, and I.~Polushin, ``Autonomous navigation in environments
  with arbitrary non-convex obstacles,'' in \emph{IEEE Conference on Decision
  and Control}, 2022, pp. 7208--7213.

\bibitem{willard2012general}
S.~Willard, \emph{General topology}.\hskip 1em plus 0.5em minus 0.4em\relax
  Courier Corporation, 2012.

\bibitem{haralick1987image}
R.~M. Haralick, S.~R. Sternberg, and X.~Zhuang, ``Image analysis using
  mathematical morphology,'' \emph{IEEE transactions on pattern analysis and
  machine intelligence}, no.~4, pp. 532--550, 1987.

\bibitem{rataj2019curvature}
J.~Rataj and M.~Z{\"a}hle, \emph{Curvature measures of singular sets}.\hskip
  1em plus 0.5em minus 0.4em\relax Springer, 2019.

\bibitem{boyd2004convex}
S.~Boyd, S.~P. Boyd, and L.~Vandenberghe, \emph{Convex optimization}.\hskip 1em
  plus 0.5em minus 0.4em\relax Cambridge university press, 2004.

\bibitem{goebel2012hybrid}
R.~Goebel, R.~G. Sanfelice, and A.~R. Teel, \emph{Hybrid dynamical
  systems}.\hskip 1em plus 0.5em minus 0.4em\relax Princeton University Press,
  2012.

\bibitem{serra1986introduction}
J.~Serra, ``Introduction to mathematical morphology,'' \emph{Computer vision,
  graphics, and image processing}, vol.~35, no.~3, pp. 283--305, 1986.

\bibitem{thale200850}
C.~Th{\"a}le, ``50 years sets with positive reach--a survey.'' \emph{Surveys in
  Mathematics and its Applications}, vol.~3, pp. 123--165, 2008.

\bibitem{kamon1998tangentbug}
I.~Kamon, E.~Rimon, and E.~Rivlin, ``Tangentbug: A range-sensor-based
  navigation algorithm,'' \emph{The International Journal of Robotics
  Research}, vol.~17, no.~9, pp. 934--953, 1998.

\bibitem{berkane2021arxiv}
S.~Berkane, A.~Bisoffi, and D.~V. Dimarogonas, ``Obstacle avoidance via hybrid
  feedback,'' \emph{arXiv preprint arXiv:2102.02883}, 2021.

\bibitem{sanfelice2021hybrid}
R.~G. Sanfelice, \emph{Hybrid feedback control}.\hskip 1em plus 0.5em minus
  0.4em\relax Princeton University Press, 2021.

\end{thebibliography}

\end{document}